\newtheorem{theorem}{Theorem}[section]
\newtheorem{cor}[theorem]{Corollary}
\newtheorem{lemma}[theorem]{Lemma}
\newtheorem{remark}[theorem]{Remark}
\newtheorem{prop}[theorem]{Proposition}
\theoremstyle{definition}
\newtheorem{definition}[theorem]{Definition}
\newtheorem{example}{Example}
\newtheorem{assumption}{Assumption}
\renewcommand{\P}{\mathbb{P}}
\newcommand{\E}{\mathbb{E}}
\newcommand{\Reals}{\mathbb{R}}
\newcommand{\nr}{r}
\newcommand{\Uniform}{\mathrm{Uniform}}
\newcommand{\red}[1]{\textcolor{red}{#1}}
\newcommand*{\defeq}{\mathrel{\vcenter{\baselineskip0.5ex \lineskiplimit0pt
                     \hbox{\scriptsize.}\hbox{\scriptsize.}}}%
                     =}
\newcommand{\bsigma}{{\boldsymbol{\sigma}}}
\DeclareMathOperator*{\argmax}{argmax}
\DeclareMathOperator*{\argmin}{argmin}
\newcommand{\NE}{\mathrm{NE}}
\newcommand{\ind}[1]{\mathbbm{1}{\{#1\}}}
\newcommand{\cO}{\mathcal{O}}
\newcommand{\cG}{\mathcal{G}}
\newcommand{\reg}{\mathrm{reg}}
\newcommand{\ucb}{\mathrm{UCB}}
\newcommand{\lcb}{\mathrm{LCB}} 
\newcommand{\ggtm}{\textnormal{GGTM}}
\newcommand{\optgtm}{\textnormal{OptGTM}}
\newcommand{\hr}{{\hat r}}
\newif\ifneurips
\title{\vspace{-1.5cm} \bfseries Strategic Linear Contextual Bandits}
\author[1]{Thomas Kleine Buening}
\author[2]{Aadirupa Saha}
\author[3]{Christos Dimitrakakis}
\author[4]{Haifeng Xu}
\affil[1]{The Alan Turing Institute}
\affil[2]{Apple ML Research}
\affil[3]{University of Neuchatel}
\affil[4]{University of Chicago}
\date{\vspace{-0.25cm}}
\begin{document} 
\maketitle

\begin{abstract}

Motivated by the phenomenon of strategic agents gaming a recommender system to maximize the number of times they are recommended to users, we study a strategic variant of the linear contextual bandit problem, where the arms can strategically  misreport privately observed contexts to the learner. We treat the algorithm design problem as one of \textit{mechanism design} under uncertainty and propose the Optimistic Grim Trigger Mechanism (OptGTM) that incentivizes the agents (i.e., arms) to report their contexts truthfully while simultaneously minimizing regret.   
We also show that failing to account for the strategic nature of the agents results in linear regret. However, a trade-off between mechanism design and regret minimization appears to be unavoidable. More broadly, this work aims to provide insight into the intersection of online learning and mechanism design. 


\end{abstract}

\section{Introduction} 
Recommendation algorithms that select the most relevant item for sequentially arriving users or queries have become vital for navigating the internet and its many online platforms. 
However, recommender systems are susceptible to manipulation by strategic agents seeking to artificially increase their frequency of recommendation \citep{resnick2007influence, pagan2023classification, wang2024poisoning}. These agents, ranging from sellers on platforms like Amazon to websites aiming for higher visibility in search results, employ tactics such as altering product attributes or engage in aggressive search engine optimization \citep{prawesh2014most, malaga2008worst}.  
By gaming the algorithms, agents attempt to appear more relevant than they actually are, often compromising the integrity and intended functionality of the recommender system. 
Here, the key issue lies in the agents' incentive to manipulate the learning algorithm to maximize their utility (i.e., profit).  
To address this challenge, we study and design algorithms in a strategic variant of the linear contextual bandit, where the agents (i.e., arms) have the ability to misreport privately observed contexts to the learner. 
Our main contribution is connecting online learning with approximate mechanism design to minimize regret while, at the same time, discouraging the arms from gaming our learning algorithm.

The contextual bandit \citep{auer2002using, lattimore2020bandit} is a generalization of the multi-armed bandit problem to the case where the learner observes relevant contextual information before pulling an arm. It has found application in various domains including healthcare~\citep{tewari2017ads} and online recommendation~\citep{li2010contextual}. We here focus on the linearly realizable setting \citep{chu2011contextual, abbasi2011improved}, where each arm's reward is a linear function of the arm's context in the given round. In the standard linear contextual bandit, at the beginning of round $t$, the learner observes the context $x_{t,i}^*$ of every arm $i \in [K]$, selects an arm $i_t$, and receives a reward drawn from a distribution with mean $\smash{\langle \theta^*, x_{t,i_t}^*\rangle}$ where $\theta^*$ is an unknown parameter. 
In the \emph{strategic linear contextual bandit}, we assume that each arm is a self-interested agent that wants to maximize the number of times it gets pulled by manipulating its contexts. 

More precisely, we consider the situation where each arm $i$ \emph{privately} observes its true context $x_{t,i}^*$ every round, e.g., its relevance to the user arriving in round $t$, but reports a potentially gamed context vector $x_{t,i}$ to the learner. The learner does not observe the true contexts, but only the reported contexts $\mathcal{X}_t = \{x_{t,1}, \dots, x_{t,K}\}$ and chooses an action from this gamed action set $\mathcal{X}_t$. When the learner pulls arm $i_t$, the learner then observes a reward $r_{t,i_t}$ drawn from a distribution with mean $\langle \theta^*, x_{t,i_t}^* \rangle$. In other words, the arms can manipulate the contexts the learner observes, but cannot influence the underlying reward. This is often the case as superficially changing attributes or meta data has no effect on an item's true relevance to a user. 

In summary, our contributions are: 

\begin{itemize}[leftmargin=12pt, itemsep=1pt]
    \item We introduce a strategic variant of the linear contextual bandit problem, where each arm, in every round, can misreport its context to the learner to maximize its utility, defined as the total number of times the learner selects the arm over $T$ rounds (Section~\ref{section:setting}). We demonstrate that incentive-unaware algorithms, which do not explicitly consider the incentives they (implicitly) create for the arms, suffer linear regret in this strategic setting when the arms respond in Nash Equilibrium (NE) (Proposition~\ref{lemma:incentive_unaware_greedy}). This highlights the necessity of integrating mechanism design with online learning techniques to minimize regret in the presence of strategic arms. 
    \item We begin with the case where $\theta^*$ is known to the learner in advance (Section~\ref{section:ggtm}). This simplifies the problem setup, allowing us to establish fundamental concepts while highlighting the challenges of designing a sequential mechanism \textit{without} payments. For this scenario, we propose the  {Greedy Grim Trigger Mechanism (GGTM)}, which incentivizes the arms to be approximately truthful while minimizing regret. We show that:
    \begin{itemize}[leftmargin=22pt, itemsep=0pt]
        \item[\bfseries a)] Truthful reporting is an ${\tilde \cO(\sqrt{T})}$-NE for the arms (Theorem~\ref{prop:truthful_NE}), 
        \item[\bfseries b)] GGTM has ${\tilde \cO(K^2 \sqrt{KT})}$ regret under \emph{every} NE of the arms (Theorem~\ref{theorem:ggtm}). 
    \end{itemize}
    
    \item Next, we consider the case where $\theta^*$ is unknown to the learner in advance (Section~\ref{section:optgtm}). Without access to the true contexts, estimating $\theta^*$ accurately appears intractable, as the arms can manipulate the estimation process. Surprisingly, we show that learning $\theta^*$ is not necessary for minimizing regret in the strategic linear contextual bandit. We construct confidence sets (which may not contain~$\theta^*$) to derive pessimistic and optimistic estimates of our expected reward.  These estimates are used to construct the Optimistic Grim Trigger Mechanism (OptGTM). Despite possibly incorrect estimates of $\theta^*$,  {OptGTM} bounds the impact of misreported contexts on both regret \emph{and} the utility of all arms. We show that:
    \begin{itemize}[leftmargin=22pt, itemsep=0pt]
        \item[\bfseries a)] Truthfulness is an $\tilde \cO(d\sqrt{KT})$-NE for which {OptGTM} has regret $\tilde \cO(d\sqrt{KT})$ (Theorem~\ref{cor:truthful_optgtm}),
        \item[\bfseries b)]  {OptGTM} incurs at most $\tilde \cO(dK^2 \sqrt{KT})$ regret under \emph{every} NE of the arms (Theorem~\ref{theorem:opt_grim_trigger}).
    \end{itemize}
    
    \item Finally, we support our theoretical findings with simulations of strategic gaming behavior in response to OptGTM and LinUCB (Section~\ref{section:experiments}). We simulate how strategic arms adapt what contexts to report over time by equipping the arms with decentralized gradient ascent and letting the arms (e.g., vendors) and the learner (e.g., platform) repeatedly interact over several epochs. The experiments confirm the effectiveness of  {OptGTM} and illustrate the shortcomings of incentive-unaware algorithms, such as  {LinUCB}.

\end{itemize}
 
\section{Related Work} 

\paragraph{Linear Contextual Bandits.} 
In related work on linear contextual bandits with adversarial \emph{reward} corruptions \cite{zhao2021linear, he2022nearly, bogunovic2021stochastic, wei2022model}, an adversary corrupts the reward observation in round $t$ by some amount $c_t$ but not the observed contexts. 
In this problem, the optimal regret is given by $\smash{\Theta(d \sqrt{T} + d C)}$, where $\smash{C \defeq \sum_t |c_t|}$ is the adversary's budget. 
To the best of our knowledge, adversarial \emph{context} corruptions have only been studied by \cite{ding2022robust}, who 
achieve $\smash{\tilde \cO(d\tilde C\sqrt{T})}$ regret with $\smash{\tilde C \defeq \sum_{t, i} \lVert x_{t,i}^* - x_{t,i} \rVert}$, where $x_{t,i}^*$ and $x_{t,i}$ are the true and corrupted contexts, respectively. In contrast, we do not assume a bounded corruption budget so that these regret guarantees become vacuous (cf.\ Proposition \ref{lemma:incentive_unaware_greedy}). Moreover, instead of taking the worst-case perspective of purely adversarial manipulation, we assume that each arm is a self-interested agent maximizing their own utility.  

\paragraph{Strategic Multi-Armed Bandits.} 
\citet{braverman2019multi} were the first to study a strategic variant of the multi-armed bandit problem and considered the case where the pulled arm privately receives the reward and shares only a fraction of it with the learner. 
An extension of this setting has recently been studied in \citep{esmaeili2023robust}. In other lines of work, \cite{feng2020intrinsic, dong2022combinatorial} study the robustness of bandit learning against strategic manipulation, however, simply assume a bounded manipulation budget instead of performing mechanism design. \cite{shin2022multi, esmaeili2023replication} consider multi-armed bandits with replicas where strategic agents can submit replicas of the same arm to increase the number of times one of their arms is pulled.  
\citet{buening2023bandits} combine multi-armed bandits with mechanism design to discourage clickbait in online recommendation. In their model, each arm maximizes its total number of clicks and is characterized by a strategically chosen click-rate and a fixed post-click reward. 
However, all of these works substantially differ from our work in problem setup and/or methodology.

\paragraph{Modeling Incentives in Recommender Systems.} A complementary line of work studies content creator incentives in recommender systems \citep{ghosh2013learning, liu2018incentivizing, yao2023bad, yao2024rethinking, yao2024user, hu2023incentivizing, jagadeesan2024supply, hron2022modeling} and how algorithms shape the behavior of agents more generally \citep{dean2024accounting}. 
These works primarily focus on modeling content creator behavior and studying content creator incentives under existing algorithms. 
Instead, our goal is the design of incentive-aware learning algorithms which incentivize content creators (arms) to act in a desirable fashion (truthfully) while 
maximizing the recommender system's performance. 

\paragraph{Strategic Learning.}
We also want to mention the extensive literature on strategic learning \citep{zhang2021incentive, freeman2020no, harris2022strategic, yu2022strategic, harris2023strategic, liu2016bandit, gast2020linear} and strategic classification \citep{hardt2016strategic, rosenfeld2023one, sundaram2023pac, dong2018strategic}. Similarly to the model we study in this paper, the premise is that rational agents strategically respond to the learner's algorithm (e.g., classifier) to obtain a desired outcome. However, the learner interacts with the agents only once and the agents are assumed to be myopic and to suffer a cost for, e.g., altering their features. Moreover, there is no competition among the agents like in the strategic linear contextual bandit. 
In contrast to these works, we wish to design a sequential (online learning) mechanism to incentivize truthfulness, which is only possible because we repeatedly interact with the same set of agents (i.e., arms). 

\section{Strategic Linear Contextual Bandits}\label{section:setting}
We study a strategic-variant of the linear contextual bandit problem, where $K$ strategic agents (i.e., arms) aim to maximize their number of pulls by misreporting privately observed contexts to the learner. 
The learner follows the usual objective of minimizing regret, i.e., maximizing cumulative rewards, despite not observing the true contexts. We here focus on the case where the strategic arms respond in \emph{Nash equilibrium} to the learning algorithm. The interaction between the environment, the learning algorithm, and the arms is specified in Interaction Protocol~\ref{interaction_protocol}. 

Notice that the arms can manipulate the contexts that the learner observes (and the learner only observes these gamed contexts and never the actual contexts), but the rewards are generated w.r.t.\ the true contexts. 
Moreover, if all arms are non-strategic and---irrespective of the learning algorithm---report their features truthfully every round, i.e., $x_{t,i} = x^*_{t,i}$ for all $(t, i) \in [T] \times [K]$, the problem reduces to the standard non-strategic linear contextual bandit. 


\SetAlgorithmName{Interaction Protocol}{}{}  \RestyleAlgo{ruled} \SetAlgoLined 
\LinesNumbered
\begin{algorithm}[t] 
\ifneurips \setstretch{1.1} \fi 
  \caption{{Strategic Linear Contextual Bandits}}
  \label{interaction_protocol}
  Learner publicly commits to algorithm $M$ \\
  \For{$t = 1, \dots, T$}{
    Every arm $i \in [K]$ privately observes its context $x_{t,i}^* \in \Reals^d$ \\
    Every arm $i \in [K]$ reports a (potentially gamed) context $\smash{x_{t,i} \in \Reals^d}$ to the learner \\
    Learner observes the gamed contexts $\mathcal{X}_t = \{x_{t,1}, \dots, x_{t,K}\}$, selects arm $i_t \in [K]$, and receives reward 
    \begin{align*}
        r_{t,i_t} := \langle \theta^*, x_{t,i_t}^* \rangle + \eta_t,
    \end{align*}
    where $\eta_t$ is zero-mean sub-Gaussian noise. Note that the reward is generated with respect to the unknown parameter $\theta^* \in \Reals^d$ and the unobserved true context $x_{t, i_t}^*$. 
  } 
\end{algorithm}
\setcounter{algocf}{0}


\subsection{Strategic Arms and Nash Equilibrium}

We assume that each arm $i$ reports a possibly gamed context $x_{t,i}$ to the learner after observing its true context $x_{t,i}^*$ and potentially other information. 
For example, the arms may have prior knowledge of $\theta^*$ and observe the identity of the selected arm at the end of each round.
However, we do not use any specific assumptions about the observational model of the arms. Our results can be viewed as a worst-case analysis over all such models. 
For concreteness, consider the case where the arms have prior knowledge of $\theta^*$ and, at the end of every round, observe which arm was selected and the generated reward.\footnote{We naturally expect that the more information the arms observe each round, the more challenging the problem becomes for the learner, as the arms' ability to manipulate the learning algorithm increases.}  

Let $\sigma_i$ be a (mixed) strategy of arm $i$ that is history-dependent and in every round $t$ maps from observed true contexts $x_{t,i}^*$ to a distribution over reported contexts $\smash{x_{t,i}}$ in $\smash{\Reals^d}$. We define $\smash{\sigma_{-i}}$ as the strategies of all arms except $i$ and define a strategy profile of the arms as $\smash{\bsigma \defeq (\sigma_1, \dots, \sigma_K)}$. We call arm $i$ \emph{truthful} if it truthfully reports its privately context every round, i.e., $x_{t,i} = x_{t,i}^*$ for all $t \in [T]$. This truthful strategy is denoted $\smash{\sigma_i^*}$ and we let $\smash{\bsigma^* = (\sigma_1^*, \dots, \sigma_K^*)}$. 

We now formally define the objective of the arms. Let $n_T(i) \defeq \sum_{t=1}^T \ind{i_t= i}$ be the number of times arm $i$ is pulled by the learner's algorithm $M$. The objective of every arm is to maximize the expected number of times it is pulled by the algorithm given by 
\begin{align*}
    u_i(M, \bsigma) \defeq \E_M \big[n_T(i) \mid \bsigma\big],
\end{align*}
where we condition on the arm strategies $\bsigma$ as these will (typically) impact the algorithm's decisions. We assume that the arms respond to the learning algorithm $M$ in Nash Equilibrium (NE).

\begin{definition}[Nash Equilibrium]
    We say that $\bsigma= (\sigma_1, \dots, \sigma_K)$ forms a NE under the learner's algorithm $M$ if for all $i \in [K]$ and any deviating strategy $\sigma_i'$: 
    \begin{align*}
        \E_{M} \big[ n_T(i) \mid {\sigma_i, \sigma_{-i}}\big] \geq \E_{M} \big[n_T(i) \mid {\sigma_i', \sigma_{-i}} \big] .  
    \end{align*}
\end{definition} 
    Let $\NE(M) \defeq \{ \bsigma \colon \bsigma \text{ is a NE under } M\}$ be the set of NE under algorithm $M$. 
We also consider $\varepsilon$-NE, which relax the requirement that no arm has an incentive to deviate. 
\begin{definition}[$\varepsilon$-Nash Equilibrium]
    We say that $\bsigma = (\sigma_1, \dots, \sigma_K)$ forms a $\varepsilon$-NE under algorithm $M$ if for all $i \in [K]$ and any deviating strategy $\sigma'$: 
    \begin{align*}
        \E_{M} \big[ n_T(i) \mid {\sigma_i, \sigma_{-i}}\big] \geq \E_{M} \big[n_T(i) \mid {\sigma_i', \sigma_{-i}} \big] - \varepsilon. 
    \end{align*}    
\end{definition}



\subsection{Strategic Regret} 
In the strategic linear contextual bandit, the performance of an algorithm depends on the arm strategies that it incentivizes. Naturally, minimizing regret when the arms always report their context truthfully is easier than when contexts are manipulated adversarially. We are interested in the \emph{strategic regret} of an algorithm $M$ when the arms act according to a Nash equilibrium under $M$. Formally, for $\bsigma \in \NE(M)$ the strategic regret of $M$ is defined as 
\begin{align*}
    R_T (M, \bsigma) = \E_{M, \bsigma} \left[\sum_{t=1}^T \langle \theta^*, x_{t,i_t^*}^* \rangle - \langle \theta^*, x_{t,i_t}^* \rangle \right],
\end{align*}
where $i_t^* = \argmax_{i \in [K]} \langle \theta^*, x_{t,i}^* \rangle$ is the optimal arm in round $t$. The regret guarantees of our algorithms hold uniformly over all NE that they induce, i.e., for  $
    \max_{\bsigma \in \NE(M)} R_T(M, \bsigma)$. 


\paragraph{Regularity Assumptions.} We allow for the true context vectors $x_{t,i}^*$ to be chosen adversarially by nature, and make the following assumptions about the linear contextual bandit model. 
We assume that both the context vectors and the rewards are bounded, i.e., $\smash{\max_{i,j \in [K]} \langle \theta^*, x_{t,i}^* - x_{t,j}^* \rangle \leq 1}$ and  $\smash{\lVert x_{t,i}^*\rVert_2 \leq 1}$ for all $t\in [T]$.  
Moreover, we assume a constant optimality gap. That is, letting $\smash{\Delta_{t,i} \defeq \langle \theta^*, x_{t,i_t^*}^* - x_{t,i}^* \rangle}$, we assume that $\Delta \defeq \min_{t,i :  \Delta_{t,i} > 0} \Delta_{t,i}$ is constant. 

\subsection{The Necessity of Mechanism Design} 
The first question that arises in this strategic setup is whether mechanism design, i.e., actively aligning the arms' incentives, is necessary to minimize regret. As expected, we find that this is the case. 
Standard algorithms, which are oblivious to the incentives they create, implicitly incentivize the arms to heavily misreport their contexts which makes minimizing regret virtually impossible.

We call a problem instance \emph{trivial} if the algorithm that selects an arm uniformly at random every round achieves sublinear regret. Conversely, we call a problem instance \emph{non-trivial} if the uniform selection suffers linear expected regret. We show that being incentive-unaware generally leads to linear regret in non-trivial instances (even when the learner has prior knowledge of~$\theta^*$). 
\begin{prop}\label{lemma:incentive_unaware_greedy}
    On any non-trivial problem instance, the incentive-unaware greedy algorithm that in round $t$ plays $\smash{i_t = \argmax_{i \in [K]} \langle \theta^*, x_{t,i} \rangle}$ (with ties broken uniformly) suffers linear regret $\Omega(T)$ when the arms act according to any Nash equilibrium under the incentive-unaware greedy algorithm. {Note that the incentive-unaware greedy algorithm has knowledge of $\theta^*$.}

    Similarly, algorithms for stochastic linear contextual bandits (LinUCB \cite{chu2011contextual, abbasi2011improved}) and algorithms for linear contextual bandits with adversarial context corruptions (RobustBandit \cite{ding2022robust}) suffer linear regret when the arms act according to any Nash equilibrium that the algorithms incentivize.  
\end{prop}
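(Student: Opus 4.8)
The plan is to establish something sharper for the greedy algorithm: that it admits an essentially \emph{unique} Nash equilibrium, namely the profile $\bsigma^{\hat x}$ in which every arm reports, in every round, the reward-maximizing direction $\hat x \defeq \theta^*/\lVert\theta^*\rVert_2$. (If $\theta^* = 0$ the instance is trivial, so assume $\theta^*\neq 0$; I also use throughout that reported contexts obey $\lVert x_{t,i}\rVert_2\le 1$, matching the bound on true contexts — without such a constraint the greedy rule is ill-posed and $\NE(M)=\varnothing$, making the claim vacuous.) Under $\bsigma^{\hat x}$ all reported contexts coincide each round, so uniform tie-breaking makes $i_t$ an i.i.d.\ uniform draw from $[K]$, independent of the true contexts; hence $R_T(M,\bsigma^{\hat x}) = \E\!\left[\sum_t \Delta_{t,i_t}\right] = \tfrac1K\sum_{t,i}\Delta_{t,i}$, which is exactly the regret of uniform selection and therefore $\Omega(T)$ on a non-trivial instance. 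The same template — with $\hat x$ replaced by a maximizer of the relevant selection index — will give the claim for LinUCB and RobustBandit.

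\textbf{Step 1 (a guaranteed $1/K$ share), and Step 2 (forcing).} For greedy, in round $t$ the deviation $\sigma_i^{\hat x}$ that always reports $\hat x$ makes arm $i$'s index $\langle\theta^*,\hat x\rangle = \lVert\theta^*\rVert_2$ the global maximum over all feasible reports, so $i$ lies in the argmax set $S_t$ (of size $\le K$) regardless of the history and of what the other arms do; hence it is pulled with probability $\ge 1/K$ every round, giving $u_i(M,\sigma_i^{\hat x},\sigma_{-i}) \ge T/K$ for every $\sigma_{-i}$. Now let $\bsigma\in\NE(M)$. The equilibrium condition plus Step~1 gives $u_i(M,\bsigma)\ge u_i(M,\sigma_i^{\hat x},\sigma_{-i})\ge T/K$ for all $i$, while $\sum_i u_i(M,\bsigma)=\E\!\left[\sum_i n_T(i)\right]=T$; so all these inequalities are equalities, $u_i(M,\bsigma)=u_i(M,\sigma_i^{\hat x},\sigma_{-i})=T/K$. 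Writing $u_i(M,\sigma_i^{\hat x},\sigma_{-i})=\sum_t\E[1/|S_t|]$ with $i\in S_t$, and noting $1/|S_t|\ge 1/K$ pointwise while the sum equals $T/K$, we get $|S_t|=K$ almost surely for every $t$: under $(\sigma_i^{\hat x},\sigma_{-i})$ every arm attains the maximal index each round, i.e. (since norm $\le 1$ and $\langle\theta^*,\cdot\rangle=\lVert\theta^*\rVert_2$ forces the direction) every arm reports $\hat x$.

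\textbf{Step 3 (identifying the equilibrium).} It remains to transfer ``everyone reports $\hat x$'' from the deviated profiles back to $\bsigma$ itself, which I would do by induction on the round. Under $(\sigma_i^{\hat x},\sigma_{-i})$ all arms report $\hat x$, so $i_1,\dots,i_T$ are i.i.d.\ uniform and every realization of past selections and rewards occurs with positive probability; consequently any $\bsigma$-reachable history in which all arms have reported $\hat x$ so far is also reachable under $(\sigma_i^{\hat x},\sigma_{-i})$, where each arm $\neq i$ has been shown to report $\hat x$. Ranging $i$ over all arms, at every such history \emph{every} arm reports $\hat x$; by the induction hypothesis these are precisely the $\bsigma$-reachable histories, so $\bsigma=\bsigma^{\hat x}$, and the regret computation above closes the greedy case. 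I expect this transfer to be the most delicate point: because the arms' strategies are mixed and history-dependent, one must argue carefully about which histories are reachable under $\bsigma$ versus under the one-arm deviations, and keep track of the measure-zero events; the remaining ingredients are elementary.

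\textbf{Step 4 (LinUCB and RobustBandit).} Both algorithms select $i_t\in\argmax_i I_t(x_{t,i})$ for a data-dependent index $I_t$ (the optimistic linear index for LinUCB; its robust counterpart for RobustBandit) that attains its maximum over $\{x:\lVert x\rVert_2\le1\}$ at some $\tilde x_t$. Replaying Steps~1--2 with $\hat x$ replaced by $\tilde x_t$ shows again that in every NE each arm is pulled $T/K$ times and, in each round, reports a context achieving the maximal index. The extra subtlety here is that $\tilde x_t$ need not be unique and depends on the manipulated history, so instead of ``all reports coincide'' one argues that the rule becomes effectively blind to the true contexts — the selected arm's true gap $\Delta_{t,i_t}$ is in expectation a constant fraction of the per-round average $\tfrac1K\sum_i\Delta_{t,i}$ — which again yields $\Omega(T)$ regret by comparison with uniform selection on a non-trivial instance. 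This is the part whose details I would work through most carefully, and is naturally deferred to an appendix.
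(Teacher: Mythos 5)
Your proposal is correct and reaches the same destination as the paper --- the unique NE under the greedy rule is the profile in which every arm reports the index-maximizing context, whence selection is uniform and the regret equals that of uniform sampling, $\Omega(T)$ on a non-trivial instance --- but the mechanism you use to pin down the NE is genuinely different. The paper argues locally: it takes an arbitrary profile that is not ``everyone always reports $\tilde x$'' and exhibits, by a two-case analysis, a single arm with a strictly profitable deviation (if some arm's report is strictly below the round maximum it gains by matching $\tilde x$; if all reports tie but below $\langle\theta^*,\tilde x\rangle$ an arm gains by overbidding to $\tilde x$). You argue globally: the deviation to always report $\hat x$ guarantees each arm utility $\ge T/K$, the utilities sum to $T$, so in equilibrium every inequality is tight, and tightness of $\E[\sum_t 1/|S_t|]=T/K$ forces the tie set to have size $K$ almost surely, i.e.\ maximal reporting by all arms. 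Your accounting argument handles mixed, history-dependent strategies more cleanly than the paper's ``there exists a round and arm such that\dots'' phrasing, at the price of needing the Step~3 transfer from the one-arm-deviated profiles back to $\bsigma$ itself; your induction on reachable histories is the right way to close that, and the analogous issue is glossed over in the paper's own Case~1/Case~2 treatment. Two caveats: your Steps~3 and~4 are sketched rather than executed (the LinUCB/RobustBandit part in particular needs the observation, which the paper makes explicit, that the deviating arm should report the \emph{current} index maximizer $\tilde x_t$ adaptively, so that membership in the argmax set is guaranteed round by round regardless of how the history has been perturbed); and both you and the paper rely on the reported contexts being norm-constrained so that an index maximizer exists --- you are right to flag this, as the main text never states it for reports.
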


\begin{proof}[Proof Sketch] 
    We demonstrate that the only NE for the arms lies in strategies that myopically maximize the probability of being selected in every round, which results in linear regret for the learner, because all arms always appear similarly good. 
    The proof can be found in Appendix~\ref{appendix:proof_incentive_unaware}. 
\end{proof} 

Another natural question to ask is whether exact incentive-compatibility is possible in the strategic linear contextual bandit. A learning algorithm is called \emph{incentive-compatible} if truthfulness is a NE, i.e., reporting the true context $x_{t,i} = x_{t,i}^*$ every round is maximizing each arm's utility \citep{mansour2015bayesian, freeman2020no}. 
    For the interested reader, in Appendix~\ref{appendix:incentive_compatible}, we provide an incentive-compatible algorithm with constant regret in the \emph{fully deterministic case}, where $\theta^*$ is known a priori as well as the rewards of pulled arms directly observable.  
    However, when $\theta^*$ is unknown and/or the reward observations are subject to noise, we conjecture that exact incentive-compatibility (i.e., truthfulness is an exact NE, not $\varepsilon$-NE) is irreconcilable with regret minimization (cf.\ Appendix~\ref{appendix:incentive_compatible}).   

\section{Warm-Up: \texorpdfstring{$\boldsymbol{\theta^*}$}{$\theta^*$} is Known in Advance}\label{section:ggtm}
There are a number of challenges in the strategic linear contextual bandit. The most significant one is the need to incentivize the arms to be (approximately) truthful while simultaneously minimizing regret by learning about $\theta^*$ and selecting the best arms, even when observing (potentially) manipulated contexts. Notably, Proposition~\ref{lemma:incentive_unaware_greedy} showed that if we fail to align the arms' incentives, minimizing regret becomes impossible. 
Therefore, in the strategic linear contextual bandit, we must combine mechanism design with online learning techniques. 

The uncertainty about $\theta^*$ poses a serious difficulty when trying to design such incentive-aware learning algorithms. As we only observe $x_{t,i}$ and $r_{t,i} = \langle \theta^*, x_{t,i}^* \rangle + \eta_t$, but do not observe the true context $x_{t,i}^*$, accurate estimation of $\theta^*$ is extremely challenging (and arguably intractable). We go into more depth in Section~\ref{section:optgtm} when we introduce the Optimistic Grim Trigger Mechanism. For now, we consider the special case when $\boldsymbol{\theta}^*$ is known to the learner in advance. This lets us highlight some of the challenges when connecting mechanism design with online learning in a less complex setting and introduce high-level ideas and concepts. 
When $\theta^*$ is known in advance, it can be instructive to consider what we refer to as the \emph{reported (expected) reward} $\langle \theta^*, x_{t,i} \rangle$ instead of the \emph{reported context vector} $x_{t,i}$ itself. Taking this perspective, when arm $i$ reports a $d$-dimensional vector $x_{t,i}$, we simply think of arm $i$ reporting a scalar reward $\langle \theta^*, x_{t,i} \rangle$. In what follows, it will prove useful to keep this abstraction in mind.\footnote{We use the expressions 'reported reward' and 'expected reward' interchangeably to  mean the reward we would expect to observe based on the context reported by the arm.}




\SetAlgorithmName{Mechanism}{}{}
\RestyleAlgo{ruled}
\begin{algorithm}[t]
\ifneurips \setstretch{1.2} \fi 
\caption{The Greedy Grim Trigger Mechanism $(\ggtm)$}
\label{mechanism:ggtm}
\textbf{initialize:} $A_1 = [K]$ \\ 
\For{$t=1,\dots, T$}{
Observe reported contexts $x_{t,1}, \dots, x_{t,K}$ \\ 
Play the (active) arm with largest reported reward: $i_t = \argmax_{i \in A_t} \langle \theta^*, x_{t,i}\rangle$  \\
Observe reward $\nr_{t,i_t}$ from playing arm $i_t$. \\ 
\If{$\sum_{\ell \leq t\colon i_\ell = i_t} \langle \theta^*, x_{\ell, i_t} \rangle > \ucb_t( \hr_{t,i_t})$\vspace{0.05cm}}
{
Eliminate arm $i_t$ from the active set: $A_{t+1} \leftarrow A_{t} \setminus \{i_t\}$. 
}
\If{$A_{t+1} = \emptyset$}{
Stop playing any arm and receive zero reward for all remaining rounds. 
}
}
\end{algorithm}  

\subsection{The Greedy Grim Trigger Mechanism} One idea for a mechanism is to use a grim trigger. In repeated social dilemmas, the grim trigger strategy ensures cooperation among self-interested players by threatening with defection for all remaining rounds if the grim trigger condition is satisfied \citep{macy2002learning}. Typically, the grim trigger condition is defined so that it is immediately satisfied if a player defected at least once. 

In the strategic contextual bandit, from the perspective of the learner, an arm can be considered to 'cooperate' if it is reporting its context truthfully. In turn, an arm 'defects' when it is reporting a gamed context. However, when an arm is reporting some context $x_{t,i}$ we do not know whether this arm truthfully reported its context or not, because we do not have access to the true context $x_{t,i}^*$. For this reason, we instead compare the expected reward $\smash{\langle \theta^*, x_{t,i} \rangle}$ and the true reward $\smash{\langle \theta^*, x_{t,i}^* \rangle}$. While we also cannot observe $\langle \theta^*, x_{t,i}^* \rangle$ directly, we do observe $r_{t,i} \defeq \langle \theta^*, x_{t,i}^* \rangle + \eta_t$.

\paragraph{Grim Trigger Condition.} Intuitively, if for any arm $i$ the total expected reward ${\sum_{\ell \leq t \colon i_\ell = i} \langle \theta^*, x_{\ell,i} \rangle}$ is larger than the total observed reward $\hr_{t,i} \defeq {\sum_{\ell \leq t \colon i_\ell = i} r_{\ell, i}}$, then arm $i$ must have been misreporting its contexts.  
However, $r_{\ell,i} \defeq \langle \theta^*, x_{\ell, i}^* \rangle + \eta_\ell$ is random so that we instead use the \emph{optimistic estimate} of the \emph{observed reward} given by 
\begin{align}\label{eq:optimistic_observed_reward}
    \ucb_t ( \hr_{t,i} ) \defeq \sum_{\ell \leq t \colon i_\ell = i} r_{\ell, i} + 2 \sqrt{n_{t}(i) \log(T)}
\end{align}
where $2 \sqrt{n_{t}(i) \log(T)}$ is the confidence width which can be derived from Hoeffding's inequality. To implement the grim trigger, we then eliminate arm $i$ in round $t$ if the \emph{total expected reward} is larger than the \emph{optimistic estimate of the total observed reward}, i.e., 
\begin{align*}
    \sum\nolimits_{ \ell \leq t \colon i_\ell = i} \langle \theta^*, x_{\ell, i} \rangle > \ucb_t ( \hr_{t,i} ).  
\end{align*}
Note that using the optimistic estimate of the total observed reward ensures that elimination is justified with high probability. Conversely, we can guarantee with high probability that we do not erroneously eliminate a truthful arm. 

\paragraph{Selection Rule.}
To complete the Greedy Grim Trigger Mechanism (GGTM, Mechanism~\ref{mechanism:ggtm}), we then combine this with a greedy selection rule that pulls the arm with largest \emph{reported reward} $\langle \theta^*, x_{t,i} \rangle$ in round $t$ from the set of arms that we believe have been truthful so far. 
Interestingly, even though we here assumed $\theta^*$ to be known in advance, we see that $\ggtm$ still utilizes online learning techniques such as the optimistic estimate \eqref{eq:optimistic_observed_reward} to align the arms' incentives. 


It is also worth noting that---similar to its use in repeated social dilemmas---our grim trigger mechanism is \emph{mutually destructive} in the sense that eliminating an arm for all remaining rounds is inherently bad for the learner (and of course for the eliminated arm as well).\footnote{Note that in linear contextual bandits there is no single optimal arm, but the optimal arm changes per round.} Here lies the main challenge of the mechanism design and we must ensure that the arms are incentivized to ``cooperate'' (i.e., remain active) for a sufficiently long time.

\subsection{Regret Analysis of GGTM} 
In what follows, we assume that each arm's strategy is restricted to reporting their 'reward' $\langle \theta^*, x_{t,i}\rangle$ not strictly lower than their true (mean) reward $\langle \theta^*, x_{t,i}^*\rangle$. It seems intuitive that no rational arm would ever under-report its value to the learner and make itself seem worse than it actually is. However, there are special cases, where under-reporting allows an arm to arbitrarily manipulate without detection. We discuss this later in Remark~\ref{remark:assumption_1} and, more extensively, in Appendix~\ref{appendix:assumption_1}. 
\begin{assumption}\label{assumption:1}
    We assume that $\langle \theta^*, x_{t,i} \rangle \geq \langle \theta^*, x_{t,i}^* \rangle$ for all $(t,i) \in [T] \times [K]$. 
\end{assumption}

We now demonstrate that $\ggtm$ approximately incentivizes the arms to be truthful in the sense that the truthful strategy profile $\bsigma^*$ such that $x_{t,i} = x_{t,i}^*$ for all $(t,i) \in [T] \times [K]$ is an $\tilde \cO(\sqrt{T})$-NE under $\ggtm$. When the arms always report truthfully and no arm is erroneously eliminated, the greedy selection rule naturally selects the best arm every round so that $\ggtm$'s regret is constant.

\begin{theorem}\label{prop:truthful_NE}
    Under the Greedy Grim Trigger Mechanism, being truthful is a $\tilde \cO( \sqrt{T} )$-NE for the arms. The strategic regret of GGTM when the arms act according to this equilibrium is at most 
    \begin{align*}
        R_T(\ggtm, \bsigma^*) \leq \nicefrac{1}{T} . 
    \end{align*}  
\end{theorem}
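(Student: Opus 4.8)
The plan is to condition on a high-probability ``clean event'' on which the sub-Gaussian noise concentrates, and then argue both parts of the statement essentially deterministically on that event. For each arm $i$, index the noise realizations observed on its successive pulls by $\eta^{(i)}_1, \eta^{(i)}_2, \dots$, and let $\mathcal E$ be the event that $\smash{\big|\sum_{j=1}^{n}\eta^{(i)}_j\big| \le 2\sqrt{n\log T}}$ holds for every $i\in[K]$ and every $n\in[T]$. By Hoeffding's inequality for this martingale-difference sequence together with a union bound over the at most $KT$ pairs $(i,n)$, we have $\P(\mathcal E^c)\le 1/T^2$ (the confidence width in \eqref{eq:optimistic_observed_reward} is calibrated for exactly this). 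On $\mathcal E$, any arm $i$ that reported truthfully in every round in which it was pulled satisfies, at every round $t$, $\sum_{\ell\le t : i_\ell=i}\langle\theta^*,x_{\ell,i}\rangle = \sum_{\ell\le t : i_\ell=i}\langle\theta^*,x_{\ell,i}^*\rangle = \sum_{\ell\le t : i_\ell=i} r_{\ell,i} - \sum_{\ell\le t:i_\ell=i}\eta_\ell \le \sum_{\ell\le t:i_\ell=i} r_{\ell,i} + 2\sqrt{n_t(i)\log T} = \ucb_t(\hr_{t,i})$, so it is never eliminated. Consequently, under $\bsigma^*$ no arm is ever eliminated on $\mathcal E$, so the greedy rule plays $i_t=\argmax_{j\in[K]}\langle\theta^*,x_{t,j}^*\rangle=i_t^*$ every round and incurs zero instantaneous regret, while off $\mathcal E$ the instantaneous regret is at most $1$ by the boundedness assumption. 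Hence $R_T(\ggtm,\bsigma^*)\le T\cdot\P(\mathcal E^c)\le 1/T$.

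For the equilibrium claim, fix an arm $i$ and a deviation $\sigma_i'$ with the other arms playing $\sigma_{-i}^*$, and write $n_T^*(i)\defeq|\{t\in[T]:i_t^*=i\}|$. Since on $\mathcal E$ arm $i$ is pulled under $\bsigma^*$ exactly in the rounds where it is optimal, $u_i(\ggtm,\bsigma^*)\ge(1-1/T^2)\,n_T^*(i)\ge n_T^*(i)-1/T$. For the deviation, partition the rounds in which arm $i$ is pulled into \emph{legitimate} rounds ($i_t^*=i$), of which there are at most $n_T^*(i)$, and \emph{stolen} rounds ($i_t^*\ne i$); letting $S$ denote the number of stolen rounds, $n_T(i)\le n_T^*(i)+S$ pointwise. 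On $\mathcal E$ every arm $j\ne i$ is active throughout, so in a stolen round $t$ the greedy rule selects $i$ only if $\langle\theta^*,x_{t,i}\rangle\ge\langle\theta^*,x_{t,i_t^*}^*\rangle$, which together with Assumption~\ref{assumption:1} forces the per-round over-report $\langle\theta^*,x_{t,i}\rangle-\langle\theta^*,x_{t,i}^*\rangle$ to be at least $\Delta_{t,i}\ge\Delta$. Assumption~\ref{assumption:1} also makes every per-round over-report nonnegative; so, writing $m\defeq n_T(i)$ (and noting $S\le1$ trivially when $m\le1$) and invoking the non-triggered grim-trigger inequality right after arm $i$'s $(m-1)$-st pull, the sum of the first $m-1$ per-round over-reports is at most $\sum_\ell\eta_\ell+2\sqrt{(m-1)\log T}\le4\sqrt{T\log T}$ on $\mathcal E$. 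Since each stolen round among those $m-1$ pulls contributes at least $\Delta$ to this sum and the remaining terms are nonnegative, there are at most $4\sqrt{T\log T}/\Delta$ stolen rounds among the first $m-1$ pulls, and adding the final pull gives $S\le 4\sqrt{T\log T}/\Delta+1$ on $\mathcal E$ (and $S\le T$ always). Combining, $u_i(\ggtm,\sigma_i',\sigma_{-i}^*)\le n_T^*(i)+\E[S]\le n_T^*(i)+4\sqrt{T\log T}/\Delta+1+T\,\P(\mathcal E^c)\le u_i(\ggtm,\bsigma^*)+\tilde\cO(\sqrt T)$, which is the defining inequality of an $\tilde\cO(\sqrt T)$-NE (recall $\Delta$ is treated as constant).

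The crux is the stolen-pull bound. The subtlety is that a deviating arm can over-report by an arbitrarily large amount in any single round, so the grim-trigger inequality controls the cumulative over-report only up to the penultimate pull, which is why the final pull must be charged separately (the ``$+1$''); and Assumption~\ref{assumption:1} is essential here, since without nonnegative per-round over-reports the grim-trigger statistic need not be monotone in the stolen-pull count, and an arm could interleave over- and under-reporting to steal far more than $\tilde\cO(\sqrt T)$ pulls before detection. Everything else is routine bookkeeping around the clean event $\mathcal E$ and the constant-gap assumption (ties in the per-round argmax, if permitted, are broken consistently between the mechanism and the definition of $i_t^*$ and contribute no regret).
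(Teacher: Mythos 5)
Your proposal is correct and follows essentially the same route as the paper's proof in Appendix~\ref{proof:truthful_ggtm}: the same Hoeffding-based good event, the same observation that truthful arms are never eliminated (so truthful play yields $n_T^*(i)-\nicefrac{1}{T}$ pulls and $\nicefrac{1}{T}$ regret), and the same bound on ``stolen'' rounds via the grim-trigger condition combined with the constant-gap assumption. The only cosmetic difference is that you charge the final (eliminating) pull separately with an explicit ``$+1$,'' a point the paper's Lemma~\ref{lemma:elimination} handles slightly more loosely; this does not change the $\tilde\cO(\sqrt{T})$ conclusion.
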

\begin{proof}[Proof Sketch.] 
    By design of the grim trigger, it is straightforward to show that the probability that a truthful arm gets eliminated is at most $\nicefrac{1}{T^2}$. Moreover, the grim trigger ensures that no arm can `poach' selections from a truthful arm more than order $\smash{\sqrt{T}}$ times by misreporting its contexts. This achieves two things: (a) it protects truthful arms and guarantees that truthfulness is a good strategy, and (b) limits an arm's profit from being untruthful. 
    The proof can be found in Appendix~\ref{proof:truthful_ggtm}. 
\end{proof}

Theorem~\ref{prop:truthful_NE} tells us that truthfulness is an approximate NE. We now also provide a more holistic strategic regret guarantee of $\smash{\tilde \cO(K^2\sqrt{KT})}$ in \emph{every} Nash equilibrium under $\ggtm$. Proving this is more complicated as the arms can profit from exploiting our uncertainty about their truthfulness (i.e., the looseness of the grim trigger).

\begin{theorem}\label{theorem:ggtm}
    The Greedy Grim Trigger Mechanism has strategic regret
    \begin{align}\label{eq:optgtm_regret_bound}
        R_T(\ggtm, \bsigma) = \cO \left( \underbrace{ \sqrt{KT\log(T)}}_{\text{cost of manipulation}} + \underbrace{K^2 \sqrt{K T \log(T)}}_{\text{cost of mechanism design}} \right) 
    \end{align}
    for every $\bsigma \in \NE(\ggtm)$. Hence, 
    $\max_{\bsigma \in \NE (\ggtm)} R_T(\ggtm, \bsigma) = \tilde \cO \big(K^2\sqrt{KT}\big)$. 
\end{theorem}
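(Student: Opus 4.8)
The plan is to decompose the strategic regret in any NE $\bsigma \in \NE(\ggtm)$ into two contributions: (i) the per-round regret incurred while all arms are still active, caused by a pulled arm over-reporting its reward relative to the best true arm, and (ii) the regret incurred after one or more arms have been eliminated (including, in the worst case, the regret of order $T$ once the active set empties). For (i), on any round $t$ in which arm $i_t$ is pulled, the greedy selection rule gives $\langle \theta^*, x_{t,i_t}\rangle \geq \langle \theta^*, x_{t,i_t^*}\rangle$ (if $i_t^*$ is still active; if it was already eliminated, charge that round to (ii)). Hence the instantaneous regret $\langle \theta^*, x_{t,i_t^*}^*\rangle - \langle \theta^*, x_{t,i_t}^*\rangle$ is at most the \emph{over-report} $\langle \theta^*, x_{t,i_t}\rangle - \langle \theta^*, x_{t,i_t}^*\rangle \geq 0$ of the pulled arm (using Assumption~\ref{assumption:1} and the boundedness of gaps). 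Summing over rounds, the total type-(i) regret is bounded by $\sum_i \big(\sum_{\ell \le T : i_\ell = i} \langle \theta^*, x_{\ell,i}\rangle - \sum_{\ell\le T: i_\ell=i}\langle\theta^*,x^*_{\ell,i}\rangle\big)$, i.e. the aggregate cumulative over-report of each arm. But the grim trigger condition \eqref{eq:optimistic_observed_reward} guarantees that an arm is eliminated as soon as its cumulative reported reward exceeds its cumulative observed reward plus $2\sqrt{n_t(i)\log T}$; combined with a Hoeffding bound converting observed reward back to true reward, each arm's cumulative over-report while active is at most $\tilde\cO(\sqrt{n_T(i)})$, and by concavity $\sum_i \sqrt{n_T(i)} \le \sqrt{K\sum_i n_T(i)} = \sqrt{KT}$. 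This yields the $\sqrt{KT\log T}$ ``cost of manipulation'' term.

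For (ii), I would argue that in any NE no arm is eliminated ``too early.'' The key economic observation is that an arm's utility is its number of pulls; once eliminated it gets zero future pulls, which is strictly worse (in a non-trivial sense) than staying active. More precisely, I would show that in equilibrium each arm $i$, conditioned on the strategies of the others, has no incentive to let its cumulative over-report drive it past the trigger before it has already collected its ``fair share'' of pulls — otherwise deviating to a slightly less aggressive report that keeps it active longer would strictly increase its pulls. Quantitatively, the trigger fires against arm $i$ only after it has over-reported by roughly $\sqrt{n_t(i)\log T}$, and each unit of over-report buys at most a bounded number of extra pulls (since gaps are bounded below by $\Delta$ and above by $1$); so an arm that is eliminated at time $\tau_i$ has been pulled $\Omega$-many times before elimination, and the number of eliminations is at most $K$. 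Hence the set of ``post-elimination'' rounds and the round on which the active set empties occur only after each eliminated arm has accrued $\tilde\cO(\sqrt{KT})$-controlled damage; iterating this over the at most $K$ eliminations, and paying $\cO(K)$ for each elimination event a further $\tilde\cO(K\sqrt{KT})$-type term from re-running the argument on the shrunken active set, gives the $K^2\sqrt{KT\log T}$ ``cost of mechanism design'' term. Combining (i) and (ii), and taking a union bound over the $\cO(KT)$ Hoeffding events each of probability $\le 1/T^2$, gives \eqref{eq:optgtm_regret_bound}, and the final $\max_{\bsigma}$ bound follows since the argument used nothing about $\bsigma$ beyond it being a NE and Assumption~\ref{assumption:1}.

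The main obstacle I expect is step (ii): bounding the regret caused by eliminations in \emph{every} NE, not just the truthful one. Unlike Theorem~\ref{prop:truthful_NE}, here the arms may deliberately exploit the slack of width $2\sqrt{n_t(i)\log T}$ in the trigger, and several arms may be eliminated, so I need a careful accounting of how much an arm can gain by over-reporting before the trigger fires and how this interacts with the competition among arms (one arm's over-reporting ``poaches'' pulls that would otherwise go to others, possibly delaying \emph{their} eliminations). The delicate point is to turn the NE condition into a quantitative lower bound on the number of pulls an arm must have received before it is eliminated — essentially a ``no arm throws away its position for free'' lemma — and to make sure the $K$ nested invocations of the manipulation bound (one per surviving sub-instance after each elimination) compose into the claimed $K^2\sqrt{KT}$ rather than something worse. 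I would isolate this as a separate lemma and prove it by a deviation argument: given a NE, exhibit for any arm that would be eliminated ``prematurely'' an explicit less-aggressive strategy that is not eliminated on that round and is pulled at least as often, contradicting the NE property up to the $\tilde\cO(\sqrt{T})$ slack already identified in Theorem~\ref{prop:truthful_NE}.
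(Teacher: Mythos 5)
Your decomposition into ``rounds where the round-optimal arm is still active'' versus ``rounds after eliminations'' is exactly the paper's split into $I_1$ and $I_2$, and your part (i) matches the paper's Lemma bounding $I_1$: instantaneous regret $\leq$ over-report of the pulled arm (via greedy selection and Assumption~\ref{assumption:1}), cumulative over-report per active arm $\leq 4\sqrt{n_{\tau_i}(i)\log T}$ from the trigger plus Hoeffding, then Jensen over arms to get $\sqrt{KT\log T}$. That half is fine.

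The gap is in part (ii), which you yourself flag as the obstacle but whose sketch would not go through as written. First, the deviation you propose — ``a slightly less aggressive report that keeps the arm active longer'' — is hard to make rigorous, because perturbing arm $i$'s strategy changes the entire trajectory (who is active, who gets pulled, what is observed); you cannot cleanly assert the perturbed strategy yields at least as many pulls. The paper instead compares against the \emph{fully truthful} deviation $\sigma_i^*$: a truthful arm is never eliminated (w.h.p.), and any other arm $j$ can poach at most $\cO(\sqrt{n_{\tau_j}(j)\log T}/\Delta)$ rounds from it before $j$'s own trigger constrains it, giving the clean lower bound $\E_{\bsigma}[n_T(i)] \geq n_T^*(i) - \cO(\sqrt{KT\log T})$ in every NE. Second, your accounting of how the $K^2$ arises (``re-running the argument on the shrunken active set after each elimination'' and ``an eliminated arm has been pulled $\Omega$-many times before elimination'') is not the mechanism and is not obviously correct; the paper never lower-bounds pulls before elimination. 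What it actually controls is $n_T^*(i) - n_{\tau_i}^*(i)$, the number of rounds in which arm $i$ is optimal but already dead: writing $n_{\tau_i}(i) = \tau_i - \sum_{j\neq i} n_{\tau_i}(j)$ and applying the NE lower bound to each of the $K-1$ other arms gives the upper bound $\E_{\bsigma}[n_T(i)] \leq \E_{\bsigma}[n^*_{\tau_i}(i)] + \cO(K\sqrt{KT\log T})$; sandwiching with the lower bound yields $\E_{\bsigma}[n_T^*(i) - n^*_{\tau_i}(i)] \leq \cO(K\sqrt{KT\log T})$, and since $\E[\sum_t \ind{i_t^*\notin A_t}] = \sum_i \E[n_T^*(i)-n^*_{\tau_i}(i)]$ with each such round costing at most $1$, summing over the $K$ arms produces the $K^2\sqrt{KT\log T}$ term. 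You would need to replace your iteration-over-eliminations heuristic with this sandwich argument (or something equally quantitative) for the proof to close.
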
 
\begin{proof}[Proof Sketch.]    
    The regret analysis is notably more complicated than the one in Theorem~\ref{prop:truthful_NE}, as we must bound the regret due to the arms exploiting our uncertainty as well as the cost of committing to the grim trigger. Both of these quantities do not play a role when the arms always report truthfully (like in Theorem~\ref{prop:truthful_NE}). A complete proof can be found in Appendix~\ref{section:proof_ggtm}. 
    \end{proof} 


The regret bound \eqref{eq:optgtm_regret_bound} suggests that there are two sources of regret. The first term is due to our mechanism design being approximate (relying on estimates), which leaves room for the arms to exploit our uncertainty and misreport their contexts to obtain additional selections. 
The second part of \eqref{eq:optgtm_regret_bound} is the cost of the mechanism design, i.e., the cost of committing to the grim trigger. We suffer constant regret any round in which the round-optimal arm is no longer in the active set. In the worst-case, this quantity is of order $\smash{K^2 \sqrt{KT}}$.

\begin{remark}
\label{remark:assumption_1}
    We want to briefly comment on Assumption~\ref{assumption:1}. It appears intuitive that any rational arm would never under-report its value, i.e., make itself look worse than it actually is. However, in Appendix~\ref{appendix:assumption_1}, we provide a simple example where occasionally under-reporting its value allows an arm to simulate an environment where it is always optimal, even though it is in fact only optimal half of the time. We will explain in the example that without additional strong assumptions on the noise distribution the two environments are indistinguishable so that such manipulation by the arms appears unavoidable when trying to maximize rewards. 
\end{remark}

\section{The Optimistic Grim Trigger Mechanism}\label{section:optgtm}


The problem of estimating the unknown parameter $\theta^*$ appears daunting given that the arms can strategically alter their contexts to manipulate our estimate of $\theta^*$ to their advantage. In fact, imagine an arm manipulating its contexts orthogonal to $\theta^*$ so that $\langle \theta^*, x_{t,i} -  x_{t,i}^* \rangle = 0$ but $x_{t,i} \neq x_{t,i}^*$. Observing only $x_{t,i}$ and $\smash{r_{t,i} \defeq \langle \theta^*, x_{t,i}^* \rangle + \eta_t}$, our estimate of $\theta^*$ becomes biased and could be arbitrarily far off the true parameter $\theta^*$ even though the gamed context and true context have the same reward w.r.t.\ $\theta^*$.  
This is also the case more generally. Since we observe neither $\theta^*$ nor $x_{t,i}^*$, any observed combination of $x_{t,i}$ and $r_{t,i}$ will ``make sense'' to us. \emph{But, how can we incentivize the arms to report truthfully and minimize regret despite incorrect estimates of~$\theta^*$?}

Our key observation is that learning $\theta^*$ is not necessary to incentivize the arms or minimize regret; it appears to be a hopeless endeavour after all. The idea of the Optimistic Grim Trigger Mechanism (OptGTM, Mechanism~\ref{mechanism:optgtm}) is to construct pessimistic estimates of the total reward we expected from pulling an arm.   Importantly, we can construct such pessimistic estimates of the expected (i.e., ``reported'') reward even when the contexts are manipulated. OptGTM then threatens arms with elimination if our \emph{pessimistic} estimate of the expected reward exceeds the \emph{optimistic} estimate of the observed reward. Interestingly, this does not relate to the amount of corruption in the feature space and, in fact, $\sum_{t,i} \lVert x_{t,i} - x_{t,i}^* \rVert_2$ could become arbitrarily large. However, it does bound the effect of each arm's strategic manipulation on the decisions we make and thereby allows for effective incentive design and regret minimization. 


\SetAlgorithmName{Mechanism}{}{}
\RestyleAlgo{ruled}
\begin{algorithm}[t]
\ifneurips \setstretch{1.2} \fi 
\caption{The Optimistic Grim Trigger Mechanism $(\optgtm)$}
\label{mechanism:optgtm}
\textbf{initialize:} $A_1 = [K]$ \\ 
\For{$t=1,\dots, T$}{
Observe reported contexts $\mathcal{X}_t = \{x_{t,1}, \dots, x_{t,K} \}$. \\
Play the active arm with largest reported optimistic reward  
\vspace{-0.1cm}
\begin{align*}
    i_t = \argmax_{i \in A_t} \ucb_{t,i}\big( x_{t,i}\big).  
\end{align*} \\[-0.1cm]
Receive reward $r_{t,i_t}$ from playing arm $i_t$. \\
\If{$\sum_{\ell \leq t \colon i_\ell = i_t} \lcb_{\ell, i_t} ( x_{\ell, i_t}) > \ucb_{t} ( \hr_{t,i_t} )$ \vspace{0.05cm}}
{Eliminate arm $i_t$ from the active set: $A_{t+1} \leftarrow A_t \setminus \{i_t\}$.}
\If{$A_{t+1} = \emptyset$}{
Stop playing any arm and receive zero reward for all remaining rounds. 
}
}
\end{algorithm}

To construct pessimistic (and optimistic) estimates of the expected reward, we use independent estimators $\smash{\hat \theta_{t,i}}$  and confidence sets $C_{t,i}$ around $\smash{\hat\theta_{t,i}}$, which do not take into account that the contexts are potentially manipulated. That is, we have a separate estimator and confidence set for each arm $i \in [K]$. This will prevent one arm influencing the elimination of another. It also stops collusive arm behavior, where a majority group of the arms could dominate and steer our estimation process. 

\paragraph{Confidence Sets.} For every arm $i \in [K]$ we define the least-squares estimator $\hat \theta_{t,i}$ w.r.t.\ its \emph{reported} contexts and observed rewards before round $t$ as  
\begin{align}\label{eq:least_squares_estimator}
    \hat \theta_{t,i} = \argmin_{\theta \in \Reals^d} \left( \sum\nolimits_{\ell < t \colon i_\ell = i}  \big( \langle \theta, x_{\ell, i} \rangle - r_{\ell, i} \big)^2 + \lambda \lVert \theta \rVert_2^2 \right),
\end{align}
where $\lambda > 0$. We then define the confidence set $C_{t,i} \defeq \{ \theta \in \Reals^d \colon \lVert \hat \theta_{t,i} - \theta \rVert_{V_{t,i}}^2  \leq \beta_{t,i} \}$ where $\beta_{t,i} \defeq \cO(d \log(n_{t}(i)))$ is the confidence size. Here, $V_{t,i}$ is the covariance matrix of reported contexts of arm $i$ given by
$V_{1, i} \defeq \lambda I$ and $V_{t,i} \defeq \lambda I + \sum_{\ell < t \colon i_\ell = i} x_{\ell, i} x_{\ell, i}^\top $.
\footnote{For more details on the design of least-squares estimators and the confidence sets, we refer to \citet{abbasi2011improved} and \citet{lattimore2020bandit} (Chapter 20).}

It is well-known that if the contexts were always reported truthfully, i.e., $x_{t,i} = x_{t,i}^*$, then with high probability $\smash{\theta^* \in C_{t,i}}$.
But, if the sequence of reported contexts $x_{t,i}$ substantially differs from the true contexts $x_{t,i}^*$ there is no (high probability) guarantee that the true parameter $\theta^*$ is element in $C_{t,i}$. 
In the literature on learning with adversarial corruptions (in linear contextual bandits), the standard approach to deal with this is to widen the confidence set. However, for this to be effective we would need to assume a sufficiently small corruption budget for the arms and prior knowledge of the total amount of corruption, both of which we explicitly do not assume here. 

Slightly overloading notation, we instead define the optimistic and pessimistic estimate of the expected reward of a context vector $x$ w.r.t.\ arm $i$ as  
\begin{align*}
    \ucb_{t,i}(x) \defeq \langle \hat \theta_{t,i}, x \rangle + \sqrt{\beta_{t,i}} \lVert x \rVert_{V_{t,i}^{-1}} \quad \text{ and } \quad \lcb_{t,i}(x) \defeq \langle \hat \theta_{t,i}, x \rangle - \sqrt{\beta_{t,i}} \lVert x \rVert_{V_{t,i}^{-1}} .
\end{align*}
We chose to state these estimates using additive bonuses. However, it can be convenient to think of them as $\ucb_{t,i}(x) \approx \max_{\theta \in C_{t,i}} \langle \theta, x \rangle$ and $\lcb_{t,i}(x) \approx \min_{\theta \in C_{t,i}} \langle \theta, x \rangle$.  


\paragraph{Grim Trigger Condition.}
In round $t\in [T]$, we eliminate arm $i$ from $A_t$ if the pessimistic estimate using the reports is larger than the optimistic estimate using the total observed reward, i.e., 
\begin{align}\label{eq:optgtm_trigger_condition}
    \sum_{\ell \leq t \colon i_\ell = i} \left( \langle \hat \theta_{\ell, i} , x_{\ell,i} \rangle - \sqrt{\beta_\ell} \lVert x_{\ell, i} \rVert_{V_{\ell, i}^{-1}} \right) > \sum_{\ell \leq t \colon i_\ell = i} r_{\ell, i} + 2 \sqrt{n_t(i) \log(T)}.  
\end{align} 
In other words, $\sum_{\ell \leq t \colon i_\ell = i} \lcb_{\ell, i} ( x_{\ell, i} ) > \ucb_t ( \hr_{t,i} )$. 

Examining the left side of \eqref{eq:optgtm_trigger_condition}, the careful reader may wonder why we do not simply use our latest and arguably best estimate $\smash{\hat \theta_{t,i}}$, but instead the whole sequence of ``out-dated'' estimators $\smash{\hat \theta_{\ell, i}}$ from previous rounds.  
In fact, this is crucial for the grim trigger. 
Using $\smash{\hat \theta_{t,i}}$ renders the grim trigger condition ineffective, because, by definition, $\smash{\hat \theta_{t,i}}$ is the (least-squares) minimizer~\eqref{eq:least_squares_estimator} of the difference between $\smash{\sum_{\ell \leq t \colon i_\ell =i} \langle \hat \theta_{t,i}, x_{\ell, i} \rangle}$ and $\smash{\sum_{\ell \leq t \colon i_\ell =i} r_{\ell, i}}$. 
Hence, when using $\smash{\hat \theta_{t,i}}$ the grim trigger condition may not be satisfied even when the arms significantly and repeatedly misreport their contexts. 

\paragraph{Selection Rule.} We complete the $\optgtm$ algorithm by selecting arms optimistically with respect to each arm's own estimator and confidence set. That is, $\optgtm$ selects the active arm with maximal optimistic (expected) reward $\smash{\ucb_{t,i} (x_{t,i}) \defeq \langle \hat \theta_{t,i}, x_{t,i} \rangle + \sqrt{\beta_{t,i}} \lVert x_{t,i} \rVert_{V_{t,i}^{-1}}}$ in round $t$.   
We see that the grim trigger \eqref{eq:optgtm_trigger_condition} incentivizes arms to ensure that over the course of all rounds $\lcb_{t,i}(x_{t,i})$ is not much smaller than $\smash{r_{t,i} \defeq \langle \theta^*, x_{t,i}^*\rangle + \eta_t}$. Hence, $\ucb_{t,i}(x_{t,i})$ is also not substantially smaller than the true mean reward $\langle \theta^* , x_{t,i}^*\rangle$. This suggests that playing optimistically is a good strategy for the learner as long as the selected arm does not satisfy~\eqref{eq:optgtm_trigger_condition}.

\subsection{Regret Analysis of OptGTM}

When $\theta^*$ was known to the learner in advance, we assumed that the arms never report a value smaller than their true value, i.e., $\langle \theta^*, x_{t,i} \rangle \geq \langle \theta^*, x_{t,i}^* \rangle$ for all $(t,i) \in [T] \times [K]$. 
Now, when $\theta^*$ is unknown to the learner, we similarly assume that the arms do not report their optimistic value less than their true value. Again, it seems intuitive that in any given round, no arm would under-report its worth. 

\begin{assumption}\label{assumption:2}
    We assume that $\max_{\theta \in C_{t,i}} \langle \theta, x_{t,i} \rangle \geq \langle \theta^*, x_{t,i}^* \rangle $ for all $(t,i) \in [T] \times [K]$. 
\end{assumption}

We find that $\optgtm$ approximately incentivizes the arms to be truthful and, when the arms report truthfully, $\optgtm$ suffers at most $\tilde \cO(d \sqrt{KT})$ regret. 

\begin{theorem}\label{cor:truthful_optgtm}
    Under the Optimistic Grim Trigger Mechanism, being truthful is a $\tilde \cO(d \sqrt{KT})$-NE. When the arms report truthfully, the strategic regret of OptGTM under this approximate NE is at most 
    \begin{align*}
        R_T(\optgtm, \bsigma^*) = \tilde \cO(d\sqrt{KT}) .
    \end{align*}  
\end{theorem}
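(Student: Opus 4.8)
The plan is to establish the two assertions separately and then combine them: I would first bound the regret of $\optgtm$ under the truthful profile $\bsigma^*$, obtaining $\tilde\cO(d\sqrt{KT})$, and then show that any unilateral deviation raises an arm's number of pulls by at most $\tilde\cO(d\sqrt{KT})$, so that $\bsigma^*$ is a $\tilde\cO(d\sqrt{KT})$-NE. The second step reuses the regret bound from the first.

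\emph{Step 1: regret under $\bsigma^*$.} When every arm reports $x_{t,i}=x_{t,i}^*$, each per-arm least-squares problem \eqref{eq:least_squares_estimator} is an honest linear regression on true contexts with sub-Gaussian noise, so the standard self-normalized concentration bound \citep{abbasi2011improved} gives $\theta^*\in C_{t,i}$ for all $t,i$ on an event $\mathcal{E}_1$ of probability at least $1-1/T$ (for a suitable constant in $\beta_{t,i}$); I also condition on the companion event $\mathcal{E}_2=\{|\sum_{\ell\le t:\,i_\ell=i}\eta_\ell|\le 2\sqrt{n_t(i)\log T}\text{ for all }t,i\}$ behind $\ucb_t(\hr_{t,i})$. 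On $\mathcal{E}_1\cap\mathcal{E}_2$ two things happen: (i) no arm is eliminated, since $\sum_{\ell\le t:\,i_\ell=i}\lcb_{\ell,i}(x_{\ell,i}^*)\le\sum_{\ell\le t:\,i_\ell=i}\langle\theta^*,x_{\ell,i}^*\rangle\le\sum_{\ell\le t:\,i_\ell=i}r_{\ell,i}+2\sqrt{n_t(i)\log T}=\ucb_t(\hr_{t,i})$, so \eqref{eq:optgtm_trigger_condition} never fires; and (ii) since all $K$ arms stay active, the selection rule is a per-arm LinUCB and the usual optimism argument gives $\Delta_{t,i_t}\le\ucb_{t,i_t}(x_{t,i_t}^*)-\langle\theta^*,x_{t,i_t}^*\rangle\le 2\sqrt{\beta_{t,i_t}}\lVert x_{t,i_t}^*\rVert_{V_{t,i_t}^{-1}}$. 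I would then group rounds by the pulled arm and apply Cauchy--Schwarz with the elliptic-potential lemma inside each arm's block to get $\sum_{t:\,i_t=i}2\sqrt{\beta_{t,i}}\lVert x_{t,i}^*\rVert_{V_{t,i}^{-1}}=\tilde\cO(d\sqrt{n_T(i)})$, and one more Cauchy--Schwarz across the $K$ arms with $\sum_i n_T(i)=T$ to conclude $R_T(\optgtm,\bsigma^*)=\tilde\cO(d\sqrt{KT})$ (the complement of $\mathcal{E}_1\cap\mathcal{E}_2$ adds $\cO(1)$ by boundedness). Since $\bsigma^*$ causes no eliminations, arm $i$ is moreover pulled in all but at most $R_T(\optgtm,\bsigma^*)/\Delta=\tilde\cO(d\sqrt{KT})$ of the rounds where it is round-optimal, so $u_i(\optgtm,\bsigma^*)\ge |\{t:\,i_t^*=i\}|-\tilde\cO(d\sqrt{KT})$.

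\emph{Step 2: $\bsigma^*$ is an approximate NE.} Fix an arm $i$, let every $j\ne i$ play truthfully, and let $i$ play an arbitrary $\sigma_i'$ (still subject to Assumption~\ref{assumption:2}). Since the estimators are independent across arms, $C_{t,j}$ depends only on arm $j$'s own (truthful) reports and rewards, so $\theta^*\in C_{t,j}$ w.h.p.\ for every $j\ne i$ no matter what $i$ reports; hence every truthful arm --- in particular the round-optimal arm $i_t^*$ whenever $i_t^*\ne i$ --- stays active. I would split the rounds with $i_t=i$ into \emph{legitimate} rounds ($i_t^*=i$), at most $|\{t:\,i_t^*=i\}|$ of them, and \emph{poaching} rounds ($i_t^*\ne i$). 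In a poaching round the selection rule plus optimism for the active arm $i_t^*$ give $\ucb_{t,i}(x_{t,i})\ge\ucb_{t,i_t^*}(x_{t,i_t^*}^*)\ge\langle\theta^*,x_{t,i_t^*}^*\rangle\ge\langle\theta^*,x_{t,i}^*\rangle+\Delta$, so each poach contributes at least $\Delta$ to $S_i\defeq\sum_{t:\,i_t=i}(\ucb_{t,i}(x_{t,i})-\langle\theta^*,x_{t,i}^*\rangle)$, while Assumption~\ref{assumption:2} makes the legitimate-round summands nonnegative (there $i=i_t^*$, so $\ucb_{t,i}(x_{t,i})=\max_{\theta\in C_{t,i}}\langle\theta,x_{t,i}\rangle\ge\langle\theta^*,x_{t,i}^*\rangle$). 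To bound $S_i$: if $i$ is not eliminated then the grim-trigger condition fails, so $\sum_{t:\,i_t=i}\lcb_{t,i}(x_{t,i})\le\sum_{t:\,i_t=i}r_{t,i}+2\sqrt{n_T(i)\log T}\le\sum_{t:\,i_t=i}\langle\theta^*,x_{t,i}^*\rangle+\tilde\cO(\sqrt{n_T(i)})$, and adding back $2\sqrt{\beta_{t,i}}\lVert x_{t,i}\rVert_{V_{t,i}^{-1}}$ (the elliptic-potential lemma still applies, now to the reported, possibly gamed, contexts) gives $S_i=\tilde\cO(d\sqrt{n_T(i)})=\tilde\cO(d\sqrt T)$; if $i$ is eliminated the same bound holds up to one extra round worth $\cO(1)$. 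Hence there are at most $S_i/\Delta=\tilde\cO(d\sqrt T)$ poaching rounds, so $u_i(\optgtm,(\sigma_i',\sigma_{-i}^*))\le|\{t:\,i_t^*=i\}|+\tilde\cO(d\sqrt T)$, and combining with Step 1 gives $u_i(\optgtm,(\sigma_i',\sigma_{-i}^*))\le u_i(\optgtm,\bsigma^*)+\tilde\cO(d\sqrt{KT})$ --- the claimed $\tilde\cO(d\sqrt{KT})$-NE.

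\emph{Main obstacle.} The hard part is Step 2: once $i$ games there is no reason for $\theta^*$ to lie in $C_{t,i}$, so none of the optimism/regret machinery applies to arm $i$'s own estimator. The resolution is that accuracy of $i$'s estimator is never needed --- the grim trigger already caps the aggregate gap $\sum_{t:\,i_t=i}(\ucb_{t,i}(x_{t,i})-r_{t,i})$ at $\tilde\cO(d\sqrt T)$ through an elliptic-potential bound on the \emph{reported} contexts, and Assumption~\ref{assumption:2} keeps that budget from being refunded on the legitimate rounds, which together pin down the number of poached pulls. Two subsidiary points that need care are: showing a deviation by $i$ cannot corrupt the other arms' confidence sets or knock them out of the active set (this is exactly why $\optgtm$ carries independent per-arm estimators), and checking that boundedness of the reported contexts and rewards lets the covariance matrices and the elliptic-potential lemma behave and absorbs the single elimination round plus the low-probability events into $\cO(1)$.
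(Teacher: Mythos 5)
Your proposal is correct and follows essentially the same route as the paper: no truthful arm is eliminated and the per-arm LinUCB optimism plus the elliptic-potential lemma give the $\tilde\cO(d\sqrt{KT})$ regret and the utility lower bound, while a deviator's poached pulls are capped by the grim-trigger constraint applied to the reported contexts combined with the constant-gap assumption and Assumption~\ref{assumption:2}. The only (cosmetic) differences are that you derive the truthful utility lower bound by dividing the regret bound by $\Delta$ rather than rerunning the poaching argument, and you make explicit the role of Assumption~\ref{assumption:2} in keeping the legitimate-round summands nonnegative, a step the paper leaves implicit.
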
 
The optimal regret in standard non-strategic linear contextual bandits is $\Theta(d\sqrt{T})$ so that $\optgtm$ is optimal up to a factor of $\smash{\sqrt{K}}$ (and logarithmic factors) when the arms report truthfully. The additional factor of $\smash{\sqrt{K}}$ is caused by the fact that $\optgtm$ maintains independent estimates for each arm. 
We now also provide a strategic regret bound for every NE of the arms under $\optgtm$. 

\begin{theorem}\label{theorem:opt_grim_trigger}
    The Optimistic Grim Trigger Mechanism has strategic regret
    \begin{align*}
        R_T(\optgtm, \bsigma) = \cO \left( d \log(T) \sqrt{KT} + d \log(T) K^2 \sqrt{KT} \right).
    \end{align*}
    for every $\bsigma \in \NE(\optgtm)$.
    Hence, $\max_{\bsigma \in \NE(\optgtm)} R_T(\optgtm, \bsigma) = \tilde \cO \left( d K^2 \sqrt{KT} \right)$. 
\end{theorem}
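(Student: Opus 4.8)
The plan is to reproduce the structure of the proof of Theorem~\ref{theorem:ggtm}, systematically replacing the exact reported reward $\langle\theta^*, x_{t,i}\rangle$ (which $\ggtm$ may use since $\theta^*$ is known) by $\optgtm$'s optimistic estimate $\ucb_{t,i}(x_{t,i})$, and carrying an additional least-squares estimation term through every step. I would first condition on a high-probability event $\mathcal{E}$ (holding with probability $1-\cO(1/T)$, so that $\mathcal{E}^c$ contributes only $\cO(1)$ to the expected regret) on which two things hold simultaneously: (i) Hoeffding's inequality gives $\sum_{\ell\le t:\,i_\ell=i} r_{\ell,i} \ge \sum_{\ell\le t:\,i_\ell=i}\langle\theta^*, x_{\ell,i}^*\rangle - 2\sqrt{n_t(i)\log T}$ for all $i,t$; and (ii) the self-normalized tail bound of \citep{abbasi2011improved} ensures $\theta^* \in C_{t,i}$ for every $i,t$ such that arm $i$ has reported truthfully through round $t$. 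An immediate corollary --- the ``soundness'' of the grim trigger --- is that on $\mathcal{E}$ a truthful arm is never eliminated: truthfulness forces $\lcb_{\ell,i}(x_{\ell,i}^*) \le \langle\theta^*, x_{\ell,i}^*\rangle$, so its pessimistic cumulative reported reward never exceeds $\ucb_t(\hr_{t,i})$. This is the analogue of the first step of the GGTM analysis and also underlies Theorem~\ref{cor:truthful_optgtm}.

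Next I would set up the per-round regret decomposition. By Assumption~\ref{assumption:2}, $\ucb_{t,i}(x_{t,i}) \ge \langle\theta^*, x_{t,i}^*\rangle$ for every arm $i$, so in any round $t$ in which the round-optimal arm $i_t^*$ is still active, the optimistic selection rule gives $\ucb_{t,i_t}(x_{t,i_t}) \ge \ucb_{t,i_t^*}(x_{t,i_t^*}) \ge \langle\theta^*, x_{t,i_t^*}^*\rangle$, and hence the instantaneous regret is at most $\ucb_{t,i_t}(x_{t,i_t}) - \langle\theta^*, x_{t,i_t}^*\rangle$. I would split this quantity into the confidence width $2\sqrt{\beta_{t,i_t}}\lVert x_{t,i_t}\rVert_{V_{t,i_t}^{-1}}$ plus the ``overreport'' $\lcb_{t,i_t}(x_{t,i_t}) - \langle\theta^*, x_{t,i_t}^*\rangle$. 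Summing the widths over rounds and arms, the elliptical potential lemma gives $\sum_i\sum_{\ell:\,i_\ell=i}\sqrt{\beta_{\ell,i}}\lVert x_{\ell,i}\rVert_{V_{\ell,i}^{-1}} = \tilde\cO(d\sqrt{KT})$; summing the overreports, the fact that arm $i$'s grim-trigger condition has not fired, together with (i), bounds $\sum_{\ell\le t:\,i_\ell=i}\big(\lcb_{\ell,i}(x_{\ell,i}) - \langle\theta^*, x_{\ell,i}^*\rangle\big)$ by $\cO(\sqrt{n_t(i)\log T})$, plus at most one ``final'' round per arm, for a total of $\tilde\cO(\sqrt{KT})$. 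Together these account for the ``cost of manipulation'' term $d\log(T)\sqrt{KT}$. (Here one also uses that reported contexts may be taken bounded, e.g.\ by clipping, so that the potential and the final overreport term remain controlled.)

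The remaining, and most delicate, contribution is the regret from rounds in which $i_t^*$ is no longer active --- including rounds with $A_t = \emptyset$ --- which is the ``cost of mechanism design''. Here I would invoke the Nash property of $\bsigma$. If arm $i$ is eliminated at round $\tau_i$ under $\bsigma$, then $u_i(\optgtm,\bsigma) = n_{\tau_i}(i)$; on the other hand, deviating to the truthful strategy $\sigma_i^*$ keeps arm $i$ active for all $T$ rounds (soundness), and the per-round decomposition above --- applied now to arm $i$'s competitors, whose cumulative overreport-plus-width is $\tilde\cO(d\sqrt{KT})$ regardless of the equilibrium --- shows that under $(\sigma_i^*,\bsigma_{-i})$ arm $i$ is pulled in all but $\tilde\cO(d\sqrt{KT})$ of its round-optimal rounds (the constant gap $\Delta$ converting overreport mass into a number of poached rounds). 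Comparing, $n_{\tau_i}(i) \ge \#\{t: i_t^*=i\} - \tilde\cO(d\sqrt{KT})$, and since $n_{\tau_i}(i)$ in turn exceeds $\#\{t\le\tau_i: i_t^*=i\}$ by at most the overpull of $i$, the number of round-optimal rounds of $i$ lost after its elimination is $\tilde\cO(d\sqrt{KT})$. Summing over the at most $K$ eliminated arms, and accounting for how each elimination reshapes the competitor bounds for the arms that remain, yields the $\tilde\cO(dK^2\sqrt{KT})$ term, with the exponent of $K$ mirroring that of the GGTM bound in Theorem~\ref{theorem:ggtm}. Adding the two contributions and the $\cO(1)$ from $\mathcal{E}^c$ gives the stated $\cO(\cdot)$ bound, and the $\tilde\cO(dK^2\sqrt{KT})$ restatement follows since the second term dominates.

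\textbf{Main obstacle.} I expect the hard part to be precisely this last step. Unlike in the truthful regime of Theorem~\ref{cor:truthful_optgtm}, the confidence set $C_{t,i}$ need not contain $\theta^*$ for a misreporting arm, so one cannot directly control its over-optimism; instead one must extract from the grim-trigger condition together with Assumption~\ref{assumption:2} a bound on how far a not-yet-eliminated arm's optimistic report can drift above its true mean reward, and then feed this into the Nash-deviation comparison to limit how often a ``valuable'' arm can be eliminated. The subtlety flagged in the text --- that the trigger must use the out-dated estimators $\hat\theta_{\ell,i}$ rather than the current $\hat\theta_{t,i}$ --- is exactly what makes such a drift bound possible, since with $\hat\theta_{t,i}$ the left-hand side of \eqref{eq:optgtm_trigger_condition} is a least-squares residual and can stay small even under large, repeated manipulation. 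Keeping this bookkeeping consistent across the $\le K$ elimination epochs, and handling potentially unbounded reported contexts, are the places most likely to require care.
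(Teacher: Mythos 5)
Your overall architecture matches the paper's: condition on a good event, split the regret into rounds where $i_t^*$ is still active (bounded by the grim-trigger drift bound of Lemma~\ref{lemma:optgtm_before_elimination} plus the elliptical potential lemma, giving the $d\log(T)\sqrt{KT}$ term) and rounds where $i_t^*$ has been eliminated (bounded via the Nash best-response comparison against the truthful deviation); your lower bound $\E_{\bsigma}[n_T(i)] \geq n_T^*(i) - \tilde\cO(d\sqrt{KT})$ is exactly the paper's Lemma~\ref{lemma:optgtm_lower_bound_n}.

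However, the step that actually produces the $K^2$ factor is mis-stated in a way that matters. You claim that $n_{\tau_i}(i)$ exceeds $n_{\tau_i}^*(i)$ by ``at most the overpull of $i$'' and that this overpull is $\tilde\cO(d\sqrt{KT})$. The grim-trigger/constant-gap argument only controls the overpull in rounds where the round-optimal arm is still \emph{active}: there, $i_t=i$ forces $\ucb_{t,i}(x_{t,i}) \geq \ucb_{t,i_t^*}(x_{t,i_t^*}) \geq \langle\theta^*,x^*_{t,i_t^*}\rangle$, and the constant gap is charged to arm $i$'s bounded cumulative over-optimism. In rounds where $i_t^*\notin A_t$, arm $i$ can be selected while reporting perfectly honestly, simply because it is the best surviving arm, and nothing in arm $i$'s own trigger bounds the number of such rounds --- that number is precisely $\sum_j \big(n_T^*(j)-n_{\tau_j}^*(j)\big)$, the quantity you are trying to bound, so the argument as literally written is circular (and would otherwise give only a factor $K$ rather than $K^2$). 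The paper breaks the circularity differently (Lemma~\ref{lemma:optgtm_n_upper_bound}): it writes $n_{\tau_i}(i)=\tau_i-\sum_{j\neq i}n_{\tau_i}(j)$ and applies the Nash lower bound to \emph{every competitor} $j$ at time $\tau_i$, yielding $\E_{\bsigma}[n_{\tau_i}(i)] \leq \E_{\bsigma}[n_{\tau_i}^*(i)] + \cO\big((K-1)\,d\log(T)\sqrt{KT}\big)$; combining this with the lower bound and the identity of Lemma~\ref{lemma:decomposing_I_2} gives a $(K-1)$ factor per arm and hence $K^2$ after summing. Your phrase about ``accounting for how each elimination reshapes the competitor bounds'' gestures at this interaction but does not supply the mechanism, and without it the bound on the post-elimination rounds does not close.
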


The proof ideas of Theorem~\ref{cor:truthful_optgtm} and Theorem~\ref{theorem:opt_grim_trigger} are similar to their counterparts in Section~\ref{section:ggtm}. The main difference lies in a more technically challenging analysis of the grim trigger condition~\eqref{eq:optgtm_trigger_condition}. 
We also see that in contrast to non-strategic linear contextual bandits, where the regret  typically does not depend on the number of arms $K$, Theorem~\ref{cor:truthful_optgtm} and Theorem~\ref{theorem:opt_grim_trigger} include a factor of $\smash{\sqrt{K}}$ and $\smash{K^2\sqrt{K}}$, respectively. 
A dependence on $K$ is expected due to the strategic nature of the arms which forces us to explicitly incentivize each arm to be truthful. However, we conjecture that the regret bound in Theorem~\ref{theorem:opt_grim_trigger} is not tight in $K$ and expect the optimal dependence on the number of arms to be $\smash{\sqrt{K}}$. 
The proofs of Theorem \ref{cor:truthful_optgtm} and Theorem \ref{theorem:opt_grim_trigger} can be found in Appendix \ref{section:proof_optgtm}.


\section{Experiments: Simulating Strategic Context Manipulation} \label{section:experiments}

We here experimentally analyze the efficacy of $\optgtm$ when the arms strategically manipulate their contexts in response to our learning algorithm. We compare the performance of $\optgtm$ with the traditional LinUCB algorithm \citep{chu2011contextual, abbasi2011improved}, which---as shown in Proposition~\ref{lemma:incentive_unaware_greedy}---implicitly incentivizes the arms to manipulate their contexts and, as a result, is expected to suffer large regret when the arms are strategic. 

Contrary to the assumption of arms playing in NE, we here model the strategic arm behavior by letting the arms update their strategy (i.e., what contexts to report) based on past interactions with the algorithms. 
More precisely, we assume that the strategic arms interact with the deployed algorithm (i.e., $\optgtm$ or LinUCB) over the course of 20 epochs, with each epoch consisting of $T=10k$ rounds. At the end of each epoch, every arm then updates its strategy using gradient ascent w.r.t.\ its utility.  
Importantly, this approach requires {no} prior knowledge from the arms, as they learn entirely through sequential interaction. This does not necessarily lead to equilibrium strategies, but instead serves as a way to study the performance and the implied incentivizes of $\optgtm$ and LinUCB under a natural model of strategic gaming behavior.

\begin{figure}[t]
     \centering
         \begin{subfigure}[t]{0.32\textwidth}
         \centering
         \includegraphics[width=\textwidth]{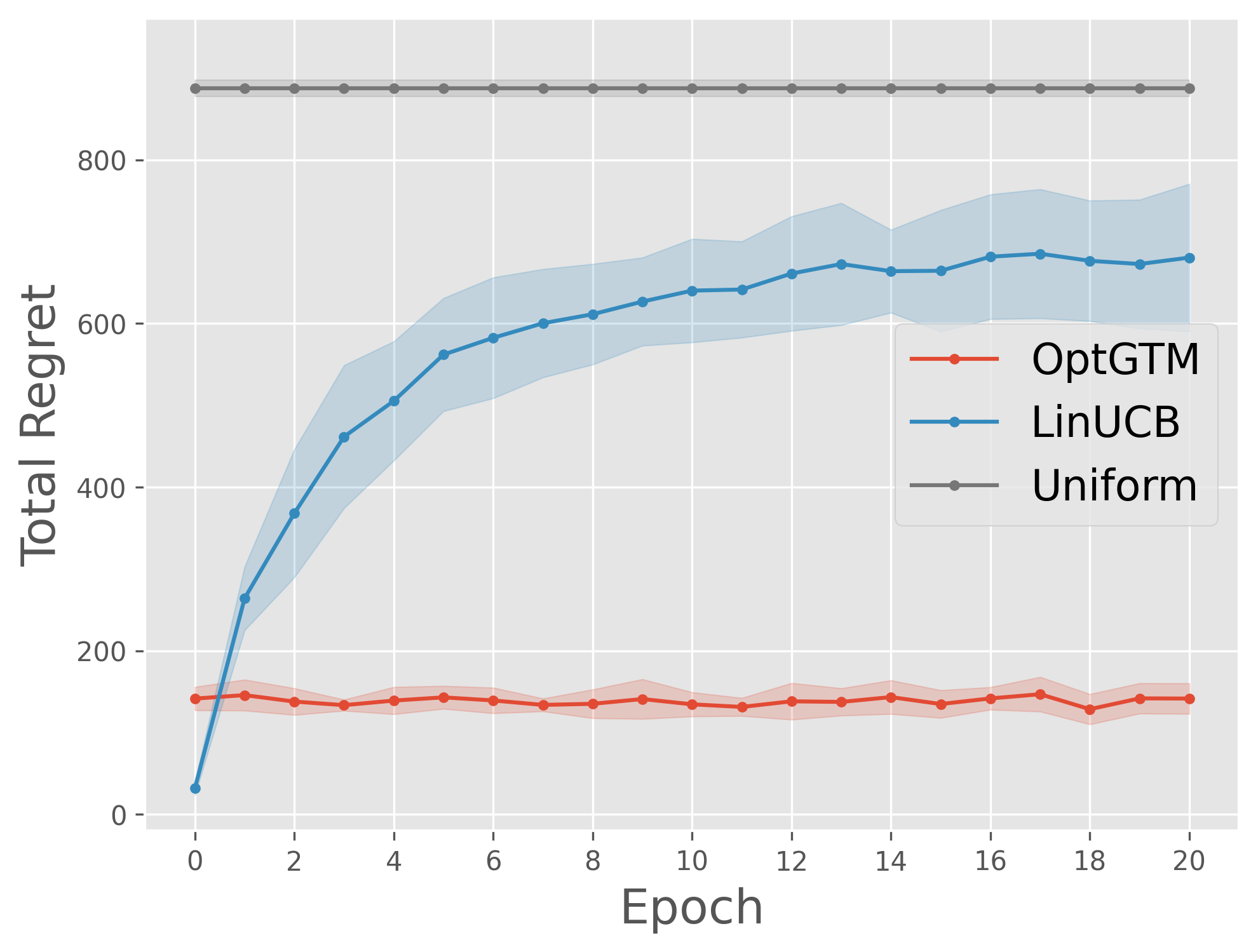}
        \caption{Total strategic regret $R_T$ as the arms adapt their strategies to the deployed algorithm over the course of 20 epochs.}
        \label{fig:epoch_regret}
     \end{subfigure}
      \hfill
    \begin{subfigure}[t]{0.32\textwidth}
         \centering
         \includegraphics[width=\textwidth]{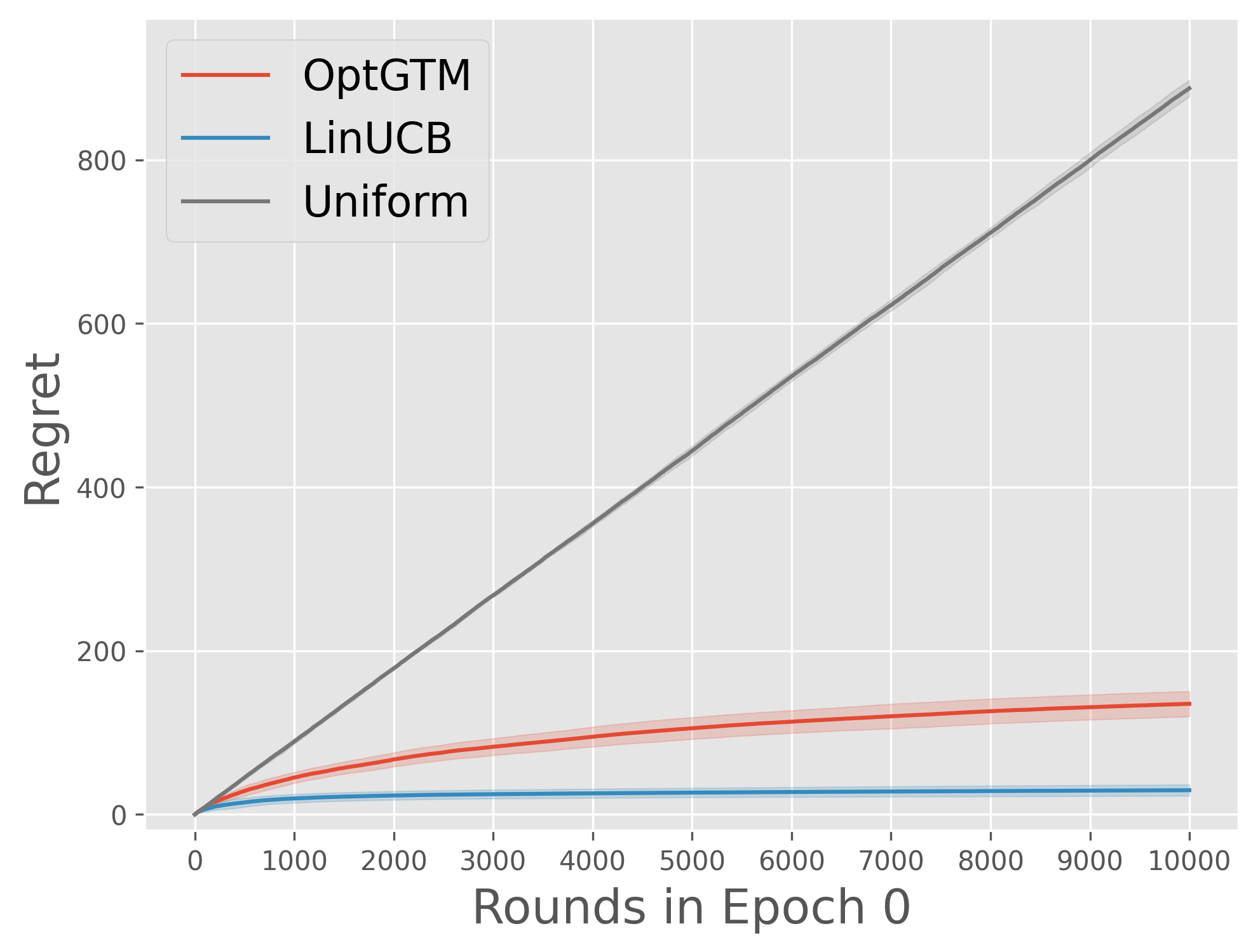}
        \caption{Epoch 0 ({Truthful Arms}): Regret as a function of $t$ \emph{before} the arms have interacted with the deployed algorithm.} 
        \label{fig:before_regret}
     \end{subfigure}
     \hfill
    \begin{subfigure}[t]{0.32\textwidth}
         \centering
         \includegraphics[width=\textwidth]{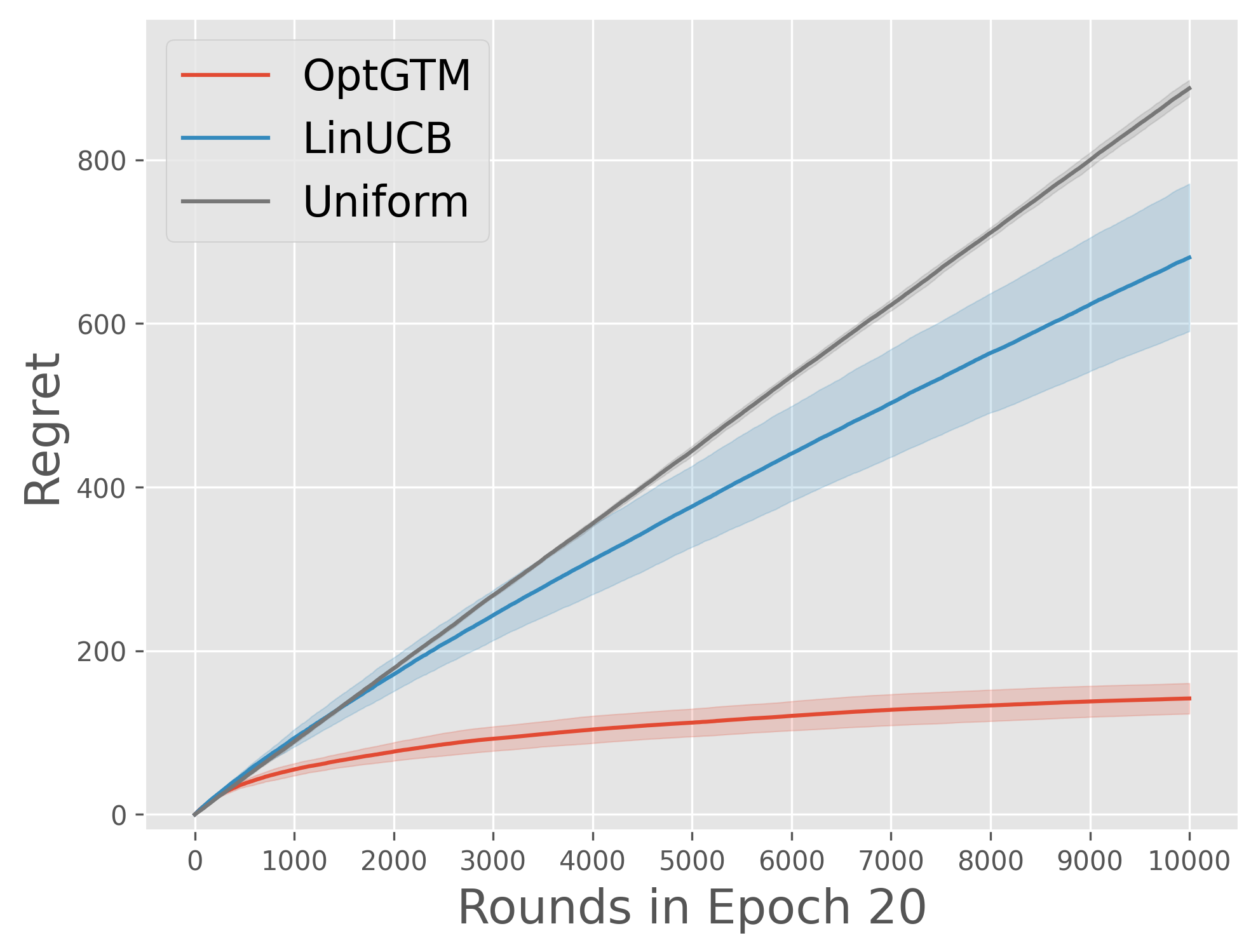} 
        \caption{Epoch 20 ({Strategic Arms}): Regret as a function of $t$ \emph{after} the arms have interacted with the deployed algorithm.} 
        \label{fig:after_regret}
     \end{subfigure}
     \caption{Comparison of the strategic regret of $\optgtm$ and LinUCB. The strategic arms adapt their strategies gradually over the course of 20 epochs. $\optgtm$ performs similarly across all epochs, whereas LinUCB performs increasingly worse as the arms adapt to the algorithm (Figure~\ref{fig:epoch_regret}). Figure~\ref{fig:before_regret} and~\ref{fig:after_regret} provide a closer look at the regret of the algorithms across the $T$ rounds in the initial epoch, where the arms are truthful, and the final epoch after the arms have adapted to the algorithms.   
     } 
     \label{fig:regret} 

\end{figure}


\paragraph{Experimental Setup.} 


\begin{wrapfigure}{r}{.55\textwidth}
\ 
\begin{minipage}{0.45\linewidth}
  \captionsetup{font=small}
  \includegraphics[width=\textwidth]{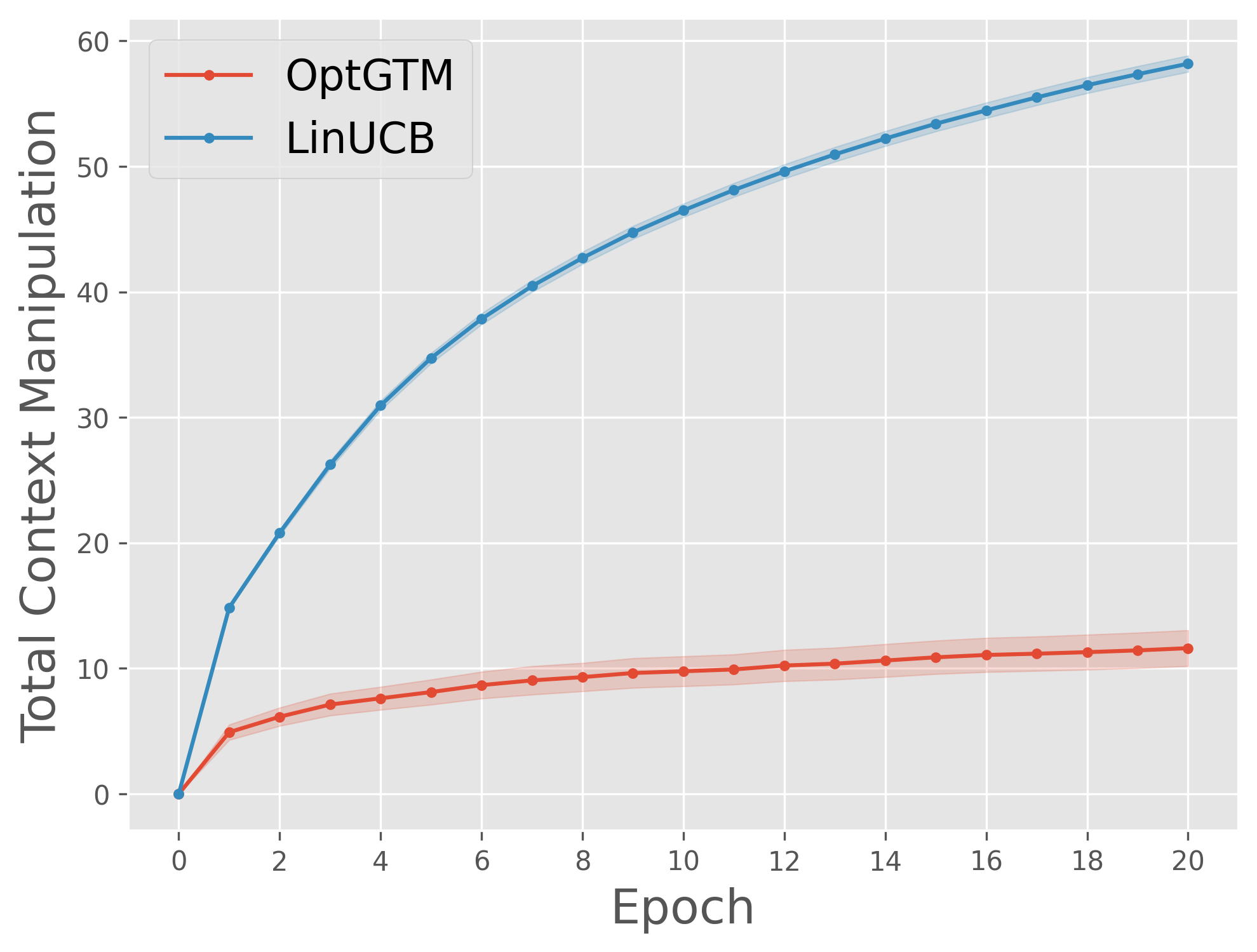} 
  \caption{Total context manipulation $\smash{\sum_{t,i} \lVert x_{t,i}^* - x_{t,i} \rVert_2}$.}
  \label{fig:feature}

\end{minipage}
\ \
\begin{minipage}{0.45\linewidth}
  \captionsetup{font=small}
  \includegraphics[width=\textwidth]{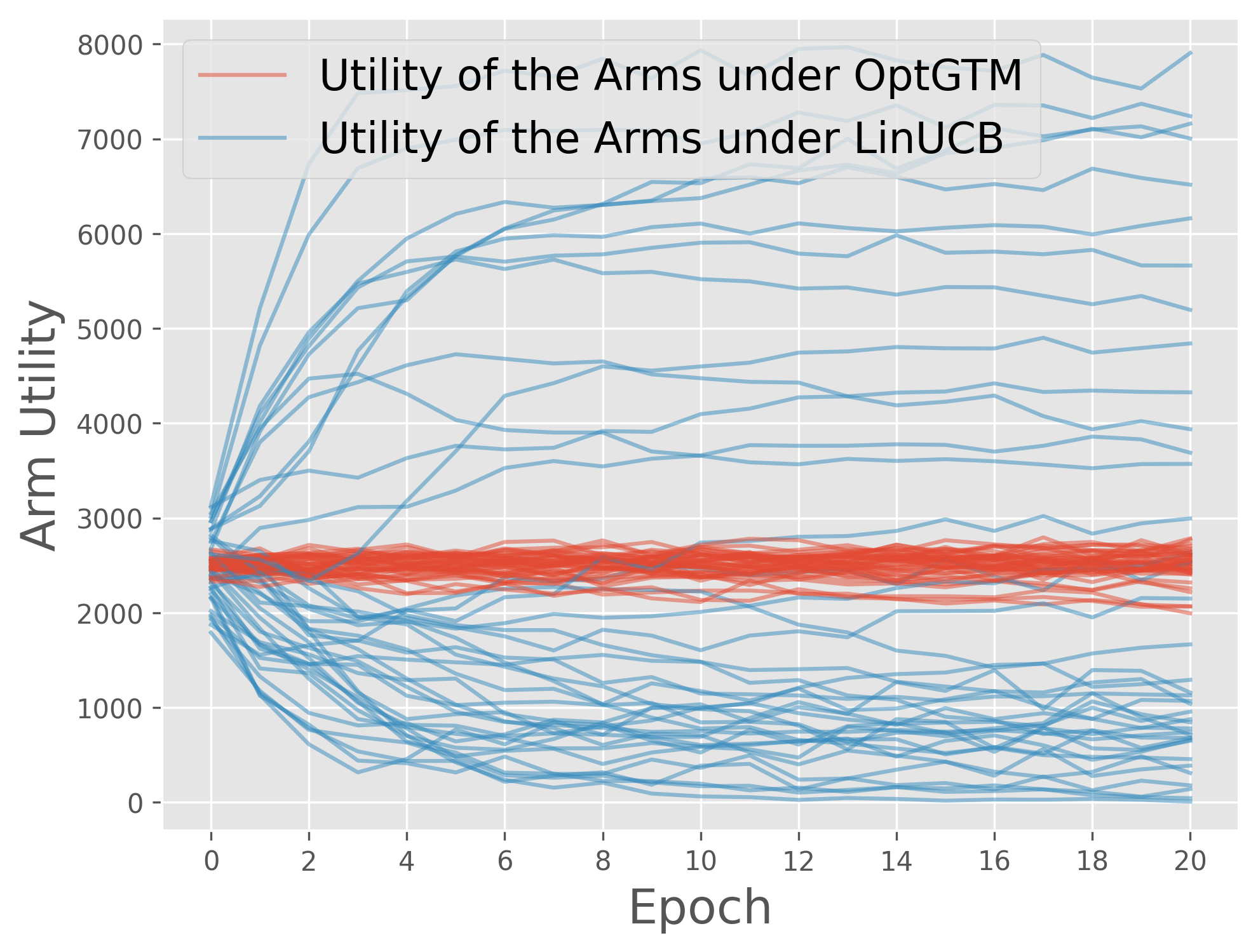}
  \caption{Utility of the arms for each of the 10 runs.} 
  \label{fig:util}
\end{minipage}
\vspace{-.32cm}
\end{wrapfigure}

We associate each arm with a true feature vector $\smash{y_i^* \in \Reals^{d_1}}$ (e.g., product features) and randomly sample a sequence of user vectors $\smash{c_t \in \Reals^{d_2}}$ (i.e., customer features).   
We assume that every arm can alter its feature vector $y_i^*$ by reporting some other vector $y_i$, but cannot alter the user contexts~$c_t$. We use a feature mapping $\varphi(c_t, y_i) = x_{t,i}$ to map the reported features $\smash{y_i \in \Reals^{d_1}}$ and the user features $\smash{c_t \in \Reals^{d_2}}$ to an arm-specific context $\smash{x_{t,i}\in \Reals^d}$ that the algorithm observes. 
At the end of every epoch, each arm then performs an approximated gradient step on $y_i$ w.r.t.\ its utility, i.e., the number of times it is selected. We let $K=5$ and $d = d_1 = d_2 = 5$ and average the results over 10 runs.

\paragraph{Results.} 
In Figure~\ref{fig:epoch_regret}, we observe that $\optgtm$ performs similarly well across all epochs, which suggests that $\optgtm$ successfully discourages the emergence of harmful gaming behavior. In contrast, as the arms adapt their strategies (i.e., what features to report), LinUCB suffers increasingly more regret and almost performs as badly as uniform sampling in the final epoch. 
In epoch 0, when the all arms are truthful, i.e., are non-strategic, LinUCB performs better than $\optgtm$ (Figure~\ref{fig:before_regret}). This is expected as OptGTM suffers additional regret due to maintaining independent estimates of $\theta^*$ for each arm (as a mechanism to incentivize truthfulness). However, $\optgtm$ significantly outperforms LinUCB as the arms strategically adapt, which is most prominent in the final epoch (Figure~\ref{fig:after_regret}). 
Interestingly, as already suggested in Section~\ref{section:optgtm}, OptGTM cannot prevent manipulation in the feature space (see Figure~\ref{fig:feature}). However, $\optgtm$ does manage to bound the effect of the manipulation on the regret (Figure~\ref{fig:epoch_regret}) {and}, most importantly, the effect on the utility of the arms as well(Figure~\ref{fig:util}). As a result, the arms are discouraged from gaming their contexts heavily and the context manipulation has only a minor effect on the actions taken by OptGTM.



\section{Discussion} 
We study a strategic variant of the linear contextual bandit problem, where the arms strategically misreport privately observed contexts to maximize their selection frequency. 
To address this, we design two online learning mechanisms: the Greedy and the Optimistic Grim Trigger Mechanism, for the scenario where $\theta^*$ is known and unknown, respectively. 
We demonstrate that our mechanisms incentivize the arms to be approximately truthful and, in doing so, effectively minimize regret. 
More generally, with this work, we aim to advance our understanding of problems at the intersection of online learning and mechanism design. As the digital landscape, including online platforms and marketplaces, becomes increasingly agentic and dominated by self-interested agents, it will be crucial to understand the incentives created by learning algorithms and to align these incentives while optimizing for performance.  

\paragraph{Limitations.} 
One limitation is the otherwise intuitive assumption that the arms do not under-report their value to the learner (Assumption~\ref{assumption:1} and Assumption~\ref{assumption:2}). Secondly, we believe that the factor of $K^2$ in the universal regret guarantees of Theorem~\ref{theorem:ggtm} and Theorem~\ref{theorem:opt_grim_trigger} is suboptimal and we conjecture that the optimal worst-case strategic regret is given by $\smash{\cO(d\sqrt{KT})}$.   
We leave this investigation for future work. 

\ifneurips \else \section*{Acknowledgements}
Thomas Kleine Buening is supported by the UKRI Prosperity Partnership Scheme (Project FAIR). Haifeng Xu is supported in part by the Army Research Office Award W911NF-23-1-0030, the ONR Award N00014-23-1-2802 and the NSF Award CCF-2303372.  \fi

\bibliography{ref}

\begin{thebibliography}{45}
\providecommand{\natexlab}[1]{#1}
\providecommand{\url}[1]{\texttt{#1}}
\expandafter\ifx\csname urlstyle\endcsname\relax
  \providecommand{\doi}[1]{doi: #1}\else
  \providecommand{\doi}{doi: \begingroup \urlstyle{rm}\Url}\fi

\bibitem[Abbasi-Yadkori et~al.(2011)Abbasi-Yadkori, P{\'a}l, and
  Szepesv{\'a}ri]{abbasi2011improved}
Yasin Abbasi-Yadkori, D{\'a}vid P{\'a}l, and Csaba Szepesv{\'a}ri.
\newblock Improved algorithms for linear stochastic bandits.
\newblock \emph{Advances in neural information processing systems}, 24, 2011.

\bibitem[Auer(2002)]{auer2002using}
Peter Auer.
\newblock Using confidence bounds for exploitation-exploration trade-offs.
\newblock \emph{Journal of Machine Learning Research}, 3\penalty0
  (Nov):\penalty0 397--422, 2002.

\bibitem[Bogunovic et~al.(2021)Bogunovic, Losalka, Krause, and
  Scarlett]{bogunovic2021stochastic}
Ilija Bogunovic, Arpan Losalka, Andreas Krause, and Jonathan Scarlett.
\newblock Stochastic linear bandits robust to adversarial attacks.
\newblock In \emph{International Conference on Artificial Intelligence and
  Statistics}, pages 991--999. PMLR, 2021.

\bibitem[Braverman et~al.(2019)Braverman, Mao, Schneider, and
  Weinberg]{braverman2019multi}
Mark Braverman, Jieming Mao, Jon Schneider, and S~Matthew Weinberg.
\newblock Multi-armed bandit problems with strategic arms.
\newblock In \emph{Conference on Learning Theory}, pages 383--416. PMLR, 2019.

\bibitem[Buening et~al.(2023)Buening, Saha, Dimitrakakis, and
  Xu]{buening2023bandits}
Thomas~Kleine Buening, Aadirupa Saha, Christos Dimitrakakis, and Haifeng Xu.
\newblock Bandits meet mechanism design to combat clickbait in online
  recommendation.
\newblock In \emph{The Twelfth International Conference on Learning
  Representations}, 2023.

\bibitem[Chu et~al.(2011)Chu, Li, Reyzin, and Schapire]{chu2011contextual}
Wei Chu, Lihong Li, Lev Reyzin, and Robert Schapire.
\newblock Contextual bandits with linear payoff functions.
\newblock In \emph{Proceedings of the Fourteenth International Conference on
  Artificial Intelligence and Statistics}, pages 208--214. JMLR Workshop and
  Conference Proceedings, 2011.

\bibitem[Dean et~al.(2024)Dean, Dong, Jagadeesan, and Leqi]{dean2024accounting}
Sarah Dean, Evan Dong, Meena Jagadeesan, and Liu Leqi.
\newblock Accounting for ai and users shaping one another: The role of
  mathematical models.
\newblock \emph{arXiv preprint arXiv:2404.12366}, 2024.

\bibitem[Ding et~al.(2022)Ding, Hsieh, and Sharpnack]{ding2022robust}
Qin Ding, Cho-Jui Hsieh, and James Sharpnack.
\newblock Robust stochastic linear contextual bandits under adversarial
  attacks.
\newblock In \emph{International Conference on Artificial Intelligence and
  Statistics}, pages 7111--7123. PMLR, 2022.

\bibitem[Dong et~al.(2022)Dong, Li, Li, and Wang]{dong2022combinatorial}
Jing Dong, Ke~Li, Shuai Li, and Baoxiang Wang.
\newblock Combinatorial bandits under strategic manipulations.
\newblock In \emph{Proceedings of the Fifteenth ACM International Conference on
  Web Search and Data Mining}, pages 219--229, 2022.

\bibitem[Dong et~al.(2018)Dong, Roth, Schutzman, Waggoner, and
  Wu]{dong2018strategic}
Jinshuo Dong, Aaron Roth, Zachary Schutzman, Bo~Waggoner, and Zhiwei~Steven Wu.
\newblock Strategic classification from revealed preferences.
\newblock In \emph{Proceedings of the 2018 ACM Conference on Economics and
  Computation}, pages 55--70, 2018.

\bibitem[Esmaeili et~al.(2023{\natexlab{a}})Esmaeili, Hajiaghayi, and
  Shin]{esmaeili2023replication}
Seyed Esmaeili, MohammadTaghi Hajiaghayi, and Suho Shin.
\newblock Replication-proof bandit mechanism design.
\newblock \emph{arXiv preprint arXiv:2312.16896}, 2023{\natexlab{a}}.

\bibitem[Esmaeili et~al.(2023{\natexlab{b}})Esmaeili, Shin, and
  Slivkins]{esmaeili2023robust}
Seyed~A Esmaeili, Suho Shin, and Aleksandrs Slivkins.
\newblock Robust and performance incentivizing algorithms for multi-armed
  bandits with strategic agents.
\newblock \emph{arXiv preprint arXiv:2312.07929}, 2023{\natexlab{b}}.

\bibitem[Feng et~al.(2020)Feng, Parkes, and Xu]{feng2020intrinsic}
Zhe Feng, David Parkes, and Haifeng Xu.
\newblock The intrinsic robustness of stochastic bandits to strategic
  manipulation.
\newblock In \emph{International Conference on Machine Learning}, pages
  3092--3101. PMLR, 2020.

\bibitem[Freeman et~al.(2020)Freeman, Pennock, Podimata, and
  Vaughan]{freeman2020no}
Rupert Freeman, David Pennock, Chara Podimata, and Jennifer~Wortman Vaughan.
\newblock No-regret and incentive-compatible online learning.
\newblock In \emph{International Conference on Machine Learning}, pages
  3270--3279. PMLR, 2020.

\bibitem[Gast et~al.(2020)Gast, Ioannidis, Loiseau, and
  Roussillon]{gast2020linear}
Nicolas Gast, Stratis Ioannidis, Patrick Loiseau, and Benjamin Roussillon.
\newblock Linear regression from strategic data sources.
\newblock \emph{ACM Transactions on Economics and Computation (TEAC)},
  8\penalty0 (2):\penalty0 1--24, 2020.

\bibitem[Ghosh and Hummel(2013)]{ghosh2013learning}
Arpita Ghosh and Patrick Hummel.
\newblock Learning and incentives in user-generated content: Multi-armed
  bandits with endogenous arms.
\newblock In \emph{Proceedings of the 4th conference on Innovations in
  Theoretical Computer Science}, pages 233--246, 2013.

\bibitem[Hardt et~al.(2016)Hardt, Megiddo, Papadimitriou, and
  Wootters]{hardt2016strategic}
Moritz Hardt, Nimrod Megiddo, Christos Papadimitriou, and Mary Wootters.
\newblock Strategic classification.
\newblock In \emph{Proceedings of the 2016 ACM conference on innovations in
  theoretical computer science}, pages 111--122, 2016.

\bibitem[Harris et~al.(2022)Harris, Ngo, Stapleton, Heidari, and
  Wu]{harris2022strategic}
Keegan Harris, Dung Daniel~T Ngo, Logan Stapleton, Hoda Heidari, and Steven Wu.
\newblock Strategic instrumental variable regression: Recovering causal
  relationships from strategic responses.
\newblock In \emph{International Conference on Machine Learning}, pages
  8502--8522. PMLR, 2022.

\bibitem[Harris et~al.(2023)Harris, Podimata, and Wu]{harris2023strategic}
Keegan Harris, Chara Podimata, and Steven~Z Wu.
\newblock Strategic apple tasting.
\newblock \emph{Advances in Neural Information Processing Systems},
  36:\penalty0 79918--79945, 2023.

\bibitem[He et~al.(2022)He, Zhou, Zhang, and Gu]{he2022nearly}
Jiafan He, Dongruo Zhou, Tong Zhang, and Quanquan Gu.
\newblock Nearly optimal algorithms for linear contextual bandits with
  adversarial corruptions.
\newblock \emph{arXiv preprint arXiv:2205.06811}, 2022.

\bibitem[Hron et~al.(2022)Hron, Krauth, Jordan, Kilbertus, and
  Dean]{hron2022modeling}
Jiri Hron, Karl Krauth, Michael~I Jordan, Niki Kilbertus, and Sarah Dean.
\newblock Modeling content creator incentives on algorithm-curated platforms.
\newblock \emph{arXiv preprint arXiv:2206.13102}, 2022.

\bibitem[Hu et~al.(2023)Hu, Jagadeesan, Jordan, and
  Steinhard]{hu2023incentivizing}
Xinyan Hu, Meena Jagadeesan, Michael~I Jordan, and Jacob Steinhard.
\newblock Incentivizing high-quality content in online recommender systems.
\newblock \emph{arXiv preprint arXiv:2306.07479}, 2023.

\bibitem[Jagadeesan et~al.(2024)Jagadeesan, Garg, and
  Steinhardt]{jagadeesan2024supply}
Meena Jagadeesan, Nikhil Garg, and Jacob Steinhardt.
\newblock Supply-side equilibria in recommender systems.
\newblock \emph{Advances in Neural Information Processing Systems}, 36, 2024.

\bibitem[Lattimore and Szepesv{\'a}ri(2020)]{lattimore2020bandit}
Tor Lattimore and Csaba Szepesv{\'a}ri.
\newblock \emph{Bandit algorithms}.
\newblock Cambridge University Press, 2020.

\bibitem[Li et~al.(2010)Li, Chu, Langford, and Schapire]{li2010contextual}
Lihong Li, Wei Chu, John Langford, and Robert~E Schapire.
\newblock A contextual-bandit approach to personalized news article
  recommendation.
\newblock In \emph{Proceedings of the 19th international conference on World
  wide web}, pages 661--670, 2010.

\bibitem[Liu and Chen(2016)]{liu2016bandit}
Yang Liu and Yiling Chen.
\newblock A bandit framework for strategic regression.
\newblock \emph{Advances in Neural Information Processing Systems}, 29, 2016.

\bibitem[Liu and Ho(2018)]{liu2018incentivizing}
Yang Liu and Chien-Ju Ho.
\newblock Incentivizing high quality user contributions: New arm generation in
  bandit learning.
\newblock In \emph{Proceedings of the AAAI Conference on Artificial
  Intelligence}, volume~32, 2018.

\bibitem[Macy and Flache(2002)]{macy2002learning}
Michael~W Macy and Andreas Flache.
\newblock Learning dynamics in social dilemmas.
\newblock \emph{Proceedings of the National Academy of Sciences}, 99\penalty0
  (suppl\_3):\penalty0 7229--7236, 2002.

\bibitem[Malaga(2008)]{malaga2008worst}
Ross~A Malaga.
\newblock Worst practices in search engine optimization.
\newblock \emph{Communications of the ACM}, 51\penalty0 (12):\penalty0
  147--150, 2008.

\bibitem[Mansour et~al.(2015)Mansour, Slivkins, and
  Syrgkanis]{mansour2015bayesian}
Yishay Mansour, Aleksandrs Slivkins, and Vasilis Syrgkanis.
\newblock Bayesian incentive-compatible bandit exploration.
\newblock In \emph{Proceedings of the Sixteenth ACM Conference on Economics and
  Computation}, pages 565--582, 2015.

\bibitem[Pagan et~al.(2023)Pagan, Baumann, Elokda, De~Pasquale, Bolognani, and
  Hann{\'a}k]{pagan2023classification}
Nicol{\`o} Pagan, Joachim Baumann, Ezzat Elokda, Giulia De~Pasquale, Saverio
  Bolognani, and Anik{\'o} Hann{\'a}k.
\newblock A classification of feedback loops and their relation to biases in
  automated decision-making systems.
\newblock In \emph{Proceedings of the 3rd ACM Conference on Equity and Access
  in Algorithms, Mechanisms, and Optimization}, pages 1--14, 2023.

\bibitem[Prawesh and Padmanabhan(2014)]{prawesh2014most}
Shankar Prawesh and Balaji Padmanabhan.
\newblock The “most popular news” recommender: Count amplification and
  manipulation resistance.
\newblock \emph{Information Systems Research}, 25\penalty0 (3):\penalty0
  569--589, 2014.

\bibitem[Resnick and Sami(2007)]{resnick2007influence}
Paul Resnick and Rahul Sami.
\newblock The influence limiter: provably manipulation-resistant recommender
  systems.
\newblock In \emph{Proceedings of the 2007 ACM conference on Recommender
  systems}, pages 25--32, 2007.

\bibitem[Rosenfeld and Rosenfeld(2023)]{rosenfeld2023one}
Elan Rosenfeld and Nir Rosenfeld.
\newblock One-shot strategic classification under unknown costs.
\newblock \emph{arXiv preprint arXiv:2311.02761}, 2023.

\bibitem[Shin et~al.(2022)Shin, Lee, and Ok]{shin2022multi}
Suho Shin, Seungjoon Lee, and Jungseul Ok.
\newblock Multi-armed bandit algorithm against strategic replication.
\newblock In \emph{International Conference on Artificial Intelligence and
  Statistics}, pages 403--431. PMLR, 2022.

\bibitem[Sundaram et~al.(2023)Sundaram, Vullikanti, Xu, and
  Yao]{sundaram2023pac}
Ravi Sundaram, Anil Vullikanti, Haifeng Xu, and Fan Yao.
\newblock Pac-learning for strategic classification.
\newblock \emph{Journal of Machine Learning Research}, 24\penalty0
  (192):\penalty0 1--38, 2023.

\bibitem[Tewari and Murphy(2017)]{tewari2017ads}
Ambuj Tewari and Susan~A Murphy.
\newblock From ads to interventions: Contextual bandits in mobile health.
\newblock \emph{Mobile health: sensors, analytic methods, and applications},
  pages 495--517, 2017.

\bibitem[Wang et~al.(2024)Wang, Gao, Yu, Ma, Yin, and Sadiq]{wang2024poisoning}
Zongwei Wang, Min Gao, Junliang Yu, Hao Ma, Hongzhi Yin, and Shazia Sadiq.
\newblock Poisoning attacks against recommender systems: A survey.
\newblock \emph{arXiv preprint arXiv:2401.01527}, 2024.

\bibitem[Wei et~al.(2022)Wei, Dann, and Zimmert]{wei2022model}
Chen-Yu Wei, Christoph Dann, and Julian Zimmert.
\newblock A model selection approach for corruption robust reinforcement
  learning.
\newblock In \emph{International Conference on Algorithmic Learning Theory},
  pages 1043--1096. PMLR, 2022.

\bibitem[Yao et~al.(2023)Yao, Li, Nekipelov, Wang, and Xu]{yao2023bad}
Fan Yao, Chuanhao Li, Denis Nekipelov, Hongning Wang, and Haifeng Xu.
\newblock How bad is top-$ k $ recommendation under competing content creators?
\newblock In \emph{International Conference on Machine Learning}, pages
  39674--39701. PMLR, 2023.

\bibitem[Yao et~al.(2024{\natexlab{a}})Yao, Li, Sankararaman, Liao, Zhu, Wang,
  Wang, and Xu]{yao2024rethinking}
Fan Yao, Chuanhao Li, Karthik~Abinav Sankararaman, Yiming Liao, Yan Zhu, Qifan
  Wang, Hongning Wang, and Haifeng Xu.
\newblock Rethinking incentives in recommender systems: Are monotone rewards
  always beneficial?
\newblock \emph{Advances in Neural Information Processing Systems}, 36,
  2024{\natexlab{a}}.

\bibitem[Yao et~al.(2024{\natexlab{b}})Yao, Liao, Wu, Li, Zhu, Yang, Wang, Xu,
  and Wang]{yao2024user}
Fan Yao, Yiming Liao, Mingzhe Wu, Chuanhao Li, Yan Zhu, James Yang, Qifan Wang,
  Haifeng Xu, and Hongning Wang.
\newblock User welfare optimization in recommender systems with competing
  content creators.
\newblock \emph{arXiv preprint arXiv:2404.18319}, 2024{\natexlab{b}}.

\bibitem[Yu et~al.(2022)Yu, Yang, and Fan]{yu2022strategic}
Mengxin Yu, Zhuoran Yang, and Jianqing Fan.
\newblock Strategic decision-making in the presence of information asymmetry:
  Provably efficient rl with algorithmic instruments.
\newblock \emph{arXiv preprint arXiv:2208.11040}, 2022.

\bibitem[Zhang and Conitzer(2021)]{zhang2021incentive}
Hanrui Zhang and Vincent Conitzer.
\newblock Incentive-aware pac learning.
\newblock In \emph{Proceedings of the AAAI Conference on Artificial
  Intelligence}, volume~35, pages 5797--5804, 2021.

\bibitem[Zhao et~al.(2021)Zhao, Zhou, and Gu]{zhao2021linear}
Heyang Zhao, Dongruo Zhou, and Quanquan Gu.
\newblock Linear contextual bandits with adversarial corruptions.
\newblock \emph{arXiv preprint arXiv:2110.12615}, 2021.

\end{thebibliography}

\newpage
\appendix


\section{Remarks on Incentive-Compatible No-Regret Algorithms} 
\label{appendix:incentive_compatible}

In Section~\ref{section:setting}, we conjectured that there exists no incentive-compatible no-regret algorithm when the reward observations are subject to noise and $\theta^*$ unknown. 
For the interested reader, we here consider the \emph{fully deterministic case} where $\theta^*$ is known a priori and reward observations are directly observable, i.e., subject to no noise so that $\eta_t \equiv 0$. We can design the following provably {incentive-compatible} {no-regret} algorithm.  
In fact, we show that this mechanism is strategyproof, i.e., incentive-compatible in weakly dominant strategies.

\SetAlgorithmName{Mechanism}{}{}
\RestyleAlgo{ruled}
\begin{algorithm}[H]
\setstretch{1.2}
\caption{Incentive-Compatible No-Regret Algorithm in the Fully Deterministic Case}
\textbf{initialize:} $A_1 = [K]$ \\ 
\For{$t < T-(K+1)$}{

Play $i_t \in \argmax_{i \in A_t} \langle \theta^*, x_{t,i} \rangle$ \\

Observe reward $r_{t,i_t}^* := \langle \theta^*, x_{t,i_t}^* \rangle$ \quad (i.e., rewards of chosen arms are directly observable) \\
\If{$\langle \theta^*, x_{t,i_t} \rangle \neq r_{t,i_t}^*$}
{
Eliminate arm $i_t$: $A_{t+1} \leftarrow A_{t} \setminus \{i_t\}$. 
}
}
\For{$t \geq T-(K+1)$}
{
Play $i_t \sim \Uniform(A_t)$ \\
Observe reward $r_{t,i_t}^* := \langle \theta^*, x_{t,i_t}^* \rangle$ \\
\If{$\langle \theta^*, x_{t,i_t} \rangle \neq r_{t,i_t}^*$}
{
Eliminate arm $i_t$: $A_{t+1} \leftarrow A_{t} \setminus \{i_t\}$. 
}
}
\end{algorithm}

\begin{lemma}
    Mechanism~3 is strategyproof, i.e., being truthful is a weakly dominant strategy for every arm. Moreover, Mechanism~3 suffers at most $K+1$ strategic regret in every Nash equilibrium of the arms.\footnote{Note that since truthfulness is only weakly dominant there could be other Nash equilibria.}  
\end{lemma}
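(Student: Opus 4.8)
The plan is to establish the two claims in turn; strategyproofness is where essentially all the work lies. First I would make the reduction that Mechanism~3 uses a reported context $x_{t,i}$ only through the scalar $\langle\theta^*,x_{t,i}\rangle$ — it enters the selection rule $\argmax_{i\in A_t}\langle\theta^*,x_{t,i}\rangle$ and the elimination test $\langle\theta^*,x_{t,i_t}\rangle\neq\langle\theta^*,x^*_{t,i_t}\rangle$ and nowhere else, the test being exact because the rewards of played arms are noiseless. So I treat arm $i$ as reporting a scalar $v_{t,i}$ against its true value $v^*_{t,i}\defeq\langle\theta^*,x^*_{t,i}\rangle$, and (invoking Assumption~\ref{assumption:1}, which I take to be maintained in this deterministic case — without it one can manufacture adversarial $\sigma_{-i}$ against which under-reporting pays) with $v_{t,i}\ge v^*_{t,i}$. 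Two structural facts then drive the argument: (i) a truthful arm is never eliminated, as its elimination test can never fire; and (ii) an over-report is ``all or nothing'' within a round — if it fails to make arm $i$ the played arm (not the reported $\argmax$ in a greedy round, resp.\ not picked by the uniform coin in a late round), then arm $i$ is not played \emph{and} the entire round unfolds exactly as it would have under truthful reporting (who is played, which reward appears, hence the next active set), since $v^*_{t,i}\le v_{t,i}$ is then also not the maximum; and if it does make arm $i$ the played arm, then arm $i$ gains exactly one pull and is eliminated for all remaining rounds.

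Armed with (i)--(ii), I would couple the run under an arbitrary strategy $\sigma_i$ (respecting Assumption~\ref{assumption:1}) with the run under the truthful $\sigma^*_i$, sharing the coins of $\sigma_{-i}$ and of the mechanism, and let $\tau$ be the first round in which $\sigma_i$ over-reports \emph{and} is played. By (ii) the two runs produce an identical public transcript of played arms and rewards — hence the other arms, who condition only on that transcript, behave identically — up to round $\tau$. If $\tau=\infty$ the two runs award arm $i$ the same number of pulls; if $\tau<\infty$, then under $\sigma_i$ arm $i$ ends with exactly $n_{\tau-1}(i)+1$ pulls, whereas under $\sigma^*_i$ it is still active from round $\tau$ onward, and since the final $K+1$ rounds select uniformly from an active set of size at most $K$ always containing the truthful arm $i$, under $\sigma^*_i$ arm $i$ collects at least $(K+1)/K>1$ further pulls in expectation (plus the round-$\tau$ pull itself when $\tau$ is a uniform round, by coin-coupling), so $\E[n_T(i)\mid\sigma^*_i,\sigma_{-i}]\ge n_{\tau-1}(i)+1$ as well. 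Taking expectations gives $\E[n_T(i)\mid\sigma^*_i,\sigma_{-i}]\ge\E[n_T(i)\mid\sigma_i,\sigma_{-i}]$ for every $\sigma_{-i}$ and $\sigma_i$, which is exactly weak dominance of truthfulness.

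For the regret bound I would observe that when the round $\tau<\infty$ above lies in the greedy phase, the last inequality is \emph{strict} — by at least $1/K$ in expectation, since $\sigma_i$ then forfeits the whole uniform phase. Hence in any $\bsigma\in\NE(\text{Mechanism 3})$ no arm over-reports-and-is-played during the greedy phase with positive probability, as switching to $\sigma^*_i$ would be strictly profitable. In particular no arm is eliminated before the uniform phase, so $A_t=[K]$ throughout it; moreover whenever arm $i_t$ is played in a greedy round it must have reported truthfully, $v_{t,i_t}=v^*_{t,i_t}$, and together with Assumption~\ref{assumption:1} this forces $v^*_{t,i_t}=v_{t,i_t}\ge v_{t,j}\ge v^*_{t,j}$ for every $j$, i.e.\ $i_t$ is a round-optimal arm and the per-round regret is zero throughout the greedy phase. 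All regret is therefore confined to the $K+1$ final uniform rounds, each contributing at most $\max_{i,j\in[K]}\langle\theta^*,x^*_{t,i}-x^*_{t,j}\rangle\le 1$, which yields $R_T(\text{Mechanism 3},\bsigma)\le K+1$ for every $\bsigma\in\NE$.

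The hard part will be fact (ii) together with the transcript/coupling argument: I need to verify carefully that a manipulation which does not win its round leaves the \emph{entire} downstream trajectory untouched — including every other arm's history-dependent future reports — so that the only thing any deviation can ever buy is a single extra pull at the price of permanent elimination, and then that a truthful (hence never-eliminated) arm's guaranteed continuation value, which stems precisely from the $\Theta(K)$ reserved uniform-exploration rounds, strictly exceeds that single pull. Ties in the $\argmax$, which can occur in the worst case, will require fixing a tie-breaking rule to keep the coupling clean but should not change the conclusion.
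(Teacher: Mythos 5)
Your proposal follows the same accounting as the paper's own proof: a truthful arm is never eliminated (the test is exact because rewards are noiseless) and is therefore guaranteed at least $(K+1)/K>1$ expected pulls from the reserved uniform phase, whereas a misreport by a \emph{played} arm buys at most one pull at the cost of that entire continuation; the regret bound then follows because in any NE no selected arm misreports during the greedy phase, so the greedy choice is round-optimal there and all regret is confined to the last $K+1$ rounds, each contributing at most $1$. Where you genuinely diverge is in making the counterfactual comparison honest. The paper simply asserts that ``any untruthful strategy can obtain at most one more allocation than the truthful strategy,'' which silently ignores that a misreport which \emph{loses} its round can change who is played, hence the public transcript, hence every other arm's history-dependent future reports. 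Your transcript coupling together with the ``all or nothing'' observation closes exactly this hole---but only because you import Assumption~\ref{assumption:1}: if $\langle\theta^*,x_{t,i}\rangle\ge\langle\theta^*,x_{t,i}^*\rangle$, a losing over-report leaves the round's winner, and therefore the entire downstream trajectory, unchanged, whereas an un-selected under-report need not.

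This is the one substantive deviation from the statement as written: the lemma (and the paper's proof) invokes no such restriction, so what you establish is weak dominance of truthfulness only within the class of non-under-reporting strategies. Your side remark is nonetheless well taken: against a non-equilibrium $\sigma_{-i}$ that conditions on the identity of an early winner (e.g., arm $j$ reports truthfully forever if it won round $1$ and over-reports forever otherwise), deliberately ceding round $1$ by under-reporting can strictly increase arm $i$'s total pulls, so the unrestricted claim is delicate and the paper's one-line argument does not actually cover it. To match the lemma verbatim you would have to either confront that divergence case directly or state the restriction explicitly. The remaining pieces---the strict $1/K$ gain that forces selected arms to report truthfully in any NE, Assumption~\ref{assumption:1} upgrading ``the played arm is the reported argmax and truthful'' to ``the played arm is the true argmax,'' and the trivial $K+1$ bound on the uniform phase---are correct and mirror the paper.
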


\begin{proof}
    \textbf{Incentive-Compatibility in Weakly Dominant Strategies. }  
    It is easy to see that for the last $K+1$ rounds, reporting truthfully, i.e., reporting $x_{t,i}^*$, is a weakly dominant strategy, since the the set of active arms is played uniformly and nothing can be gained from misreporting (an arm can only miss out on being selected by misreporting in the last $K+1$ steps).
    Hence, conditional on any history, reporting truthfully is the best continuation for any arm. 
    In particular, when an arm plays truthfully in these rounds the obtained utility in the last $K+1$ steps is at least $\frac{K+1}{K}$, since $|A_t| \leq K$. 

    Now, for the time steps $t < T-(K+1)$ note that any untruthful strategy can obtain at most one more allocation than the truthful strategy, because if $i_t=i$ and $\langle \theta^*, x_{t,i} \rangle > \langle \theta^*, x_{t,i}^*\rangle$, then arm $i$ is eliminated immediately. Hence, at most utility $1$ can be gained from receiving an allocation by misreporting. However, in this case the arm gets eliminated and receives utility $0$ in the last $K+1$ rounds. As seen before the minimum utility the truthful strategy receives in the last $K+1$ receives is $\frac{K+1}{K} > 1$. Consequently, irrespective of the other arms strategies, the truthful strategy is (weakly) optimal for arm $i$.
    
    One may wonder why the truthful strategy is not \emph{strictly} dominant. To see this note that reporting any $x_{t,i} \neq x_{t,i}^*$ such that the difference $x_{t,i} - x_{t,i}^*$ is orthogonal to $\theta^*$, i.e., $\langle \theta^*, x_{t,i} - x^*_{t,i} \rangle = 0$, does not cause elimination and is equivalent under Mechanism~3. In other words, such untruthful strategies, which however have no effect on the selection, are equally good. 

    \textbf{Regret. } The regret in the last $K+1$ rounds is trivially bounded by $K+1$. When showing that the algorithm is strategyproof we showed that any untruthful strategy such that there exists $i_t=i$ with $\langle \theta^*, x_{t,i} \rangle > \langle \theta^*, x_{t,i}^*\rangle$ is worse than the truthful strategy independently from what the other arms are playing. Hence, in any Nash equilibrium arm $i$ chooses strategies such that if $i_t= i$ then $\langle \theta^*, x_{t,i} \rangle = \langle \theta^*, x_{t,i}^*\rangle$. In other words, since the selection is greedy, Mechanism~3 selected the best arm in the given round. Mechanism~3 therefore suffers zero regret in the first $T-(K+1)$ regret in any Nash equilibrium of the arms.  
\end{proof} 

As discussed in Section~\ref{section:setting}, we conjecture that there exists no incentive-compatible no-regret algorithm for the strategic linear contextual bandits when the reward observations are subject to noise. The intuition for this conjecture is as follows. Suppose there exists a learning algorithm $M$ that is incentive-compatible and no-regret, that is, the strategy profile where every arm is \emph{always} truthful is a NE. Since $M$ is also no-regret, the selection policy of $M$ must depend on the reported contexts in some way. In particular, in some round $t$ in which $M$ does not select arm $i$---but $M$ maps from reported contexts to an action in $[K]$---there must exist a context $\tilde x_t$ that arm $i$ could report that increases its probability of being selected. 

Suppose arm $i$ changes its strategy from $\sigma_i^*$ (i.e., being truthful) to the strategy that is always truthful except for round $t$ where it reports $\tilde x_t$ instead of $x_{t,i}^*$. The algorithm $M$ then observes a reward drawn from a distribution with mean $\langle \theta^*, x_{t,i}^* \rangle$, but might have expected a reward drawn from a distribution with mean $\langle \theta^*, \tilde x_t \rangle$. We believe that the difference in observed and expected reward is statistically insignificant when arm $i$ only misreports a single or constant number of times. 
However, due to the intricate relationship between the learning algorithm and the induced NE strategies for the $K$ arms, providing a rigorous argument for this is challenging.

\section{Proof of Proposition~\ref{lemma:incentive_unaware_greedy}} 
\label{appendix:proof_incentive_unaware}

\begin{proof}[Proof of Proposition~\ref{lemma:incentive_unaware_greedy}]
    We begin with the incentive-unaware greedy algorithm that in round $t$ pulls arm $i_t = \argmax_{i \in [K]} \langle \theta^*, x_{t,i} \rangle$. Let $\tilde x \defeq \argmax_{\lVert x \rVert \leq 1} \langle \theta^*, x \rangle$ and w.l.o.g.\ we assume that $\tilde x$ is unique.  
    We show that the strategy profile, where every arm always reports $\tilde x$ is the only Nash equilibrium under the incentive-unaware greedy algorithm. Let $\bsigma$ be any strategy profile which is such that there exists a round $t$ and arm $i$ such that $x_{t,i} \neq \tilde x$. We distinguish between two cases. 

    \paragraph{Case 1:} There exists a round $t$ and arm $i$ such that $\langle \theta^*, x_{t,i} \rangle < \max_{j\in [K]} \langle \theta^*, x_{t,j} \rangle$. 
    
    Note that this implies that arm $i$ is not selected by the learner. However, by reporting $\tilde x$ instead of $x_{t,i}$, arm $i$ is guaranteed to be selected with probability at least $1/K$. Hence, reporting $x_{t,i}$ is strictly worse than reporting $\tilde x$ so that $\bsigma$ cannot be a NE. 

    \paragraph{Case 2:} $\langle \theta^*, x_{t,i} \rangle = \max_{j \in [K]} \langle \theta^*, x_{t,j} \rangle$ for all rounds $t$ and arms $i$. 

    Note that this implies that each arm $i$ is selected with probability $1/K$ every round.\footnote{This already implies linear regret, but it will be instructive to still show that the only NE is in maximally gaming strategies.} 
    Suppose that for any of these rounds $t$ we have $\max_{j\in [K]}\langle \theta^*, x_{t,j} \rangle < \langle \theta^*, \tilde x \rangle$. Then, by reporting $\tilde x$ instead of $x_{t,i}$ arm $i$ could ensure to be selected with probability one. Hence, the strategy where arm $i$ in round $t$ reports $\tilde x$  instead of $x_{t,i}$ is a strictly better response. Therefore, $\bsigma$ cannot be a NE. 
    The other case is when $\max_{j\in [K]}\langle \theta^*, x_{t,j} \rangle = \langle \theta^*, \tilde x \rangle$, but this cannot be because $\bsigma$ is supposed to be different to always reporting $\tilde x$. 

    Consequently, the strategy profile where every arm always reports $\tilde x$ is the only NE under the incentive-unaware greedy algorithm. Under this strategy profile, the incentive-unaware greedy algorithm will play uniformly and therefore suffer linear regret.

    

    \paragraph{Failures of Algorithms for Non-Strategic Linear Contextual Bandits.} It is not really surprising that algorithms for non-strategic linear contextual bandits fail in the strategic linear contextual bandits, since such algorithms implicitly incentivize the arms to ``compete'' in every round by misreporting their context to the best possible one. Nothing prevents the arms to not myopically optimize their probability of being selected every round. 
    As an example of a standard algorithm for non-strategic linear contextual bandits we consider LinUCB that in the non-strategic problem setup enjoys a regret guarantee of $\tilde \cO(d \sqrt{T})$. 
    
    The reasons for LinUCB's failure in this strategic problem are the same as for the incentive-unaware greedy algorithm from before. It will be the strictly dominant strategy for the arms to maximize their selection probability in the given round by misreporting their context. Recall that LinUCB maintains a least-squares estimator given by 
     \begin{align*}
        \hat \theta_t = \argmin_{\theta \in \Reals^d} \sum_{\ell=1}^{t-1} (\langle \theta, x_{\ell,i_\ell} \rangle - r_{\ell, i_\ell})^2 + \lambda \lVert \theta \rVert_2^2    
     \end{align*}
     and in round $t$ selects arm (ties broken uniformly at random) 
     \begin{align*}
         i_t = \argmax_{i \in [K]} \langle \hat \theta_t, x_{t,i} \rangle + \sqrt{\beta_t} \lVert x_{t,i} \rVert_{V_{t}^{-1}},
     \end{align*}  
     where $\beta_t \approx d \log(T)$ and $V_{t} =  \lambda I + \sum_{i=1}^{t-1} x_{\ell,i_\ell} x_{\ell,i}^\top$. Let $\ucb_{t}(x) \defeq \langle \hat \theta_t, x \rangle + \sqrt{\beta_t} \lVert x \rVert_{V_t^{-1}}$. 

    The argument for LinUCB will be the same as for the incentive-unaware greedy algorithm. 
    Let $\tilde x_{t} \defeq \argmax_{ \lVert x \rVert_2 \leq 1} \ucb_t(x)$ 
    and w.l.o.g.\ assume that $\tilde x_t$ is unique. 
    
    Importantly, in what follows, keep in mind that it will not matter how the reports of arm $i$ influenced $\smash{\tilde \theta_t}$ or $V_t$ in previous rounds. 
    Once again, suppose $\bsigma$ is a strategy profile such that there exists a round $t$ and arm $i$ such that conditioned on the history $x_{t,i} \neq \tilde x_t$. Once again we distinguish between the following two cases: 

    \paragraph{Case 1:} There exists a round $t$ and arm $i$ such that $\ucb_t(x_{t,i}) < \max_{j \in [K]} \ucb_t(x_{t,j})$. 

    This implies that arm $i$ was not selected by the learner in round $t$. However, by reporting $\tilde x_t$ and in all future rounds report $\tilde x_{\ell}$ for $\ell > t$, arm $i$ can guarantee to be selected with probability at least $1/K$ in round $t$ and at least as many selections as under $\sigma_i$. Hence, $\sigma_i$ cannot be a best response to $\sigma_{-i}$.  

    \paragraph{Case 2:} $\ucb_t(x_{t,i}) = \max_{j \in [K]} \ucb_t(x_{t,j})$ for all rounds $t$ and arms $i$. 

     Note that this implies that arm $i$ is selected with probability $1/K$ every round. Suppose that for any round $t$ it is the case that $\max_{j \in [K]} \ucb_t(x_{t,j}) < \ucb_t(\tilde x)$. Then, by choosing strategy $\tilde x_t$ in round $t$ and $\tilde x_{\ell}$ adaptively for all future rounds $\ell > t$, arm $i$ obtains more selections than when reporting $x_{t,i}$. Hence, $\bsigma$ cannot be a NE. 

     As a result, the strategy profile where all arms report $\tilde x_t$ in round $t$ is the only NE and LinUCB suffers linear regret, as it pulls arms uniformly at random. 
     In exactly the same way, we can also show that the algorithms for linear contextual bandits with adversarial context corruptions in \cite{ding2022robust} suffer linear regret. 
\end{proof}

\section{Assumption~\ref{assumption:1} and Remark~\ref{remark:assumption_1}} \label{appendix:assumption_1}

\begin{example}
    We here give a simple example where a strategic arm can simulate a situation where it is always optimal even though it is only optimal half of the time. 

Let $\theta^* = 1$ and consider the following problem instance with two arms $1$ and $2$, where 
\begin{align*}
    x_{t,1}^* = 
    \begin{cases}
        0, \text{ $t$ is even} \\
        1, \text{ $t$ is odd}
    \end{cases} 
    \quad \text{ and } \quad x_{t, 2}^* = 1/4. 
\end{align*}
Now, suppose that arm $1$ always reports $x_{t,1} = 1/2$ and arm $2$ reports truthfully (or approximately so). Then, arm $1$ appears optimal every round $t$. In particular, on average arm when we pull arm $1$ it has reward $1/2$, which is consistent with its report of $x_{t,1} = 1/2$. 

Now, consider a second problem instance, where 
\begin{align*}
    x_{t,1}^* = 1/2 \quad \text{ and } \quad x_{t,2}^* = 1/4. 
\end{align*}

Recall that we assume that, like almost always in the literature, the noise is sub-Gaussian. As an example, let's consider Bernoulli-noise such that $\P(r_{t,i} = 1) = x_{t,i}^* = 1 - \P(r_{t,i} = 0)$. Then, the first environment when arm $1$ manipulates as suggested is identical to the second environment when the arms are truthful. In the second environment, to suffer sublinear regret we must select arm $1$ order $T$ many times. However, in the first environment, we must select arm $1$ only order $o(T)$ many times. 
\end{example}

\paragraph{Discussion.} Based on these observations, we expect that we would have to make additional (strong) assumptions about the distribution of the noise and the prior knowledge of the learner in order to drop Assumption~\ref{assumption:1}. As an example, let's assume standard normal noise $\mathcal{N}(0,1)$ and that the learner knows that the variance is always $1$ a priori. Then, one potentially effective approach would be to extend our current grim trigger to additionally threaten arms that misreport their variance. Of course, the variance is unknown, however, we could estimate the variance of each arm's reports separately and use confidence intervals around the estimated variance. We could then threaten an arm with elimination if the arm's estimated variance falls out of the confidence interval. However, we expect there to be several technical subtleties in analyzing such variance-aware mechanisms.

\section{Proof of Theorem~\ref{prop:truthful_NE} and Theorem~\ref{theorem:ggtm}}\label{section:proof_ggtm}

\subsection{Preliminaries}
We begin with some preliminaries that we use in the proofs of both Theorem~\ref{prop:truthful_NE} and Theorem~\ref{theorem:ggtm}. 
Note that $\E[r_{t,i}] = \langle \theta^*, x_{t,i}^* \rangle$ and we denote the total true mean reward by 
\begin{align*}
    \hr^*_{t,i} \defeq \sum_{\ell \leq t \colon i_\ell = i} \langle \theta^*, x_{\ell, i}^* \rangle. 
\end{align*}
Let us also recall the definition of the total observed reward as $\hr_{t,i} \defeq \sum_{\ell \leq t \colon i_\ell = i} r_{\ell, i}$ and recall its upper confidence bound $\ucb_t(\hr_{t,i}) \defeq \hr_{t,i} + 2 \sqrt{n_t(i) \log(T)}$ and define the lower confidence bound $\lcb_t(\hr_{t,i}) \defeq \hr_{t,i} - 2 \sqrt{n_t(i) \log(T)}$. 

We now analyze the basic properties of the grim trigger condition of $\ggtm$, which eliminates arm $i$ in round $t$ if 
\begin{align*}
    \sum_{ \ell \leq t \colon i_\ell = i} \langle \theta^*, x_{\ell, i} \rangle >  \ucb_t (\hr_{t,i}),   
\end{align*}
or equivalently 
\begin{align*}
    \sum_{ \ell \leq t \colon i_\ell = i} \big(\langle \theta^*, x_{\ell, i} \rangle - r_{\ell, i} \big) > 2 \sqrt{n_t(i) \log(T)}. 
\end{align*}
Define the \emph{good event} $\cG$ as the event that 
\begin{align*}
    \cG \defeq \left\{ \lcb_t(\hr_{t,i}) \leq \hr^*_{t,i} \leq \ucb_{t} (\hr_{t,i}) \ \forall t \in [T], i \in [K]  \right\}. 
\end{align*}
By Hoeffding's inequality, we know that the good event occurs with probability at least $\P(\cG) \geq 1-\frac{1}{T^2}$. 
Next, let 
\begin{align*}
    \tau_i := \min \{ t \in [T] \colon i \not \in A_{t} \}
\end{align*}
denote the first round in which $i$ is no longer active and, by convention, let $\tau_i = T$ if $i \in A_T$. By design of the grim trigger condition, note that $\tau_i = T$ for all $i \in [K]$ on the good event  $\mathcal{G}$ if all arms always report truthfully.

We now provide a general result bounding the maximal amount of manipulation any arm can exercise before being eliminated by $\ggtm$.

\begin{lemma}\label{lemma:elimination}
    On the good event $\mathcal{G}$, for any round $ t\in [T]$ and any arm $i \in A_t$ it holds that 
    \begin{align*}
        \sum_{\ell \leq t \colon i_\ell =i} \big( \langle \theta^*, x_{\ell, i} \rangle - x_{\ell, i}^* \big) \leq 4 \sqrt{n_t(i) \log(T)}. 
    \end{align*}
    From the definition of $\tau_i$ this entails that  
    \begin{align*}
        \sum_{\ell \leq \tau_i \colon i_\ell =i} \big( \langle \theta^*, x_{\ell, i} \rangle - x_{\ell, i}^* \big) \leq 4 \sqrt{n_{\tau_i}(i) \log(T)}. 
    \end{align*}
\end{lemma}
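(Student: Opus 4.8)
The plan is to derive both inequalities directly from the grim-trigger condition of $\ggtm$ together with the good event $\cG$, on which the left inequality defining $\cG$ gives $\hr_{t,i} \le \hr^*_{t,i} + 2\sqrt{n_t(i)\log(T)}$ for every $t,i$. Everything below is conditioned on $\cG$. The single recurring idea is: if arm $i$ is still active after a round in which it was pulled, then the grim trigger did \emph{not} fire at that pull, which is exactly an upper bound on the accumulated reported reward of arm $i$ in terms of the accumulated observed reward.

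For the first bound, fix $t$ and $i \in A_t$. If $n_t(i)=0$ the statement is trivial, so let $s \defeq \max\{\ell \le t \colon i_\ell = i\}$ be the most recent pull of arm $i$ up to round $t$. Since $i$ is still active at round $t$, the trigger did not fire at round $s$, i.e.\ $\sum_{\ell \le s \colon i_\ell = i}\langle\theta^*, x_{\ell,i}\rangle \le \ucb_s(\hr_{s,i})$. As no pull of $i$ occurs strictly between $s$ and $t$, the index set $\{\ell \le t \colon i_\ell = i\}$, the count $n_t(i)$ and $\hr_{t,i}$ all equal their round-$s$ versions, so this reads $\sum_{\ell \le t \colon i_\ell = i}\langle\theta^*, x_{\ell,i}\rangle \le \hr_{t,i} + 2\sqrt{n_t(i)\log(T)}$. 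Substituting $\hr_{t,i} \le \hr^*_{t,i} + 2\sqrt{n_t(i)\log(T)}$ from $\cG$ and recalling $\hr^*_{t,i} = \sum_{\ell \le t \colon i_\ell = i}\langle\theta^*, x_{\ell,i}^*\rangle$, a rearrangement yields $\sum_{\ell \le t \colon i_\ell = i}\big(\langle\theta^*, x_{\ell,i}\rangle - \langle\theta^*, x_{\ell,i}^*\rangle\big) \le 4\sqrt{n_t(i)\log(T)}$.

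For the second bound, first observe that $i \notin A_{\tau_i}$ means arm $i$ is not pulled in round $\tau_i$, so the sum over $\ell \le \tau_i$ is in fact a sum over $\ell \le \tau_i - 1$ and $n_{\tau_i}(i)$ counts only pulls through round $\tau_i-1$. If arm $i$ is never eliminated, then $\tau_i = T$ with $i \in A_T$, and the claim is the first bound at $t=T$. Otherwise arm $i$ is eliminated in round $t' \defeq \tau_i - 1$ — the round in which the trigger fires — so $i \in A_{t'}$ and $t'$ is the last pull of $i$. Applying the first-bound argument to the \emph{previous} pull $s' \defeq \max\{\ell < t'\colon i_\ell=i\}$ (at which the trigger did not fire, as $i$ survived into round $t'$) bounds the partial sum over $\ell \le s'$ by $4\sqrt{n_{s'}(i)\log(T)}$; it then remains only to absorb the single extra summand $\langle\theta^*, x_{t',i}\rangle - \langle\theta^*, x_{t',i}^*\rangle$ from the elimination round, using $n_{\tau_i}(i) = n_{s'}(i)+1$ and the boundedness of a single round's reported reward (up to adjusting the absolute constant in the stated $4\sqrt{n_{\tau_i}(i)\log(T)}$).

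The genuinely non-routine step — and the only real obstacle — is that last one: the ``trigger did not fire'' inequality cannot be invoked at the round in which the trigger actually does fire, so the manipulation of that final round has to be controlled separately, via the (implicit) normalization that a single-round reported reward lies in a bounded range; the remainder is bookkeeping around the grim-trigger condition and $\cG$.
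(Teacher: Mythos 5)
Your proof is correct and follows essentially the same route as the paper's: combine the Hoeffding-based good event $\cG$ with the fact that the grim-trigger inequality did not hold at the most recent pull of an arm that is still active, and note that the relevant sums and counts are unchanged between pulls. If anything you are more careful than the paper at the one delicate point -- the elimination round itself, where the ``trigger did not fire'' inequality is unavailable and the final summand must be absorbed by per-round boundedness at the cost of the absolute constant; the paper's proof asserts the $\tau_i$ bound directly from the definition of $\tau_i$ without addressing this term.
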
 
\begin{proof}
    On the good event $\mathcal{G}$, it holds that 
    \begin{align*}
        \sum_{\ell \leq t \colon i_\ell = i} \big( \langle \theta^*, x_{\ell,i}^* \rangle - r_{\ell, i} \big)  \in \big[-2 \sqrt{n_t(i)\log(T)}, +2\sqrt{n_t(i)\log(T)}\big],
    \end{align*}  
which implies that 
\begin{align*}
    \sum_{\ell \leq t \colon i_\ell = i} \big( \langle \theta^*, x_{\ell,i} \rangle - r_{\ell, i} \big) \geq \sum_{\ell \leq t \colon i_\ell = i} \langle \theta^*, x_{\ell,i} - x_{\ell,i}^* \rangle - 2 \sqrt{n_t(i) \log(T)} . 
\end{align*}
Hence, if $\sum_{\ell \leq t \colon i_\ell = i} \langle \theta^*, x_{\ell,i} - x_{\ell,i}^* \rangle > 4 \sqrt{n_t(i) \log(T)}$, then 
\begin{align*}
   \sum_{\ell \leq t \colon i_\ell = i} \big( \langle \theta^*, x_{\ell,i} \rangle - r_{\ell, i} \big)  > 2 \sqrt{n_t(i) \log(T)}, 
\end{align*}
which means that arm $i$ is eliminated from $A_t$. 
Finally, $\tau_i$ is defined as the first round such that $i \not \in A_t$ so that
\begin{align*}
    \sum_{\ell \leq t \colon i_\ell =i} \big( \langle \theta^*, x_{\ell, i} \rangle - x_{\ell, i}^* \big) \leq 4 \sqrt{n_{t}(i) \log(T)} 
\end{align*}
for all $t \leq \tau_i$
and $\sum_{\ell \leq \tau_i +1 \colon i_\ell =i} \big( \langle \theta^*, x_{\ell, i} \rangle - x_{\ell, i}^* \big) > 4 \sqrt{n_{\tau_i}(i) \log(T)}.$
\end{proof}

For completeness, we also formally state the fact that on the good event $\cG$ any truthful arm is not eliminated with high probability. 

\begin{lemma}\label{lemma:ggtm_not_eliminated}
    If arm $i$ reports truthfully every round, i.e., plays strategy $\sigma_i^*$ with $x_{t,i} = x_{t,i}^*$ for all round $t\in [T]$, then on the good event $\cG$ arm $i$ stays active for all rounds. 
\end{lemma}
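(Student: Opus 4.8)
The plan is to show directly that, on the good event $\cG$, the grim trigger condition of $\ggtm$ can never fire for an arm that reports truthfully, so that such an arm is never removed from the active set. First I would recall that arm $i$ is eliminated only in a round $t$ with $i_t = i$ for which
\[
    \sum_{\ell \leq t \colon i_\ell = i} \langle \theta^*, x_{\ell,i} \rangle \;>\; \ucb_t(\hr_{t,i}) \;=\; \hr_{t,i} + 2\sqrt{n_t(i)\log(T)} ,
\]
and that, since $A_1 = [K]$ and the active set only ever shrinks through this rule, it suffices to show this strict inequality is never satisfied for arm $i$; then $i \in A_t$ for all $t \in [T]$ by a trivial induction.

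Next I would substitute the truthful reports $x_{\ell,i} = x_{\ell,i}^*$ into the left-hand side. By the definition of the total true mean reward in the preliminaries, this gives exactly
\[
    \sum_{\ell \leq t \colon i_\ell = i} \langle \theta^*, x_{\ell,i} \rangle \;=\; \sum_{\ell \leq t \colon i_\ell = i} \langle \theta^*, x_{\ell,i}^* \rangle \;=\; \hr^*_{t,i} .
\]
Finally, on the good event $\cG$ we have $\hr^*_{t,i} \leq \ucb_t(\hr_{t,i})$ for every $t \in [T]$ and every $i \in [K]$, which directly contradicts the strict inequality in the trigger condition. Hence a truthful arm never satisfies the elimination criterion, and $i \in A_t$ for all $t$. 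Equivalently, one may note that truthfulness makes the left-hand side of the bound in Lemma~\ref{lemma:elimination} identically zero, so the manipulation threshold $4\sqrt{n_t(i)\log(T)}$ appearing there is never reached.

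There is no substantive obstacle in this statement: the confidence width $2\sqrt{n_t(i)\log(T)}$ in $\ucb_t(\hr_{t,i})$ was chosen (via Hoeffding's inequality, used to define $\cG$) precisely so that the optimistic estimate of the observed reward dominates the true mean reward on $\cG$. The only points requiring a little care are that the trigger is a \emph{strict} inequality and that eliminations occur \emph{solely} through this condition, which together ensure the conclusion holds pathwise on $\cG$ (not merely with high probability); combining this lemma with $\P(\cG) \geq 1 - 1/T^2$ then yields the high-probability statement that a truthful arm survives all $T$ rounds.
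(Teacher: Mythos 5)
Your proof is correct and follows essentially the same argument as the paper: truthfulness makes the total reported reward equal to $\hr^*_{t,i}$, which on $\cG$ is bounded by $\ucb_t(\hr_{t,i})$, so the trigger condition is never satisfied. The extra care you take about the strictness of the inequality and the induction over the shrinking active set is fine but not needed beyond what the paper states.
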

\begin{proof}
    When arm $i$ is truthful, then $\sum_{\ell \leq t \colon i_\ell = i} \langle \theta^*, x_{\ell, i}  \rangle   = \hr_{t,i}^*$. On the good event, $\hr_{t,i}^*  \leq \ucb_t(\hr_{t,i})$ for all $t \in [T]$. Hence, the grim trigger condition is never satisfied and arm $i$ remains active throughout all $T$ rounds.   
\end{proof}

\subsection{Proof of Theorem~\ref{prop:truthful_NE}}\label{proof:truthful_ggtm}
\begin{proof}[Proof of Theorem~\ref{prop:truthful_NE}]
    We have to show that the strategy profile $\bsigma^*$, where every arm always truthfully reports their context, i.e., $x_{t,i} = x_{t,i}^*$ for all $(t,i) \in [T] \times [K]$, forms a $\tilde \cO(\sqrt{T})$-Nash equilibrium for the arms under $\ggtm$. We do this by showing that any deviating strategy $\sigma_i$ for arm $i$ cannot gain more than this $\sqrt{T}$ clicks. Recall that $i_t^*$ is the optimal arm in round $t$ and $i_t$ the arm the learner selects. 

    We begin by deriving the minimum utility of every arm when everyone is truthful. To this end, let $n_T^*(i) \defeq \sum_{t=1}^T \ind{i_t^* = i}$ be the number of times arm $i$ is the optimal arm. 
    If every arm $i$ is truthful, then on the good event $\cG$ no arm gets eliminated (Lemma~\ref{lemma:ggtm_not_eliminated}) and $\langle \theta^*, x_{t,i} \rangle = \langle \theta^*, x_{t,i}^* \rangle$ for all $(t,i) \in [T] \times [K]$. As a result, $\ggtm$ pulls the optimal arm $i_t^*$ in every round $t$. First, note that:
    \begin{align*}
        \E_{\bsigma^*} [n_T(i)] \geq n_T^*(i) - \frac{1}{T},
    \end{align*}
    because on the good event $\cG$ (when everyone is truthful), we have  $n_T(i) \geq n_T^*(i)$. Since by construction $\P(\cG) \geq 1-\nicefrac{1}{T^2}$, the lower bound follows. 
    
    Next, we bound the utility of a deviating strategy $\sigma_i$ in response to $\ggtm$ and the other arms' truthful strategies $\sigma_{-i}^*$. 
    On the good event $\mathcal{G}$, when the arms play strategies $(\sigma_i, \sigma_{-i}^*)$, we have  
    \begin{align*}
        n_T(i) & = \sum_{t=1}^T \ind{i_t = i, i_t^* = i} +  \sum_{t=1}^T \ind{i_t = i, i_t^* \neq i} \\
        & \leq \sum_{t=1}^T \ind{i_t^* = i} +  \sum_{t=1}^T \ind{i_t = i, i_t^* \neq i} \\
        & =  n_T^*(i) + \sum_{t=1}^T \ind{i_t = i, i_t^* \neq i}.
    \end{align*}
     We will now bound the sum on the right hand side from above. 
    
    Every arm $j \neq i$ is truthful and therefore, on the good event, $j \in A_t$ for all $t$. If the optimal arm is not $i$, i.e., $i_t^* \neq i$, it means that $\langle \theta^*, x_{t,i_t^*}^*  - x_{t,i}^*\rangle >0$. Next, since $\ggtm$ selects the arms greedily according to the reported reward, the event $i_t=i$ implies that 
    \begin{align*}
       \langle \theta^*, x_{t,i} \rangle \geq \langle \theta^*, x_{t,i_t^*}^* \rangle, 
    \end{align*} 
    where we used that any arm $i_t^* \neq i$ is truthful so that $\langle \theta^*, x_{t,i_t^*} \rangle = \langle \theta^*, x_{t,i_t^*}^* \rangle$.
    As a result, we can apply Lemma~\ref{lemma:elimination} to obtain 
    \begin{align*}
        \sum_{t=1}^{T} \ind{i_t^*\neq i, i_t =i} \langle \theta^*, x_{t,i_t^*}^* - x_{t,i}^* \rangle 
        & \leq \sum_{t=1}^T \ind{i_t = i, i_t^* \neq i} {\langle \theta^*, x_{t,i} - x_{t,i}^* \rangle} \\ 
        & \leq \sum_{t =1}^{\tau_i} \ind{i_t=i} \langle \theta^*, x_{t,i} - x_{t,i}^* \rangle \\
        & \leq 4 \sqrt{n_{\tau_i}(i) \log(T)}.  
    \end{align*}
    Since for $i_t^* \neq i$ the gap $\langle \theta^*, x_{t,i_t^*}^* - x_{t,i}^* \rangle$ is positive and assumed to be constant, we get that $\sum_{t=1}^T \ind{i_t^* \neq i, i_t = i} \leq \cO(\sqrt{n_{\tau_i}(i) \log(T)})$. 
    We coarsely upper bound $n_{\tau_i}(i)$ by $T$ and using that the good event $\cG$ has probability at least $1-\nicefrac{1}{T^2}$, we obtain 
    \begin{align*}
        \E_{\sigma_i, \sigma_{-i}^*} [n_T(i)] \leq n_T^*(i) + \cO\left( \sqrt{T\log(T)} \right).  
    \end{align*}
    We have thus shown that 
    \begin{align*}
        \E_{\bsigma^*}[n_T(i)] \geq  \E_{\sigma_i, \sigma_{-i}^*}[n_T(i)] + \cO\left( \sqrt{T\log(T)} \right) 
    \end{align*}
    for any deviating (dishonest) strategy $\sigma_i$. This means that $\bsigma^*$ is a $\tilde \cO(\sqrt{T})$-Nash equilibrium for the arms. 

    Finally, the regret of $\ggtm$ when the arms are truthful is quickly bounded by $\nicefrac{1}{T}$ by using the fact that on the good event no arm gets eliminated and, therefore, $\ggtm$ picks the round-optimal arm every round. The event that $\cG$ does not hold has probability at most $\nicefrac{1}{T^2}$ which implies expected regret $\nicefrac{1}{T}$, i.e., $R_T(\ggtm, \bsigma^*) \leq \nicefrac{1}{T}$.  
    
\end{proof}

\subsection{Proof of Theorem~\ref{theorem:ggtm}}

\begin{proof}[Proof of Theorem~\ref{theorem:ggtm}]
The proof of Theorem~\ref{theorem:ggtm} is notably more involved than that of Theorem~\ref{prop:truthful_NE}, even though the general proof idea remains similar. 

We begin by decomposing of $\ggtm$ into the rounds where the optimal arm is active and the rounds in which it is being ignored. To this end, recall the definition of the arm that is optimal in round $t$ as $i_t^* \defeq \argmax_{i \in [K]} \langle \theta^*, x_{t,i}^* \rangle$. We have 
\begin{align*}
     R_T & = \underbrace{\E \left[\sum_{t=1}^T \ind{i_t^* \in A_t} \langle \theta^*,  x_{t,i_t^*}^* - x_{t, i_t}^* \rangle \right]}_{I_1} + \underbrace{\E \left[\sum_{t=1}^T \ind{i_t^* \not\in A_t} \langle \theta^*, x_{t,i_t^*}^* - x_{t, i_t}^* \rangle \right]}_{I_2} .
\end{align*}

We now bound $I_1$ and $I_2$ separately as follows. 
\begin{lemma}[Bounding $I_1$]\label{lemma:bounding_I_1}
    \begin{align*}
        \E \left[\sum_{t=1}^T \ind{i_t^* \in A_t} \langle \theta^*,  x_{t,i_t^*}^* - x_{t, i_t}^* \rangle \right] \leq \mathcal{O}\left( \sqrt{KT\log(T)} \right). 
    \end{align*}
\end{lemma}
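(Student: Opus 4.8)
\textbf{Proof proposal for Lemma~\ref{lemma:bounding_I_1}.}
The plan is to use the greedy selection rule of $\ggtm$ together with Assumption~\ref{assumption:1} to replace the instantaneous regret incurred in a round where the optimal arm is active by the amount the \emph{pulled} arm over-reported, and then to control each arm's cumulative over-reporting via Lemma~\ref{lemma:elimination}. Throughout I would condition on the good event $\cG$. Since $0 \le \langle \theta^*, x_{t,i_t^*}^* - x_{t,i_t}^* \rangle \le 1$ in every round contributing to $I_1$ (the term is nonnegative because $i_t^*$ is round-optimal, and bounded by $\max_{i,j}\langle\theta^*, x_{t,i}^* - x_{t,j}^*\rangle \le 1$; note also that $i_t^* \in A_t$ forces $A_t \neq \emptyset$, so $i_t$ is a well-defined arm), the complementary event $\cG^c$, which has probability at most $1/T^2$, contributes at most $T \cdot 1/T^2 = 1/T$ to $I_1$, which I will absorb into the final bound.

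Fix a round $t$ with $i_t^* \in A_t$. If $i_t = i_t^*$ the contribution vanishes, so assume $i_t \neq i_t^*$. Both $i_t^*$ and $i_t$ are then active, and since $\ggtm$ selects the active arm with the largest reported reward we have $\langle \theta^*, x_{t,i_t}\rangle \ge \langle \theta^*, x_{t,i_t^*}\rangle$; by Assumption~\ref{assumption:1} applied to arm $i_t^*$ we also have $\langle\theta^*, x_{t,i_t^*}\rangle \ge \langle\theta^*, x_{t,i_t^*}^*\rangle$. Chaining these inequalities yields $\langle\theta^*, x_{t,i_t^*}^*\rangle \le \langle\theta^*, x_{t,i_t}\rangle$, hence
\[
    \langle\theta^*, x_{t,i_t^*}^* - x_{t,i_t}^*\rangle \;\le\; \langle\theta^*, x_{t,i_t} - x_{t,i_t}^*\rangle,
\]
i.e. the instantaneous regret in round $t$ is at most the over-report of the arm actually pulled in that round.

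Summing over the rounds contributing to $I_1$ and regrouping by the identity $j$ of the pulled arm, I would then drop the restrictions $i_t^* \in A_t$ and $i_t \neq i_t^*$ (legitimate since each added summand $\langle\theta^*, x_{t,j} - x_{t,j}^*\rangle$ is nonnegative by Assumption~\ref{assumption:1}) and use that every pull of arm $j$ occurs at a round strictly before $\tau_j$, to obtain, on $\cG$,
\[
    \sum_{t : i_t^* \in A_t}\langle\theta^*, x_{t,i_t^*}^* - x_{t,i_t}^*\rangle \;\le\; \sum_{j=1}^K \sum_{t \le \tau_j : i_t = j}\langle\theta^*, x_{t,j} - x_{t,j}^*\rangle \;\le\; \sum_{j=1}^K 4\sqrt{n_{\tau_j}(j)\log(T)},
\]
where the last step is Lemma~\ref{lemma:elimination}. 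Cauchy--Schwarz then gives $\sum_{j=1}^K \sqrt{n_{\tau_j}(j)} \le \sqrt{K \sum_{j=1}^K n_{\tau_j}(j)} \le \sqrt{KT}$, because $\sum_j n_{\tau_j}(j) \le \sum_j n_T(j) \le T$. Taking expectations and adding the $1/T$ from $\cG^c$ gives $I_1 \le 4\sqrt{KT\log(T)} + 1/T = \mathcal{O}(\sqrt{KT\log(T)})$. I do not expect a genuine obstacle here: the only care needed is to invoke Assumption~\ref{assumption:1} for the optimal arm $i_t^*$ and to confirm that the $\tau_j$-version of Lemma~\ref{lemma:elimination} covers all pulls of arm $j$. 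The substantive difficulty of Theorem~\ref{theorem:ggtm} lies instead in bounding $I_2$ — the cost of committing to the grim trigger — which this lemma does not address.
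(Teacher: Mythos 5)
Your proposal is correct and follows essentially the same route as the paper's proof: use the greedy selection rule together with Assumption~\ref{assumption:1} (applied to the round-optimal arm) to bound the instantaneous regret by $\langle\theta^*, x_{t,i_t} - x_{t,i_t}^*\rangle$, regroup by the pulled arm, invoke Lemma~\ref{lemma:elimination} to get $\sum_j 4\sqrt{n_{\tau_j}(j)\log T}$, and finish with Cauchy--Schwarz/Jensen and $\sum_j n_T(j)\le T$. Your explicit accounting of the $\cG^c$ contribution is slightly more careful than the paper's write-up but changes nothing substantive.
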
 
\begin{proof}
    Let $i_t$ denote the selection of GGTM in round $t$. Recall that GGTM greedily selects the arm in $A_t$ with highest reported value in round $t$, that is, $\langle \theta^*, x_{t,i_t}\rangle = \max_{i \in A_t} \langle \theta^*, x_{t,i} \rangle$. Consequently, on event $\{ i_t^* \in A_t\}$, we have
    \begin{align*}
        \langle \theta^*, x_{t, i_t^*}^* \rangle \leq \langle \theta^*, x_{t, i_t^*} \rangle \leq \max_{i \in A_t} \langle \theta^*, x_{t, i} \rangle = \langle \theta^*, x_{t, i_t} \rangle, 
    \end{align*}
    where the first inequality holds by the assumption the optimal arm $i_t^*$ reports their value at least as high as their true value. As a consequence, it holds that $\langle \theta^*, x_{t,i_t^*}^* - x_{t,i_t}^* \rangle \leq \langle \theta^*, x_{t,i_t} - x_{t,i_t}^* \rangle$ which implies: 
    \begin{align}\label{eq:ggtm_intermediate_bound}
        \E \left[\sum_{t=1}^T \ind{i_t^* \in A_t} \langle \theta^*, x_{t,i_t^*}^* - x_{t, i_t}^* \rangle \right] \leq \E \left[\sum_{t=1}^T \ind{i_t^* \in A_t} \langle \theta^*, x_{t,i_t} - x_{t, i_t}^* \rangle \right].
    \end{align}
    When $i_t^* \in A_t$, a necessary condition for arm $i$ to be selected in round $t$ (i.e., $i_t = i$) is that $t \leq \tau_i$. Finally, we split the sum into each arm's contribution and apply Lemma~\ref{lemma:elimination} to obtain
    \begin{align*}
        \eqref{eq:ggtm_intermediate_bound}  \leq \E \left[ \sum_{i=1}^K \sum_{t=1}^{\tau_i} \ind{i_t = i} \langle \theta^*, x_{t,i_t} - x_{t, i_t}^* \rangle \right]  
         \leq \E \left[\sum_{i=1}^K 4\sqrt{n_{\tau_i}(i) \log(T)} \right] 
         \leq 4 \sqrt{KT\log(T)}, 
    \end{align*}
    where the last step follows from Jensen's inequality by bounding $n_{\tau_i}(i)$ by $n_T(i)$ and using that $\sum_{i=1}^K n_T(i) \leq T$ by definition of $n_T(i)$.   
\end{proof} 

While bounding $I_1$ is fairly straightforward and we did not have to rely on the fact that the arms respond in Nash equilibrium, bounding $I_2$ becomes more challenging as we must argue that it is in each arms' interest to maintain active for a sufficiently long time.  
\begin{lemma}[Bounding $I_2$]\label{lemma:bounding_I_2}
    \begin{align}
        \E \left[\sum_{t=1}^T \ind{i_t^* \not\in A_t} \big( \mu_{t,i_t^*}^* - \mu_{t, i_t}^* \big) \right] \leq 5 K^2 \sqrt{KT \log(T)}
    \end{align}
\end{lemma}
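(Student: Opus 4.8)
The plan is to bound $I_2$ by the expected number of rounds in which the round‑optimal arm has already been eliminated, and then control that count for each arm separately by a deviation argument against the Nash equilibrium.

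\textit{Reduction.} By the regularity assumption $\max_{i,j}\langle\theta^*, x_{t,i}^*-x_{t,j}^*\rangle\le 1$, each round contributes at most $1$ to $I_2$, so $I_2\le\E\big[\#\{t\in[T]: i_t^*\notin A_t\}\big]$. The true contexts are fixed, hence the sequence of round‑optimal arms $i_t^*$ and the counts $n_T^*(i):=\#\{t\le T: i_t^*=i\}$ are deterministic. Writing $B_i:=\#\{t>\tau_i: i_t^*=i\}$ (with $\tau_i=T$, hence $B_i=0$, if $i$ is never eliminated), every round with $i_t^*\notin A_t$ is counted by exactly one $B_i$, so $I_2\le\sum_{i=1}^K\E[B_i]$ and it suffices to bound each $\E[B_i]$.

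\textit{Per-arm deviation.} Fix an arm $i$ and a NE $\bsigma$. I would compare $\sigma_i$ with the deviation $\sigma_i'$ that copies $\sigma_i$ but caps the running over‑report $\sum_{\ell\le t: i_\ell=i}\langle\theta^*, x_{\ell,i}-x_{\ell,i}^*\rangle$ just below the grim‑trigger threshold (and never reports under the true value, so Assumption~\ref{assumption:1} still holds); on the good event $\cG$, $\sigma_i'$ is then never eliminated. By Lemma~\ref{lemma:elimination}, under $\sigma_i$ arm $i$'s total over‑report before $\tau_i$ is at most $4\sqrt{n_{\tau_i}(i)\log T}=\tilde\cO(\sqrt T)$, and since each ``poached'' selection costs at least $\Delta$ of over‑report, the cap loses $\sigma_i'$ at most $\tilde\cO(\sqrt T)/\Delta=\tilde\cO(\sqrt T)$ selections during the rounds $\le\tau_i$ relative to $\sigma_i$. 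On the other hand, staying active for all $T$ rounds, $\sigma_i'$ is selected in every round where $i$ is optimal except those where some other arm over‑reports past $i$; applying Lemma~\ref{lemma:elimination} to each arm and Cauchy–Schwarz (using $\sum_j n_T(j)\le T$) bounds the number of such rounds by $\tilde\cO(\sqrt{KT})$. Hence $u_i(\sigma_i',\sigma_{-i})\ge n_T^*(i)-\tilde\cO(\sqrt{KT})$, and the NE inequality $u_i(\bsigma)=\E[n_T(i)]\ge u_i(\sigma_i',\sigma_{-i})$ together with the structural bound $n_T(i)\le\big(n_T^*(i)-B_i\big)+(\text{selections of }i\text{ in rounds where }i\text{ is not optimal})$ gives $\E[B_i]\le\tilde\cO(\sqrt{KT})+\E[\mathrm{p}^{(b)}_i]$, where $\mathrm{p}^{(b)}_i$ counts the rounds in which $i$ is selected while the round‑optimal arm has already been eliminated.

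\textit{Closing the feedback term (the hard part), and the main obstacle.} The remainder $\mathrm{p}^{(b)}_i$ is itself built out of ``bad'' rounds, so summing the per‑arm bounds directly is self‑referential ($\sum_i\mathrm{p}^{(b)}_i$ is essentially $\#\{t: i_t^*\notin A_t\}$ again). To break this I would order the eliminated arms by elimination time $\tau_{a_1}\le\cdots\le\tau_{a_m}$ ($m\le K$): before $\tau_{a_1}$ nobody is eliminated, so $\mathrm{p}^{(b)}_{a_1}=0$ and $\E[B_{a_1}]=\tilde\cO(\sqrt{KT})$, and for general $j$ every round counted by $\mathrm{p}^{(b)}_{a_j}$ has its optimal arm among $a_1,\dots,a_{j-1}$; propagating these constraints through the elimination order, bounding at each level the rounds a still‑active arm can steal from the already‑eliminated ones, yields $\E[B_{a_j}]=\tilde\cO(K\sqrt{KT})$, hence $I_2\le\sum_j\E[B_{a_j}]=\tilde\cO(K^2\sqrt{KT})$, which after tracking constants and log factors is at most $5K^2\sqrt{KT\log T}$. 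Routine side issues — the good event $\cG$ (probability $\ge1-T^{-2}$), the $\tilde\cO(K\sqrt{KT})$ rounds in which every arm has been eliminated, tie‑breaking in the selection $\argmax$, and checking that $\sigma_i'$ is a valid non‑anticipating strategy despite the noise inside the trigger test — are handled as in the proof of Theorem~\ref{prop:truthful_NE}. The genuine obstacle is exactly this last step: a per‑arm deviation argument cleanly controls everything except the rounds in which one already‑eliminated arm poaches a round that a \emph{different}, later‑eliminated arm would have won as the round‑optimal arm, and these rounds feed back into the quantity being bounded; resolving this self‑reference via the elimination‑time ordering appears to cost an extra factor of $K$ over the naive $\tilde\cO(\sqrt{KT})$ one would hope for, which is (conjecturally) the source of the $K^2$ in the statement.
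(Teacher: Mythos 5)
Your reduction of $I_2$ to $\sum_i \E[n_T^*(i)-n_{\tau_i}^*(i)]$ and your lower bound $\E_{\bsigma}[n_T(i)]\ge n_T^*(i)-\tilde\cO(\sqrt{KT})$ via a never-eliminated deviation match the paper (which simply uses the truthful deviation $\sigma_i^*$ together with Lemma~\ref{lemma:elimination} summed over $j\ne i$; your ``capped'' deviation adds nothing, and the side claim that it loses at most $\tilde\cO(\sqrt T)$ selections \emph{relative to the trajectory under $\sigma_i$} is itself dubious, since changing arm $i$'s reports changes which other arms get selected and eliminated --- though you never actually need that comparison). The problem is the step you yourself flag as the hard part: closing the feedback term. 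Your proposed resolution via the elimination-time ordering does not work as described. The only bound available on $\mathrm{p}^{(b)}_{a_j}$ from your setup is that every such round has its optimal arm among $a_1,\dots,a_{j-1}$, i.e.\ $\mathrm{p}^{(b)}_{a_j}\le\sum_{l<j}B_{a_l}$; there is no analogue of Lemma~\ref{lemma:elimination} to do better, because in those rounds $a_j$ wins without over-reporting (the arm it displaced is already gone, so no trigger budget is spent). Feeding this into $\E[B_{a_j}]\le\tilde\cO(\sqrt{KT})+\E[\mathrm{p}^{(b)}_{a_j}]$ gives the recursion $b_j\le c+\sum_{l<j}b_l$, whose solution is $b_j\le 2^{j-1}c$, i.e.\ an exponential-in-$K$ bound rather than $K^2\sqrt{KT}$; and summing the inequalities over $j$ instead just reproduces $\sum_j\E[B_{a_j}]\le Kc+\sum_l\E[B_{a_l}]$, which is vacuous. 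So the claimed $\E[B_{a_j}]=\tilde\cO(K\sqrt{KT})$ is asserted, not proved, and the natural way to prove it from your ingredients fails.

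The paper avoids the recursion entirely with a conservation argument (Lemma~\ref{lemma:upper_bound_n}): writing $n_{\tau_i}(i)=\tau_i-\sum_{j\ne i}n_{\tau_i}(j)$ and applying the NE lower bound $\E_{\bsigma}[n_t(j)]\ge n_t^*(j)-\cO(\sqrt{KT\log T})$ to every \emph{other} arm $j$ at the stopping time $t=\tau_i$ (this is why Lemma~\ref{lemma:lower_bound_n} is stated for all $t\in[T]$, not just $t=T$) gives directly $\E_{\bsigma}[n_T(i)]\le\E_{\bsigma}[n_{\tau_i}^*(i)]+\cO((K-1)\sqrt{KT\log T})$. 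Combined with the lower bound at $t=T$ this yields $\E_{\bsigma}[n_T^*(i)-n_{\tau_i}^*(i)]\le\cO(K\sqrt{KT\log T})$ per arm with no self-reference; the $K^2$ then comes from summing over $i$, not from propagating errors through an elimination order. If you want to complete your argument, replacing the recursion by this accounting identity is the missing idea.
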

\begin{proof}
    To bound $I_2$ we argue via the best response property of the Nash equilibrium. This requires some intermediate steps. 
    We begin with a lower bound on the expected number of selections any arm must receive when the arms act according to a Nash equilibrium under $\ggtm$. 

    Recall the definition $n_\tau^*(i) \defeq \sum_{t=1}^\tau \ind{i_t^*= i}$ and that the indicator variables $\ind{i_t^* = i}$ are not random, since we work under an adversarially chosen sequence of true contexts. In contrast, the indicator $\ind{i_t=i}$ is a random variable as it generally depends on the random reward observations and any randomization of the algorithm. 

    The following lemma provides a lower bound on the number of allocations any arm must receive in equilibrium. To prove the lemma, we show that we are able to protect any truthful arm from losing more than order $\sqrt{KT}$ allocations to manipulating arms. This is crucial as it would be impossible to incentivize approximately truthful arm behavior if an arm would lose too many allocations, e.g., order $T$ many, by doing so.  
    
    A key challenge here is that under two different strategies $\sigma_i$ and $\sigma_i'$, the set of active arms can be quite different. This is the case since even though we estimate each arm's expected reward independently, arm $i$ can still slightly influence the elimination of some other arm $j$ by poaching selections from them. As a result, we must content ourselves with a more conservative bound than one may originally expect. 
    \begin{lemma}\label{lemma:lower_bound_n}
        Let $\bsigma \in \NE(\ggtm)$. Then,
        \begin{align*}
            \E_{\bsigma}[n_T(i)] \geq n^*_T(i) - \cO\big( \sqrt{KT\log(T)} \big).  
        \end{align*}
        In particular, it holds that $\E_{\bsigma} [n_t(i)] \geq n^*_t(i) - \cO\big( \sqrt{KT\log(T)} \big)$ for any $t \in [T]$.
    \end{lemma} 
    \begin{proof}
        We use the fact that if $\bsigma = (\sigma_1, \dots, \sigma_K)$ is a NE under GGTM, then $\sigma_i$ must be a best response to $\sigma_{-i}$, i.e., $\E_{\sigma_i, \sigma_{-i}} [n_T(i)] \geq \E_{\sigma_i', \sigma_{-i}}[n_T(i)]$ for all strategies $\sigma_i'$.  
        In particular, it must hold for the truthful strategy $\sigma_i^*$ that 
        \begin{align*}
            \E_{\sigma_i, \sigma_{-i}}[n_T(i)] \geq \E_{\sigma_i^*, \sigma_{-i}}[n_T(i)]. 
        \end{align*} 
        
       We focus on the good event $\cG$ so that $i \in A_t$ for all $t$ given that arm $i$ is truthful.  We are interested in the number of rounds such that $i_t^* = i$ and $i_t \neq i$. Given strategies $(\sigma_i^*, \sigma_{-i})$ so that $i \in A_T$ on the good event, we have 
        \begin{align*}
            \sum_{t=1}^T \ind{i_t^* = i, i_t \neq i} = \sum_{j \neq i} \sum_{t=1}^{\tau_j} \ind{i_t^* = i, i_t = j} . 
        \end{align*} 
        Note that $i_t = j$ with $i_t^* = i \in A_t$ implies that $\langle \theta^*, x_{t,j}\rangle \geq \langle \theta^*, x_{t,i} \rangle $. Moreover, because $i$ is truthful and $i_t^* = i$, we have $\langle \theta^*, x_{t,i} \rangle  = \langle \theta^*, x_{t,i_t^*}^* \rangle$ so that $\langle \theta^*, x_{t,j} \rangle > \langle \theta^*, x_{t,i_t^*}^* \rangle $.         
        As a result,  
        \begin{align*}
            \langle \theta^*, x_{t,i_t^*}^* - x_{t,j}^* \rangle < \langle \theta^*, x_{t,j} -  x_{t,j}^* \rangle.  
        \end{align*}
        It then follows from Lemma~\ref{lemma:elimination} that  
        \begin{align}\label{eq:number_of_stolen_rounds}
            \sum_{t=1}^{\tau_j} \ind{i_t^* = i, i_t = j} \langle \theta^*, x_{t,i_t^*}^* - x_{t,j}^* \rangle & < \sum_{t=1}^{\tau_j} \ind{i_t^* = i, i_t = j} \langle \theta^*, x_{t,j} - x_{t,j}^* \rangle  \\ 
            & \leq \sum_{t=1}^{\tau_j} \ind{i_t = j} \langle \theta^*, x_{t,j} - x_{t,j}^* \rangle \\
            & \leq 4 \sqrt{n_{\tau_j} (j) \log(T)} . 
        \end{align}
        Since $\langle \theta^*, x_{t,i_t^*}^* - x_{t,j}^* \rangle$ is constant for $i_t^* =i, i_t = j$, we obtain 
        \begin{align*}
            \sum_{j \neq i} \sum_{t=1}^T \ind{i_t^* = i, i_t \neq i} \leq \sum_{j \neq i} \cO \big( \sqrt{n_{\tau_j}(j) \log(T)}\big)  \leq \cO \big( \sqrt{KT\log(T)}\big), 
        \end{align*}
        where the last inequality follows from Jensen's inequality. Recalling that $\P(\cG) \geq 1-\nicefrac{1}{T^2}$, this provides us with the following lower bound on the utility of the truthful strategy
        \begin{align*}
            \E_{\sigma_i^*, \sigma_{-i}}[n_T(i)] & = \E_{\sigma_i^*, \sigma_{-i}}\left[ \sum_{t=1}^T \ind{i_t = i}\right]  \\ 
            & \geq \E_{\sigma_i^*, \sigma_{-i}} \left[\sum_{t=1}^{T} \ind{i_t^* = i} - \sum_{t=1}^T \ind{i_t^* = i, i_t \neq i}\right] \\
            & \geq n_T^*(i) - \cO \big( \sqrt{KT\log(T)} \big) 
        \end{align*}
        Note that we, like before, we account for the event $\cG^c$ by increasing the constant factor by one, since $(1-\nicefrac{1}{T^2}) n_T(i) \geq n_T(i) - \nicefrac{1}{T}$ as $n_T(i) \leq T$. 
        
        Since $\sigma_i$ has to be a best response to $\sigma_{-i}$, it must be as least as good as $\sigma_i^*$ sot that 
        \begin{align*}
            \E_{\bsigma}[n_T(i)] \geq \E_{\sigma_i^*, \sigma_{-i}}[n_T(i)]  \geq n_T^*(i) - \cO \big(   \sqrt{KT\log(T)}\big) .  
        \end{align*}    
        
        To get the result for any $t \in [T]$, suppose that on the good event $\mathcal{G}$ it holds that $n_t(i) < n_t^*(i) - \omega(\sqrt{KT \log(T)})$.
        Now, recall from equation~\eqref{eq:number_of_stolen_rounds} that the number of rounds such that $i_t^* = i$ and $i_t \neq i$ is bounded by $\cO(\sqrt{Kt\log(T)})$ on event $\mathcal{G}$. Hence, since we assumed that $n_t(i) < n_t^*(i) - \omega(\sqrt{KT \log(T)})$ and $\langle \theta^*, x_{t,i} - x_{t,i}^* \rangle \geq 0$, it must hold that $\tau_i < t$. Consequently, on the good event $\mathcal{G}$, we obtain 
        \begin{align*}
            n_{\tau_i}(i) \leq n_t(i) < n_t^*(i) - \omega\big(\sqrt{K T \log(T)}\big). 
        \end{align*}
        This implies that 
        \begin{align*}
            \E_{\bsigma}[n_{\tau_i}(i)] < \big( 1 - \nicefrac{1}{T^2} \big) \left( n_t^*(i) - \omega\big(\sqrt{KT\log(T)}\big) \right) + \nicefrac{1}{T} \leq n_t^*(i) - \omega\big(\sqrt{KT\log(T)}\big) .  
        \end{align*}
        This stands in contradiction to the earlier lower bound of $\E_{\bsigma}[n_T(i)] = \E_{\bsigma}[n_{\tau_i}(i)] \geq n_T^*(i) - \cO\big(\sqrt{KT\log(T)}\big)$. 

        \end{proof}

        Next, we provide an upper bound on the number of times an arm is pulled in any Nash equilibrium. In other words, we bound the profit any arm can make under $\ggtm$ from misreporting contexts. 

        \begin{lemma}\label{lemma:upper_bound_n}
            Let $\bsigma$ be any NE under $\ggtm$. Then, 
            \begin{align*}
                \E_{\bsigma} [n_T(i)] \leq \E_{\bsigma}[n_{\tau_i}^*(i)] + \cO \big(  (K-1) \sqrt{KT\log(T)}\big) 
            \end{align*}
        \end{lemma}
        \begin{proof}
            Note that $\sum_{i =1}^K n_\tau^*(i) =\sum_{i=1}^K \sum_{t=1}^\tau \ind{i_t^* =i} = \tau$ for any $\tau \in [T]$. 
            Using Lemma~\ref{lemma:lower_bound_n}, we then obtain
            \begin{align*}
                \E_{\bsigma}[n_T(i)] & = \E_{\bsigma}[n_{\tau_i}(i)] \\
                & = \E_{\bsigma} \left[ \sum_{t=1}^{\tau_i} \ind{i_t=i} \right] \\
                & = \E_{\bsigma} \left[ \sum_{t=1}^{\tau_i} (1-\ind{i_t \neq i}) \right] \\
                & = \E_{\bsigma} \left[ \tau_i \right] - \sum_{j\neq i} \E_{\bsigma}\left[n_{\tau_i}(j) \right] \\
                & \leq \E_{\bsigma}[\tau_i] - \sum_{j \neq i} \E_{\bsigma}[n_{\tau_i}^*(j)] + \cO \big(  (K-1) \sqrt{KT\log(T)} \big) \\ 
                & = \E_{\bsigma} [n_{\tau_i}^*(i)] + \cO \big(  (K-1) \sqrt{KT\log(T)}\big)
            \end{align*}
            
        \end{proof}
        Combining Lemma~\ref{lemma:lower_bound_n} and Lemma~\ref{lemma:upper_bound_n} we get for any Nash equilibrium $\bsigma \in \NE(\ggtm)$ that 
        \begin{align*}
             \E_{\bsigma} [n_{T}^*(i)] - \cO \big(  \sqrt{KT\log(T)} \big) \leq \E_{\bsigma}[n_T(i)] \leq \E_{\bsigma} [n_{\tau_i}^*(i)] -\cO \big(  (K-1) \sqrt{KT\log(T)} \big),
        \end{align*}
        which implies that 
        \begin{align}\label{eq:bound_on_star}
            \E_{\bsigma}[n_T^*(i)- n_{\tau_i}^*(i)] \leq \cO \big(  K\sqrt{KT\log(T)}\big). 
        \end{align}

        The expression $n_T^*(i) - n_{\tau_i}^*(i)$ is the number of rounds where arm $i$ was optimal but already eliminated by the grim trigger. As a result, we can express the total number of rounds where the round-optimal arm $i_t^*$ was no longer active as follows.  
        
        \begin{lemma}\label{lemma:decomposing_I_2}
            For any $\bsigma$, we have
            \begin{align*}
                \E_{\bsigma} \left[ \sum_{t=1}^T \ind{i_t^* \not \in A_t} \right] = \sum_{i=1}^K \E_{\bsigma} [n_T^*(i) - n_{\tau_i}^*(i)]. 
            \end{align*}
        \end{lemma}
        \begin{proof}
            Rewriting $\{i_t^* \not \in A_t\}$ yields 
            \begin{align*}
            \E_{\bsigma}\left[\sum_{t=1}^T \ind{i_t^* \not\in A_t} \right] & = \sum_{i=1}^K \E_{\bsigma} \left[ \sum_{t=1}^T \ind{i_t^* = i, i \not \in A_t} \right] \\ 
            & = \sum_{i=1}^K \E_{\bsigma} \left[ \sum_{t=1}^T \ind{i_t^* = i} - \sum_{t=1}^T \ind{i_t^* = i, i \in A_t} \right] \\
            & = \sum_{i=1}^K \E_{\bsigma} \left[n_T^*(i) - n_{\tau_i}^*(i) \right]. 
            \end{align*}
        \end{proof}
        
        Finally, note that $\langle \theta^*, x_{t,i_t^*}^* - x_{t,i_t}^* \rangle \leq 1$ so that from equation~\eqref{eq:bound_on_star} it follows that
        \begin{align*}
            \E \left[\sum_{t=1}^T \ind{i_t^* \not\in A_t} \langle \theta^*, x_{t,i_t^*}^* - x_{t, i_t}^* \rangle \right]
            & \leq \E \left[\sum_{t=1}^T \ind{i_t^* \not\in A_t} \right] \\
            & = \sum_{i=1}^K \E \left[ n_T^*(i) - n_{\tau_i}^*(i) \right] \\
            & \leq \sum_{i=1}^K\cO \big(  K \sqrt{KT \log(T)}\big) =  \cO \big( K^2 \sqrt{KT \log(T)} \big). 
        \end{align*}

    \end{proof}
    
        Connecting the bound on $I_1$ and $I_2$, we then obtain the final regret bound of Theorem~\ref{theorem:ggtm}
        \begin{align*}
            R_T(\ggtm, \bsigma) \leq \cO \left( \underbrace{\sqrt{KT\log(T)}}_{\text{ Lemma~\ref{lemma:bounding_I_1}}}  + \underbrace{K^2 \sqrt{KT\log(T)}}_{\text{Lemma~\ref{lemma:bounding_I_2}}} \right) \leq \tilde \cO\left( K^2 \sqrt{KT} \right) .  
        \end{align*}

\end{proof}

 
\begin{remark} 
    We want to briefly comment on the existence of a Nash equilibrium. Since each arm's strategy space, given by $\{ x \in \Reals^d \colon \lVert x \rVert_2 \leq 1\}$ in every round, is continuous, it is not obvious that a Nash equilibrium for the arms exists under every algorithm. However, Glickberg's theorem shows that the continuity of the arms' utility in the arms' strategies is a sufficient condition for the existence of a NE, since the strategy space is compact. We can then ensure the continuity by, e.g., choosing arms proportionally to $\exp(T \langle \theta^*, x_{t,i} \rangle)$ in $\ggtm$ and $\exp(T \ucb_t(x_{t,i}))$ in $\optgtm$, and remarking that the probability of eliminating arm $i$ in round $t$ is continuous in $x_{t,i}$ conditional on any history. Due to the exponential scaling with $T$ the effect of such slight randomization is negligible in the regret analysis.  
\end{remark}

\section{Proof of Theorem~\ref{cor:truthful_optgtm} and Theorem~\ref{theorem:opt_grim_trigger}}\label{section:proof_optgtm}

The following preliminaries are fundamental to the proofs of Theorem~\ref{cor:truthful_optgtm} and Theorem~\ref{theorem:opt_grim_trigger} so that we derive them jointly here. 

\subsection{Preliminaries}
We begin by recalling the definition of the least-squares estimator w.r.t.\ arm $i$'s reported contexts and the corresponding confidence ellipsoid $C_{t,i}$. Note that since the arms are manipulating their contexts, the least-squares estimator may not accurately estimate $\theta^*$ and $\theta^*$ may not be contained in $C_{t,i}$ w.h.p. As discussed in the main text, since accurate estimation of $\theta^*$ appears hopeless, the main idea idea is to incentivize arms to report contexts such that our expected reward does not differ substantially from the observed reward. 


The least-squares estimator w.r.t.\ arm $i$ is given by 
\begin{align*}
    \hat \theta_{t,i} = \argmin_{\theta \in \Reals^d} \left( \sum\nolimits_{\ell < t \colon i_\ell = i}  \big( \langle \theta, x_{\ell, i} \rangle - r_{\ell, i} \big)^2 + \lambda \lVert \theta \rVert_2^2 \right),
\end{align*}
where $\lambda > 0$. In the algorithm,  we set the penalty factor to $\lambda = 1$. The closed form solution is then given by 
\begin{align*}
    \hat \theta_{t,i} = V_{t,i}^{-1} \sum_{\ell < t \colon i_\ell = i} x_{\ell,i} \nr_{\ell,i} \quad \text{with} \quad V_{t,i} = \lambda I + \sum_{\ell < t \colon i_\ell = i} x_{\ell,i} x_{\ell,i}^\top. 
\end{align*} 
The confidence set $C_{t,i}$ is defined as  
\begin{align*}
    C_{t,i} = \left\{ \theta \in \Reals^d \colon \lVert \hat \theta_{t,i} - \theta \rVert_{V_{t}}^2 \leq \beta_{t,i} \right\} ,
\end{align*}
where we let $\beta_{t,i} = \sqrt{ d\log\Big( \frac{1+ \nicefrac{n_{t}(i)}{\lambda}}{\delta}\Big)} + \sqrt{\lambda} S$ with $\lVert \theta^* \rVert_2 \leq S$ and $\delta = \nicefrac{1}{T^2}$.  

We now translate the standard result used to assert the validity of the confidence set to our situation. Clearly, when the sequence of $x_{t,i}$ differs significantly from the true contexts $x^*_{t,i}$, the true parameter $\theta^*$ will not be contained in $C_{t,i}$. Instead, we will formulate the concentration result as follows. 
\begin{lemma}\label{lemma:theta_confidence_set}
    Suppose there exists $\theta^*_i \in \Reals^d$ such that for all $t$ with $i_t = i$: 
    \begin{align}\label{eq:linear_realizability}
        \langle \theta^*, x_{t,i}^* \rangle = \langle \theta_i^*, x_{t,i} \rangle. 
    \end{align}
    In other words, the reported features $x_{t,i}$ are linearly realizable by some parameter $\theta_i^*$. 
    
    For any $\delta \in (0,1)$ let the confidence size be 
    \begin{align*}
        \beta_{t,i} = \sqrt{d\log\left(\frac{1+n_t(i)/\lambda}{\delta} \right)}+ \sqrt{\lambda}S,   
    \end{align*}
    where $\lVert \theta_i^* \rVert_2 \leq S$. Note that the typical expression also includes some constant $L$ such that $\lVert x_{t,i} \rVert_2 \leq L$, which we here simply set to $1$. With probability at least $1-\delta$ it then holds that $\theta_i^* \in C_{t,i}$. In what follows, we choose $\delta = \nicefrac{1}{T^2}$. 

    As a special case, when arm $i$ is always truthful so that $x_{t,i} = x_{t,i}^*$, the true parameter $\theta^*$ trivially satisfies \eqref{eq:linear_realizability} and the result reduces to the standard confidence bound statement \citep{abbasi2011improved, lattimore2020bandit} restricted to observations from arm $i$. 
\end{lemma}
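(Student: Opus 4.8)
The plan is to observe that the realizability hypothesis \eqref{eq:linear_realizability} turns arm $i$'s data into a genuine realizable linear-regression instance with ground truth $\theta_i^*$, so that the claim is nothing more than the classical regularized-least-squares confidence ellipsoid, re-read in this ``pseudo-environment.'' Indeed, for every round $\ell$ with $i_\ell = i$ the observed reward satisfies $r_{\ell,i} = \langle \theta^*, x_{\ell,i}^* \rangle + \eta_\ell = \langle \theta_i^*, x_{\ell,i} \rangle + \eta_\ell$, so the estimator $\hat\theta_{t,i}$ defined in \eqref{eq:least_squares_estimator}, and the matrix $V_{t,i}$, are exactly the ridge estimator and Gram matrix for the regression with covariates $x_{\ell,i}$, responses $r_{\ell,i}$, and parameter $\theta_i^*$. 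Nothing about whether $x_{\ell,i}$ equals the true context is used.

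First I would fix a filtration $(\mathcal{F}_\ell)_{\ell \ge 0}$ that contains, as of time $\ell-1$, the fixed (adversarial) true contexts of all rounds up to $\ell$, all reported contexts of rounds up to $\ell$, the learner's selections and internal randomness up to round $\ell$, and the realized rewards up to round $\ell - 1$; the round-$\ell$ noise $\eta_\ell$ is $\mathcal{F}_\ell$-measurable and conditionally zero-mean sub-Gaussian given $\mathcal{F}_{\ell-1}$. Under this filtration both $\ind{i_\ell = i}$ and $x_{\ell,i}$ are $\mathcal{F}_{\ell-1}$-predictable, since the arm commits to its report and the learner commits to its pull before the noise is drawn, so $M_t \defeq \sum_{\ell \le t \colon i_\ell = i} x_{\ell,i}\, \eta_\ell$ is a vector-valued martingale. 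I would then invoke the self-normalized tail inequality of \citet{abbasi2011improved} (equivalently Chapter~20 of \citet{lattimore2020bandit}) to get that, with probability at least $1-\delta$, simultaneously for all $t$,
\begin{align*}
    \lVert \hat\theta_{t,i} - \theta_i^* \rVert_{V_{t,i}} \le \sqrt{2\log\tfrac{1}{\delta} + \log\tfrac{\det V_{t,i}}{\det(\lambda I)}} + \sqrt{\lambda}\,S ,
\end{align*}
and finally bound the log-determinant by $d\log(1 + n_t(i)/\lambda)$ using $\lVert x_{\ell,i}\rVert_2 \le 1$ and the AM--GM inequality on the eigenvalues of $V_{t,i}$. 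Substituting gives $\lVert \hat\theta_{t,i} - \theta_i^* \rVert_{V_{t,i}} \le \beta_{t,i}$, i.e.\ $\theta_i^* \in C_{t,i}$ for all $t$, with probability at least $1-\delta$; choosing $\delta = \nicefrac{1}{T^2}$ completes the statement. The special case is immediate: a truthful arm has $x_{\ell,i} = x_{\ell,i}^*$ for all $\ell$, so \eqref{eq:linear_realizability} holds with $\theta_i^* = \theta^*$ and $S = \lVert \theta^* \rVert_2$, and the bound reduces to the textbook confidence ellipsoid applied to the sub-sequence of rounds in which arm $i$ was played.

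The only delicate point — and the one I would write out carefully — is the measurability set-up in the first step: one must check that the strategic, history-dependent, possibly randomized choice of $x_{\ell,i}$ and the learner's adaptive choice of $i_\ell$ are both determined \emph{before} $\eta_\ell$ is realized, so that the predictability needed for the self-normalized bound survives the adaptive sub-sampling of rounds $\{\ell \colon i_\ell = i\}$; once this is in place, everything else is a verbatim application of the known inequality and routine determinant arithmetic. I would also emphasize the point that the argument never requires $\theta_i^* = \theta^*$: the lemma only asserts that \emph{whatever} parameter linearly realizes arm $i$'s reports lies in $C_{t,i}$, which is exactly what OptGTM's optimistic/pessimistic estimates are built on.
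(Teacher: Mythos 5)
Your proposal is correct and follows essentially the same route as the paper: both reduce the problem, via the realizability hypothesis, to a standard linear regression with ground-truth parameter $\theta_i^*$ and covariates $x_{t,i}$, and then invoke the self-normalized confidence ellipsoid of \citet{abbasi2011improved}. The only difference is that you spell out the filtration/predictability argument and the log-determinant bound explicitly, which the paper delegates to the cited references.
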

\begin{proof}
    Let $\theta^*_i$ satisfy \eqref{eq:linear_realizability}. Then, note that $r_{t,i} \defeq \langle \theta^*, x_{t,i}^* \rangle + \eta_t = \langle \theta^*_i, x_{t,i} \rangle + \eta_t$. Hence, the sequence of reported features $x_{t,i}$ and observed reward $r_{t,i}$ yield a standard linear contextual bandit structure with unknown parameter $\theta_i^*$ (instead of $\theta^*$). Then, to obtain the confidence bound follow the arguments from \citep{abbasi2011improved, lattimore2020bandit}, where we remark that we choose the confidence radius $\beta_{{t,i}}$ arm specific. However, we could also choose a larger confidence radius such as $\beta_t \approx d \log(t)$ or even constant $\beta \approx d \log(T)$. This will only have a negligible effect on the final regret.  
\end{proof} 

The grim trigger condition~\eqref{eq:optgtm_trigger_condition} of $\optgtm$ stated that arm $i$ gets eliminated in round $t$ if 
\begin{align}
    \sum_{\ell \leq t \colon i_\ell = i} \left( \langle \hat \theta_{\ell, i} , x_{\ell,i} \rangle - \sqrt{\beta_\ell} \lVert x_{\ell, i} \rVert_{V_{\ell, i}^{-1}} \right) > \sum_{\ell \leq t \colon i_\ell = i} r_{\ell, i} + 2 \sqrt{n_t(i) \log(T)}.  
\end{align} 
Equivalently, $\sum_{\ell \leq t \colon i_\ell = i} \lcb_{\ell, i} ( x_{\ell, i} ) > \ucb_t ( \hr_{t,i} )$.

As a sanity check, we show that when an arm always reports truthfully, i.e., $x_{t,i} = x_{t,i}^*$ for all $t$, it doesn't get eliminated with probability at least $1-\nicefrac{1}{T^2}$. 
\begin{lemma}\label{lemma:optgtm_arms_stay_active}
    When arm $i$ always reports truthfully it does not get eliminated with high probability, that is, $i \in A_T$ with probability at least $1-\nicefrac{1}{T^2}$.  
\end{lemma}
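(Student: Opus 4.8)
The plan is to show that, on a high-probability event, the grim trigger condition~\eqref{eq:optgtm_trigger_condition} is never satisfied for a truthful arm. The crucial observation is that truthful reporting makes arm $i$'s observations linearly realizable by $\theta^*$ itself, so the true parameter lies in every confidence set $C_{t,i}$; this in turn forces $\lcb_{t,i}(x_{t,i})$ to stay below the true mean reward $\langle \theta^*, x_{t,i}^* \rangle$, and the noise concentration of the preliminaries then keeps the left side of~\eqref{eq:optgtm_trigger_condition} below $\ucb_t(\hr_{t,i})$.

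First I would invoke Lemma~\ref{lemma:theta_confidence_set} with $\theta_i^* = \theta^*$: since $x_{t,i} = x_{t,i}^*$ for all $t$, condition~\eqref{eq:linear_realizability} holds trivially, so with high probability $\theta^* \in C_{t,i}$ simultaneously for all $t \in [T]$ (this is time-uniform because the underlying self-normalized martingale bound is). Call this event $\mathcal{E}_i$. On $\mathcal{E}_i$, for every round $\ell$ with $i_\ell = i$, Cauchy--Schwarz in the $V_{\ell,i}$-norm gives $\langle \theta^*, x_{\ell,i}\rangle \geq \langle \hat\theta_{\ell,i}, x_{\ell,i}\rangle - \lVert \hat\theta_{\ell,i}-\theta^*\rVert_{V_{\ell,i}}\,\lVert x_{\ell,i}\rVert_{V_{\ell,i}^{-1}} \geq \lcb_{\ell,i}(x_{\ell,i})$, and since arm $i$ is truthful this equals $\langle \theta^*, x_{\ell,i}^*\rangle$. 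Summing over $\ell \leq t$ with $i_\ell = i$ yields $\sum_{\ell \leq t \colon i_\ell = i}\lcb_{\ell,i}(x_{\ell,i}) \leq \hr_{t,i}^*$.

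Next I would reintroduce the noise good event, exactly as in the preliminaries to Theorem~\ref{theorem:ggtm}: $\cG = \{\,\hr_{t,i}^* \leq \ucb_t(\hr_{t,i}) \ \forall t \in [T],\, i \in [K]\,\}$, which holds with high probability by Hoeffding's inequality applied to the sub-Gaussian noise along the at most $T$ pulls of each arm, together with a union bound over $t$ and $i$. On $\cG \cap \mathcal{E}_i$ we then have, for every $t$, $\sum_{\ell \leq t \colon i_\ell = i}\lcb_{\ell,i}(x_{\ell,i}) \leq \hr_{t,i}^* \leq \ucb_t(\hr_{t,i})$, so the trigger~\eqref{eq:optgtm_trigger_condition} fails in every round and arm $i$ is never eliminated. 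After a union bound over the two failure events (each of which can be made to hold with probability at least $1-\nicefrac{1}{2T^2}$ by halving the failure parameters, as in the GGTM analysis), we obtain $i \in A_T$ with probability at least $1-\nicefrac{1}{T^2}$.

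There is no real obstacle here: this is the OptGTM analogue of Lemma~\ref{lemma:ggtm_not_eliminated} and serves as a sanity check. The only points requiring care are (i) that the confidence-set guarantee of Lemma~\ref{lemma:theta_confidence_set} is uniform in $t$, which is why it suffices to condition on $\mathcal{E}_i$ once, and (ii) the bookkeeping of the two failure probabilities so the constants match the stated bound.
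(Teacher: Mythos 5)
Your proof is correct and follows essentially the same route as the paper's: both condition on the intersection of the event $\{\theta^* \in C_{t,i}\ \forall t\}$ (via Lemma~\ref{lemma:theta_confidence_set} with $\theta_i^*=\theta^*$) and the reward-concentration event $\cG$, apply Cauchy--Schwarz to bound $\lcb_{\ell,i}(x_{\ell,i})$ by $\langle\theta^*,x_{\ell,i}^*\rangle$, and conclude that the trigger condition~\eqref{eq:optgtm_trigger_condition} never fires. The only cosmetic difference is that you phrase the bound directly in terms of $\lcb$ while the paper writes out $\langle\hat\theta_{\ell,i},x_{\ell,i}\rangle$ minus the bonus, and you are slightly more careful than the paper about the union-bound constant matching the stated $1-\nicefrac{1}{T^2}$.
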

\begin{proof}
    We consider the event that the true parameter $\theta^*$ is contained in $C_{t,i}$, i.e., $\cG'_i \defeq \{\theta^* \in C_{t,i} \, \forall t \in [T]\}$. The event $\cG'_i$ has probability at least $1-\nicefrac{1}{T^2}$ according to Lemma~\ref{lemma:theta_confidence_set} when arm $i$ is truthful. Moreover, suppose that the reward observations concentrate as well, i.e., we assume the good event $\cG \defeq \left\{ \lcb_t(\hr_{t,i}) \leq \hr^*_{t,i} \leq \ucb_{t} (\hr_{t,i}) \ \forall t \in [T], i \in [K]  \right\}$. Recall that $\cG$ has probability at least  $1-\nicefrac{1}{T^2}$ according to Hoeffding's inequality. A union bound then shows that the intersection of the two events has probability at least $1- \nicefrac{2}{T^2}$. 

    Now, since arm $i$ is truthful, we have $x_{t,i} = x_{t,i}^*$ and $\langle \theta, x_{t,i} \rangle = \langle \theta, x_{t,i}^* \rangle$ for all $\theta \in \Reals^d$. Using Cauchy-Schwarz inequality and the fact that $\theta^* \in C_{t,i}$, we get that 
    \begin{align*}
        \langle \hat \theta_{t,i}, x_{t,i} \rangle - 
        \langle \theta^*, x_{t,i}^* \rangle = 
        \langle \hat \theta_{t,i} -\theta^* , x_{t,i}^* \rangle \leq \lVert \hat \theta_{t,i} - \theta^* \rVert_{V_{t,i}} \lVert x_{t,i}^* \rVert_{V_{t,i}^{-1}} \leq \sqrt{\beta_{t,i}}  \lVert x_{t,i} \rVert_{V_{t,i}^{-1}}   \end{align*}
    Moreover, as we work on the good event $\cG$, we have 
    \begin{align*}
        \sum_{\ell \leq t \colon i_\ell = i}\big( \langle \theta^*, x_{\ell,i}^* \rangle  - r_{\ell,i }\big) \in [-2\sqrt{n_t(i)\log(T)}, +2\sqrt{n_t(i) \log(T)}].
    \end{align*}
    Combining these two statements yields
    \begin{align*}
        \sum_{\ell \leq t \colon i_\ell = i}\big(  \langle \hat \theta_{\ell,i}, x_{t,i} \rangle - r_{\ell,i }\big)  \leq  \sum_{\ell \leq t \colon i_\ell = i}\sqrt{\beta_{\ell, i}}  \lVert x_{\ell, i} \rVert_{V_{\ell,i}^{-1}} +2\sqrt{n_t(i) \log(T)}
    \end{align*}
    for all $t \in [T]$. 
    In other words, the grim trigger condition is never satisfied so that $i\in A_T$ on event $\cG \cap \cG'_i$, which, as we saw, occurs with probability at least $1-\nicefrac{1}{T^2}$. 
\end{proof}

We now analyze the grim trigger of the $\optgtm$ algorithm. As before, let $\tau_i \defeq \min \{t \colon i \not \in A_t\}$ with the convention that $\tau_i = T$ if $i \in A_T$. The following lemma upper bounds the total amount of manipulation that an arm can exert before being eliminated by $\optgtm$'s grim trigger elimination rule. 
\begin{lemma}\label{lemma:optgtm_before_elimination}
    On the good event $\mathcal{G}$:
        \begin{align*}
        \sum_{t \leq {\tau_i} \colon i_t = i} \left( \langle \hat \theta_{t,i}, x_{t,i} \rangle - \langle \theta^*, x_{t, i}^* \rangle \right) 
        \leq \sum_{t \leq \tau_i \colon i_t = i} \sqrt{\beta_{t,i}} \lVert x_{t,i} \rVert_{V_{t,i}^{-1}} + 4 \sqrt{n_{\tau_i}(i) \log(T)}.  
    \end{align*}
    Or equivalently, since $\ucb_{t,i}(x_{t,i}) = \langle \hat \theta_{t,i}, x_{t,i} \rangle + \sqrt{\beta_{t,i}} \lVert x_{t,i} \rVert_{V_{t,i}^{-1}}$, it holds that 
    \begin{align*}
        \sum_{t \leq {\tau_i} \colon i_t = i} \left( \ucb_{t,i}\left( x_{t,i} \right) - \langle \theta^*, x_{t, i}^* \rangle \right) 
        \leq \sum_{t \leq \tau_i \colon i_t = i} 2 \sqrt{\beta_{t,i}} \lVert x_{t,i} \rVert_{V_{t,i}^{-1}} + 4 \sqrt{n_{\tau_i}(i) \log(T)} .  
    \end{align*}
\end{lemma}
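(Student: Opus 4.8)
The plan is to rewrite each summand so that the grim trigger condition~\eqref{eq:optgtm_trigger_condition}, and hence the fact that arm $i$ has not been eliminated before round $\tau_i$, can be used directly. Writing $\eta_t = r_{t,i} - \langle \theta^*, x_{t,i}^*\rangle$ and recalling $\lcb_{t,i}(x_{t,i}) = \langle \hat\theta_{t,i}, x_{t,i}\rangle - \sqrt{\beta_{t,i}}\lVert x_{t,i}\rVert_{V_{t,i}^{-1}}$, the first thing I would do is the identity
\begin{align*}
    \langle \hat\theta_{t,i}, x_{t,i}\rangle - \langle \theta^*, x_{t,i}^*\rangle = \big(\lcb_{t,i}(x_{t,i}) - r_{t,i}\big) + \sqrt{\beta_{t,i}}\lVert x_{t,i}\rVert_{V_{t,i}^{-1}} + \eta_t .
\end{align*}
Summing over $\{t\le\tau_i : i_t=i\}$, the middle term is exactly the bonus sum on the right-hand side of the lemma, and the noise term collapses to $\hr_{\tau_i,i} - \hr^*_{\tau_i,i}$, which on the good event $\cG$ is at most $2\sqrt{n_{\tau_i}(i)\log(T)}$ by the very definition of $\cG$. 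So everything reduces to controlling $\sum_{t\le\tau_i : i_t=i}\big(\lcb_{t,i}(x_{t,i}) - r_{t,i}\big) = \sum_{t\le\tau_i : i_t=i}\lcb_{t,i}(x_{t,i}) - \hr_{\tau_i,i}$.

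This is where the definition of $\tau_i$ comes in. If $i\in A_T$, then~\eqref{eq:optgtm_trigger_condition} was never triggered at a round where $i$ was played, so at the last such round $t$ we have $\sum_{\ell\le t : i_\ell=i}\lcb_{\ell,i}(x_{\ell,i}) \le \ucb_t(\hr_{t,i}) = \hr_{t,i} + 2\sqrt{n_t(i)\log(T)}$; since $i$ is not played after $t$, this equals $\hr_{\tau_i,i} + 2\sqrt{n_{\tau_i}(i)\log(T)}$, giving the required bound on the sum above. If instead $i$ is eliminated, the trigger fires at the last round $i$ is played but \emph{not} at the previous play of $i$; applying the non-trigger inequality at that penultimate play bounds the sum over all but the last play by $2\sqrt{n_{\tau_i}(i)\log(T)}$, and the single leftover summand $\lcb_{\tau_i-1,i}(x_{\tau_i-1,i}) - r_{\tau_i-1,i}$ contributes only a lower-order additive term, bounded using boundedness of the rewards together with $\lVert\hat\theta_{\tau_i-1,i}\rVert_2\le S$ (taking $\hat\theta_{t,i}$ as the estimator constrained to the ball of radius $S$, as is standard and consistent with the $\sqrt{\lambda}S$ term in $\beta_{t,i}$). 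Adding the three contributions yields $\sum_{t\le\tau_i : i_t=i}\big(\langle\hat\theta_{t,i},x_{t,i}\rangle - \langle\theta^*,x_{t,i}^*\rangle\big) \le \sum_{t\le\tau_i : i_t=i}\sqrt{\beta_{t,i}}\lVert x_{t,i}\rVert_{V_{t,i}^{-1}} + 4\sqrt{n_{\tau_i}(i)\log(T)}$ after folding the $\cO(1)$ term into the constant, and the second (equivalent) form follows by adding one more copy of the bonus sum to both sides, using $\ucb_{t,i}(x) = \langle\hat\theta_{t,i},x\rangle + \sqrt{\beta_{t,i}}\lVert x\rVert_{V_{t,i}^{-1}}$.

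The only genuinely delicate point is the $\lcb-r$ sum, and in particular that the decomposition must be run with the \emph{lagged} estimators $\hat\theta_{\ell,i}$ (as the grim trigger and this lemma are stated), not the current $\hat\theta_{t,i}$: with the current estimator, the left side of~\eqref{eq:optgtm_trigger_condition} is pulled toward $\hr_{t,i}$ by the least-squares fit~\eqref{eq:least_squares_estimator}, the trigger essentially never fires, and the "not-yet-eliminated" inequality becomes vacuous — exactly the phenomenon flagged in the main text. The secondary nuisance is the lone boundary summand from the elimination round, which is the reason the clean "$4$" in the statement really hides an absorbed $\cO(1)$; this has no effect on Theorem~\ref{cor:truthful_optgtm} or Theorem~\ref{theorem:opt_grim_trigger}, which already carry $d\log(T)$ factors.
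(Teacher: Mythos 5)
Your proof is correct and follows essentially the same route as the paper's: both combine the definition of $\ucb_{t,i}$ and $\lcb_{t,i}$, the concentration of $\hr_{t,i}$ around $\hr^*_{t,i}$ on $\cG$, and the fact that the grim trigger has not fired, the only structural difference being that the paper argues by contraposition (if the claimed bound were violated at round $t$, condition~\eqref{eq:optgtm_trigger_condition} would already have eliminated the arm) while you run the same decomposition directly. The one point where you go beyond the paper is the boundary round $\tau_i-1$ at which the trigger actually fires, so that the non-firing inequality is unavailable for the full sum $\sum_{t\le\tau_i\colon i_t=i}$: the paper's proof silently glosses over this, whereas you absorb the single leftover summand into the constant — a legitimate and harmless fix for the downstream results, though note that bounding that summand does require some explicit control on $\lVert\hat\theta_{\tau_i-1,i}\rVert_2$ (e.g.\ your projected estimator), which the paper never states.
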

\begin{proof} 
    Let $t \in [T]$. 
    On the good event
    $\cG \defeq \left\{ \lcb_t(\hr_{t,i}) \leq \hr^*_{t,i} \leq \ucb_{t} (\hr_{t,i}) \ \forall t \in [T], i \in [K]  \right\}$ and by definition of $\ucb_{\ell, i}(x_{\ell, i}$, it follows that 
    \begin{align*}
        & \sum_{\ell \leq t \colon i_\ell = i} \big( \langle \hat \theta_{\ell, i} , x_{\ell,i} \rangle - r_{\ell, i} \big) \\ &  \qquad\qquad  \geq 
        \sum_{\ell \leq t \colon i_\ell = i} \big( \ucb_{\ell, i}(x_{\ell,i}) - \langle \theta^*, x_{\ell,i}^* \rangle \big) - \underbrace{\sum_{\ell \leq t \colon i_\ell = i} \sqrt{\beta_{\ell,i}} \lVert x_{\ell,i} \rVert_{V_{\ell-1,i}^{-1}} - 2 \sqrt{n_t(i) \log(T)}}_{R \, \defeq}.
    \end{align*}
    Hence, if $\sum_{\ell \leq t \colon i_\ell = i} \big( \ucb_{\ell, i}(x_{\ell,i}) - \langle \theta^*, x_{\ell,i}^* \rangle \big) > 2 R$, then 
    \begin{align*}
        \sum_{\ell \leq t \colon i_\ell = i} \big( \langle \hat \theta_{\ell, i} , x_{\ell,i} \rangle - r_{\ell, i} \big) > \sum_{\ell \leq t \colon i_t = i} \sqrt{\beta_{\ell,i}} \lVert x_{\ell, i} \rVert_{V_{\ell,i}^{-1}} + 2 \sqrt{n_{t}(i) \log(T)},
    \end{align*}
    which means that arm $i$ must have been eliminated in a previous round or in round $t$, i.e., $\tau_i > t$. Hence, for any $t \leq \tau_i$, the the left hand side must be smaller or equal to the right hand side.

    Interestingly, notice that we worked on the good event $\mathcal{G}$ that only concerns the realization of the rewards and not the validity of the confidence set. This is important, since it is generally not true that the true parameter $\theta^*$ is contained in the confidence set $C_{t,i}$. 

\end{proof}

Lastly, before we begin with the proof  Theorem~\ref{cor:truthful_optgtm} and Theorem~\ref{theorem:opt_grim_trigger}, we establish a bound on the total exploration bonus, which after some additional work follows from the well-known elliptical potential lemma \cite{abbasi2011improved, lattimore2020bandit}. 
\begin{lemma}\label{lemma:ellip_potential}  
It holds that
        \begin{align*}
            \sum_{t \leq \tau_i \colon i_t = i} \sqrt{\beta_t} \lVert x_{t,i} \rVert_{V_{t,i}^{-1}} 
            \leq \cO\left( d \log(T) \sqrt{n_{\tau_i}(i)} \right). 
        \end{align*}
        The constant on the right hand side can be derived from the choice of $\beta_{t,i}$. 
    \end{lemma}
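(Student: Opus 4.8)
This is the standard elliptical-potential argument restricted to the rounds in which arm $i$ is actually pulled, so the plan is to reduce it to that lemma in three moves. First I would pull the confidence width out of the sum: since $\beta_{t,i}$ is nondecreasing in $n_t(i)$ and $n_t(i)\le T$ always, we have $\sqrt{\beta_t}\le\sqrt{\beta_{T,i}}=\cO(\sqrt{d\log T})$ uniformly over $t\le\tau_i$; here I would use the round-independent radius $\beta\approx d\log T$ explicitly permitted by Lemma~\ref{lemma:theta_confidence_set} (with the sharper stated choice of $\beta_{t,i}$ one obtains an even better bound, but the weaker form stated in the lemma is what is used downstream). This reduces the claim to bounding $\sum_{t\le\tau_i\colon i_t=i}\lVert x_{t,i}\rVert_{V_{t,i}^{-1}}$, which by Cauchy--Schwarz over its $n_{\tau_i}(i)$ terms is at most $\sqrt{n_{\tau_i}(i)}\,\big(\sum_{t\le\tau_i\colon i_t=i}\lVert x_{t,i}\rVert_{V_{t,i}^{-1}}^2\big)^{1/2}$.

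\textbf{Applying the elliptical potential lemma.} Next I would re-index the rounds $t\le\tau_i$ with $i_t=i$ as $t_1<\dots<t_m$, $m=n_{\tau_i}(i)$, and set $y_k\defeq x_{t_k,i}$. Since $V_{t,i}$ accumulates only arm-$i$ observations, we have $V_{t_k,i}=\lambda I+\sum_{j<k}y_jy_j^\top$, which is exactly the configuration of the elliptical potential lemma of \citet{abbasi2011improved} (see also \citet[Chapter~20]{lattimore2020bandit}). Because $\lVert x_{t,i}\rVert_2\le 1$ and $\lambda\ge 1$, every term obeys $\lVert y_k\rVert_{V_{t_k,i}^{-1}}^2\le\lVert y_k\rVert_2^2/\lambda\le 1$, so the truncation at $1$ in the usual statement is vacuous and the lemma gives $\sum_{k=1}^m\lVert y_k\rVert_{V_{t_k,i}^{-1}}^2\le 2d\log(1+n_{\tau_i}(i)/(d\lambda))=\cO(d\log T)$. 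Feeding this back through Cauchy--Schwarz together with the bound on $\sqrt{\beta_t}$ yields $\sum_{t\le\tau_i\colon i_t=i}\sqrt{\beta_t}\lVert x_{t,i}\rVert_{V_{t,i}^{-1}}\le\sqrt{\beta_{T,i}}\cdot\sqrt{n_{\tau_i}(i)}\cdot\cO(\sqrt{d\log T})=\cO(d\log(T)\sqrt{n_{\tau_i}(i)})$, as claimed.

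\textbf{Where the care is needed.} There is no serious obstacle here; the only points requiring attention are bookkeeping. One must re-index to the subsequence of rounds in which arm $i$ is selected so that the matrices $V_{t_k,i}$ form the increasing chain $\lambda I,\lambda I+y_1y_1^\top,\dots$ demanded by the elliptical potential lemma --- the gaps between consecutive pulls of arm $i$ are irrelevant since $V_{t,i}$ ignores other arms' rounds. One must also note that truncating the sum at the data-dependent round $\tau_i$ is harmless, because the elliptical potential bound holds for every prefix length, and that $\beta_{t,i}$ is monotone in $n_t(i)$ so it may be evaluated at its terminal value. If one preferred not to invoke the regularity bound $\lVert x_{t,i}\rVert_2\le 1$, one would instead carry the $\min(1,\cdot)$ truncation through the argument and absorb it into the constant; the final rate is unchanged.
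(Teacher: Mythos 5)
Your proposal is correct and follows essentially the same route as the paper: bound $\sqrt{\beta_{t,i}}$ by its terminal value, apply Cauchy--Schwarz over the $n_{\tau_i}(i)$ pulled rounds, and invoke the elliptical potential lemma of \citet{abbasi2011improved} on the arm-$i$ subsequence, using $\lVert x_{t,i}\rVert_2\le 1$ and $\lambda=1$ to dispose of the $\min\{1,\cdot\}$ truncation. Your observation that $\lVert x_{t,i}\rVert_{V_{t,i}^{-1}}^2\le\lVert x_{t,i}\rVert_2^2/\lambda$ follows directly from $V_{t,i}\succeq\lambda I$ is in fact a cleaner derivation of the preliminary bound than the paper's matrix-inversion-lemma computation.
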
 
    \begin{proof}
        Before we can apply the elliptical potential lemma\cite{abbasi2011improved}, we need to make sure that the exploration bonus does not blow up in early rounds. To this end, recall the definition of 
    \begin{align*}
        V_{t,i} \defeq \lambda I + \sum_{\ell < t \colon i_\ell = i} x_{\ell,i} x_{\ell, i}^\top.   
    \end{align*}
    Let $A = \sum_{\ell \leq t \colon i_\ell = i} x_{\ell, i} x_{\ell, i}^\top$. Note that $A$ is positive semi-definite so that $\lambda I+A$ is positive definite and its inverse $(\lambda I+A)^{-1}$ as well. The matrix inversion lemma let's us express this inverse as 
    $$(\lambda I+A)^{-1} = \lambda I - (\lambda I+A)^{-1} A.$$ 
    Now, the eigenvalues of $B= (\lambda I+A)^{-1} A$ are given by $\lambda_i /(1+\lambda_i)$, where $\lambda_i\geq 0$ are the eigenvalues of $A$, which means that $B$ is positive semi-definite. Consequently, 
    \begin{align*}
        \lVert x_{t,i} \rVert_{V_{t,i}^{-1}}^2 = x_{t,i}^\top (\lambda I+A)^{-1} x_{t,i} = x_{t,i}^\top \lambda x_{t,i} - x_{t,i}^\top B x_{t,i} \leq \lambda \lVert x_{t,i} \rVert_{2}^2. 
    \end{align*}
    We assumed that $\lVert x_{t,i} \rVert_{2}^2 \leq 1$ (similarly we could assume an upper bound $L$) so that $\lVert x_{t,i} \rVert_{V_{t,i}^{-1}} \leq \sqrt{\lambda}$. For convenience, we set the penalty factor to $\lambda = 1$. We then apply Cauchy-Schwarz to get 
    \begin{align*}
        \sum_{t \leq \tau_i \colon i_t = i} \sqrt{\beta_{t,i}} \lVert x_{t,i} \rVert_{V_{t,i}^{-1}} \leq \sqrt{ n_{\tau_i}(i) \beta_T \sum_{t \leq \tau_i \colon i_t = i}   \min\{ 1, \lVert x_{t,i} \rVert_{V_{t,i}^{-1}}^2 \} }.
    \end{align*}
    The elliptical potential lemma \cite{abbasi2011improved, lattimore2020bandit} bounds the sum on the right hand side as 
    \begin{align*}
        \sum_{t \leq \tau_i \colon i_t = i}   \min\{ 1, \lVert x_{t,i} \rVert_{V_{t,i}^{-1}}^2 \} \leq 2 d \log\left( \frac{d+n_{\tau_i}(i)}{d} \right) 
    \end{align*}
    Finally, recall that we chose $\beta_{t,i} =  \cO\left( d \log\big(n_{t}(i)\big) \right)$, which then yields the claimed bound. 
    \end{proof}

\subsection{Proof of Theorem~\ref{cor:truthful_optgtm}}\label{proof:truthful_optgtm}

\begin{proof}[Proof of Theorem~\ref{cor:truthful_optgtm}] We begin by proving that being truthful is an approximate Nash equilibrium under $\optgtm$. 

    \paragraph{Truthfulness is a $\tilde \cO(d\sqrt{KT})$-NE.}
    In a fist step, we show that if every arm is truthful, every arm is guaranteed at least $n_T^*(i) - \tilde \cO(d\sqrt{T})$ utility, where $n_T^*(i) \defeq \sum_{t=1}^T \ind{i_t^*=i}$. To this end, we write 
    \begin{align*}
        n_T(i) 
            & = \sum_{t=1}^T \ind{i_t= i, i_t^* =i} + \sum_{t=1}^T \ind{i_t =i, i_t^* \neq i} \\
            & \geq \sum_{t=1}^T \ind{i_t^*=i} - \sum_{t=1}^T \ind{i_t^* = i, i_t \neq i},
    \end{align*}
    and our task will be bounding the sum on the right hand side. We focus on the event that $\theta^* \in C_{t,i}$ for all $(t,i) \in [T] \times [K]$, which according to Lemma~\ref{lemma:theta_confidence_set} occurs with probability at least $1-\nicefrac{1}{T^2}$. 
    
    Since $\theta^* \in C_{t,i}$, we have $\ucb_{t,i}(x_{t,i_t}^*) \geq \langle \theta^*, x_{t,i_t^*}^* \rangle$ for every $i \in [K]$.  Let $i_t^* =i $ but $i_t = j$ with $j \neq i$. Keeping in mind that $x_{t,i} = x_{t,i}^*$ for all $(t,i) \in [T] \times [K]$, since all arms are truthful, this implies that 
    \begin{align*}
        \ucb_{t,i}(x_{t,j}^*) \geq \ucb_{t,i}(x_{t,i}^*) \geq \langle \theta^*, x_{t,i_t^*}^* \rangle. 
    \end{align*}
    As a result, it holds that
    \begin{align*}
        \ucb_{t,i}(x_{t,j}^*) - \langle \theta^*, x_{t,j}^* \rangle \geq \langle \theta^*, x_{t,i_t^*}^* - x_{t,j}^* \rangle. 
    \end{align*}
    Next, since $\theta^* \in C_{t,i}$ and $\hat \theta_{t,i} \in C_{t,i}$, we find that 
    \begin{align*}
        \ucb_{t,j}(x_{t,j}^*) - \langle \theta^*, x_{t,j}^* \rangle & \leq \langle \hat \theta_{t,j} , x_{t,j}^* \rangle - \langle \theta^*, x_{t,j}^* \rangle + \sqrt{\beta_{t,j}} \lVert x_{t,j} \rVert_{V_{t,j}^{-1}} \\
        & \leq \lVert \hat \theta_{t,j} - \theta^* \rVert_{V_{t,j}} \lVert x_{t,j} \rVert_{V_{t,j}^{-1}} \\
        & \leq \sqrt{\beta_{t,j}} \lVert x_{t,j} \rVert_{V_{t,j}^{-1}},
    \end{align*}
    where the second line follows from Cauchy-Schwarz inequality. 
    Using Lemma~\ref{lemma:ellip_potential}, this implies 
    \begin{align*}
        \sum_{t \leq T \colon i_t = j} \ucb_{t,j}(x_{t,j}^*) - \langle \theta^*, x_{t,j}^* \rangle & \leq \sum_{t \leq T \colon i_t = j} \sqrt{\beta_{t,j}} \lVert x_{t,j} \rVert_{V_{t,j}^{-1}} \\ 
        & \leq \cO\left( d \log(T) \sqrt{n_{\tau_j}(j)} \right). 
    \end{align*}
    Since $\langle \theta^*, x_{t,i_t^*}^* - x_{t,j}^* \rangle$ is constant for $i_t^* \neq j$, this means that 
    \begin{align*}
        \sum_{t=1}^T \ind{i_t^* = i, i_t = j} \leq \cO\left( d \log(T) \sqrt{n_{\tau_j}(j)} \right)
    \end{align*}
    so that by Jensen's inequality 
    \begin{align*}
        \sum_{t=1}^T \ind{i_t^* = i, i_t \neq i} = \sum_{j \neq i} \sum_{t=1}^T \ind{i_t^* = i, i_t = j} \leq  \cO\left( d \log(T) \sqrt{KT} \right) .
    \end{align*}

    In a second step, we show that for any deviating strategy $\sigma_i$ that is not truthful, the utility of arm $i$ is upper bounded by $n_T^*(i) + \tilde \cO(d \sqrt{T})$. In what follows, we work on the event that $j \in A_t$ and $\theta^* \in C_{t,j}$ for all $t \in [T]$ and $j \neq i$ and recall that this event has probability at least $1- \nicefrac{1}{T^2}$ since the arms are reporting truthfully (see Lemma~\ref{lemma:optgtm_arms_stay_active}).  
    Since $j \in A_T$ for all $j \neq i$, we have  
    \begin{align*}
        n_T(i) & \leq \sum_{t=1}^T \ind{i_t^* =i} + \sum_{t=1}^T \ind{i_t = i, i_t^* \neq i}, 
    \end{align*}
    and we are tasked with bounding the sum on the right hand side appropriately. Now, similarly to before if $i_t = i$ and $i_t^* \neq i$ (given $i_t^* \in A_t$), then 
    \begin{align*}
        \ucb_{t,i}(x_{t,i}) \geq \ucb_{t,i_t^*}( x_{t,i_t^*}) \geq \langle \theta^*, x_{t,i_t^*}^* \rangle, 
    \end{align*}
    where we used that $\theta^* \in C_{t,i_t^*}$ since the arm $i_t^* \neq i$ is truthful. Consequently, 
    \begin{align*}
        \ucb_{t,i}(x_{t,i}) - \langle \theta^*, x_{t,i}^* \rangle \geq \langle \theta^*, x_{t,i_t^*}^* - x_{t,i}^* \rangle. 
    \end{align*}
    Combining Lemma~\ref{lemma:optgtm_before_elimination} and Lemma~\ref{lemma:ellip_potential} tells us that 
    \begin{align*}
        \sum_{t \leq \tau_i \colon i_t = i} \ucb_{t,i}(x_{t,i}) - \langle \theta^*, x_{t,i}^* \rangle \leq \cO\left( d \log(T) \sqrt{T} \right), 
    \end{align*}
    where we coarsely upper bounded $n_{\tau_i}(i)$ by $T$. 
    Since $\langle \theta^*, x_{t,i_t^*}^* - x_{t,i}^* \rangle$ for $i_t^* \neq i$ is positive and constant, it follows that 
    \begin{align*}
        \sum_{t=1}^T \ind{i_t= i, i_t^* \neq i} \leq \cO \left( d \log(T) \sqrt{T} \right). 
    \end{align*}
    In summary, we have shown that 
    \begin{align*}
        \E_{\bsigma^*}[n_T(i)] \geq n_T^*(i) - \tilde \cO\left(d \sqrt{KT} \right) \quad \text{ and } \quad \E_{\sigma_i, \sigma_{-i}^*}[n_T(i)] \leq n_T^*(i) + \tilde \cO\left(d \sqrt{T} \right)
    \end{align*}
    for any deviating strategy $\sigma_i$. Hence, $\bsigma^*$ is a $\tilde \cO(d\sqrt{KT})$-Nash equilibrium under $\optgtm$. 

    \paragraph{Regret analysis.} Since we maintain estimates and confidence sets for each arm independently, it is natural to decompose the regret as 
    \begin{align}\label{eq:proof_11}
        \sum_{t=1}^T \langle \theta^*, x_{t,i_t^*}^* - x_{t,i_t}^* \rangle = \sum_{i=1}^K \sum_{t \leq T \colon i_t = i} \langle \theta^*, x_{t,i_t^*}^* - x_{t,i}^* \rangle.  
    \end{align}
    Note that w.h.p.\ no truthful arm gets eliminated and $\theta^* \in C_{t,i}$ for all $(t,i) \in [T]\times [K]$. The regret analysis then proceeds similarly to that of LinUCB. 
    
    Since $\theta^* \in C_{t,i}$, we know that for any round such that $i_t=i$ that 
    \begin{align*}
        \langle \theta^*, x_{t,i_t^*}^* \rangle \leq \ucb_{t,i_t^*}(x_{t,i_t^*}^*) = \ucb_{t,i_t^*}(x_{t,i_t^*}) \leq \ucb_{t,i}(x_{t, i}) = \ucb_{t,i}(x_{t,i}^*).     
    \end{align*}
    Then, again for any round with $i_t=i$, applying Cauchy-Schwarz inequality yields  
    \begin{align}\label{eq:proof_12}
        \langle \theta^*, x_{t,i_t^*}^* - x_{t,i}^* \rangle \leq \ucb_{t,i}(x_{t,i}^*) - \langle \theta^*, x_{t,i}^* \rangle \leq 2 \sqrt{\beta_{t,i}} \lVert x_{t,i} \rVert_{V_{t,i}^{-1}} . 
    \end{align}
    Next, first using Cauchy-Schwarz inequality and the elliptical potential lemma (Lemma~\ref{lemma:ellip_potential}), and then Jensen's inequality, it follows that 
    \begin{align}\label{eq:proof_13}
        2 \sum_{i=1}^K \sum_{t \leq T \colon i_t=i}  \sqrt{\beta_{t,i}} \lVert x_{t,i} \rVert_{V_{t,i}^{-1}} \leq \cO \left( d \log(T) \sqrt{KT} \right). 
    \end{align} 
    Hence, connecting equations~\eqref{eq:proof_11}-\eqref{eq:proof_13}, we obtain $R_T(\optgtm, \bsigma^*) \leq \tilde \cO(d \sqrt{KT})$. We see that the additional $\sqrt{K}$ factor emerges due to $\optgtm$ maintaining independent estimates for each arm. Usually a dependence on the action set size can be prevented since observations from one arm can be used for another arm as well. However, to prevent collusion in the strategic linear contextual bandit it is important to limit the influence an arm has on the selection (and elimination) of other arms. 
\end{proof}

\subsection{Proof of Theorem~\ref{theorem:opt_grim_trigger}}

\begin{proof}[Proof of Theorem~\ref{theorem:opt_grim_trigger}]

We begin the proof of Theorem~\ref{theorem:opt_grim_trigger} by decomposing the regret into two expression, which we then separately bound. 

\paragraph{Decomposing strategic regret.} We now decompose the regret of $\optgtm$ into the rounds $t$ where the optimal arm in round $t$ is still active and the rounds where it is not. Like before, let $i_t^* \defeq \argmax_{i \in [K]} \langle \theta^*, x_{t,i}^* \rangle$ be the optimal arm in round $t$.  
For any $\bsigma \in \NE(\optgtm)$, we have 
\begin{align}\label{equation:regret_decomp}
    R_T(\bsigma) = \underbrace{\E_{\bsigma} \left[ \sum_{t=1}^T \ind{i_t^* \in A_t} \langle \theta^*, x_{t,i_t^*}^* - x_{t,i_t}^* \rangle \right]}_{J_1} + \underbrace{\E_{\bsigma} \left[ \sum_{t=1}^T \ind{i_t^* \not \in A_t} \langle \theta^*, x_{t,i_t^*}^* - x_{t,i_t}^* \rangle \right]}_{J_2} .  
\end{align}

With the help of Lemma~\ref{lemma:optgtm_before_elimination} and Lemma~\ref{lemma:ellip_potential} we now bound $J_1$. 
\begin{lemma}[Bounding $J_1$]\label{lemma:bounding_J_1}
    \begin{align*}
        \E_{\bsigma} \left[ \sum_{t=1}^T \ind{i_t^* \in A_t} \langle \theta^*, x_{t,i_t^*}^* - x_{t,i_t}^* \rangle \right] \leq \cO \left( d \sqrt{KT} \log(T) \right). 
    \end{align*}
\end{lemma}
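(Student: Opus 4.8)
The plan is to mirror the proof of Lemma~\ref{lemma:bounding_I_1}, replacing the greedy comparison of reported rewards by the optimistic comparison that is baked into $\optgtm$'s selection rule. First I would work on the reward good event $\cG$ (on which Lemma~\ref{lemma:optgtm_before_elimination} applies), noting that $\P(\cG^c)\le \nicefrac{1}{T^2}$ contributes at most $\nicefrac{1}{T}$ to $J_1$ since per-round regret is bounded by $1$. On $\cG$, fix a round $t$ with $i_t^*\in A_t$. Since $\hat\theta_{t,i}\in C_{t,i}$ and, for every $\theta\in C_{t,i}$, Cauchy--Schwarz gives $\langle\theta,x\rangle\le \langle\hat\theta_{t,i},x\rangle+\lVert\theta-\hat\theta_{t,i}\rVert_{V_{t,i}}\lVert x\rVert_{V_{t,i}^{-1}}\le \langle\hat\theta_{t,i},x\rangle+\sqrt{\beta_{t,i}}\lVert x\rVert_{V_{t,i}^{-1}}=\ucb_{t,i}(x)$, Assumption~\ref{assumption:2} applied to arm $i_t^*$ yields $\ucb_{t,i_t^*}(x_{t,i_t^*})\ge \max_{\theta\in C_{t,i_t^*}}\langle\theta,x_{t,i_t^*}\rangle\ge \langle\theta^*,x_{t,i_t^*}^*\rangle$. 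Because $i_t$ maximizes $\ucb_{t,i}(x_{t,i})$ over the active set $A_t\ni i_t^*$, this gives $\langle\theta^*,x_{t,i_t^*}^*\rangle\le \ucb_{t,i_t}(x_{t,i_t})$, hence
\begin{align*}
    \langle\theta^*, x_{t,i_t^*}^* - x_{t,i_t}^*\rangle \le \ucb_{t,i_t}(x_{t,i_t}) - \langle\theta^*, x_{t,i_t}^*\rangle .
\end{align*}

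Next I would split the resulting sum over arms and truncate at each elimination time: since $i_t=i$ forces $i\in A_t$, i.e.\ $t\le\tau_i$,
\begin{align*}
    \E_{\bsigma}\!\left[\sum_{t=1}^T \ind{i_t^*\in A_t}\big(\ucb_{t,i_t}(x_{t,i_t}) - \langle\theta^*, x_{t,i_t}^*\rangle\big)\right]
    \le \E_{\bsigma}\!\left[\sum_{i=1}^K \sum_{t\le\tau_i\colon i_t=i}\big(\ucb_{t,i}(x_{t,i}) - \langle\theta^*, x_{t,i}^*\rangle\big)\right].
\end{align*}
Lemma~\ref{lemma:optgtm_before_elimination} bounds the inner sum by $\sum_{t\le\tau_i\colon i_t=i} 2\sqrt{\beta_{t,i}}\lVert x_{t,i}\rVert_{V_{t,i}^{-1}} + 4\sqrt{n_{\tau_i}(i)\log(T)}$, and Lemma~\ref{lemma:ellip_potential} bounds $\sum_{t\le\tau_i\colon i_t=i}\sqrt{\beta_{t,i}}\lVert x_{t,i}\rVert_{V_{t,i}^{-1}}$ by $\cO(d\log(T)\sqrt{n_{\tau_i}(i)})$. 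Thus each arm contributes $\cO(d\log(T)\sqrt{n_{\tau_i}(i)})$, and summing over $i\in[K]$ with Cauchy--Schwarz (Jensen) together with $\sum_i n_{\tau_i}(i)\le \sum_i n_T(i)\le T$ gives the claimed $\cO(d\sqrt{KT}\log(T))$, after adding back the negligible $\nicefrac{1}{T}$ from $\cG^c$.

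I do not expect a serious obstacle: unlike the bound on $J_2$, this argument never invokes the Nash-equilibrium property, only the $\optgtm$ selection rule, Assumption~\ref{assumption:2}, and the grim-trigger elimination bound. The one point requiring care is that Assumption~\ref{assumption:2} is phrased via $\max_{\theta\in C_{t,i}}\langle\theta,x_{t,i}\rangle$, so I must first check (the one-line Cauchy--Schwarz step above) that the additive-bonus estimate $\ucb_{t,i}(\cdot)$ dominates this set-maximum, and that Lemma~\ref{lemma:optgtm_before_elimination} is used on the reward event $\cG$ only, without needing $\theta^*\in C_{t,i}$ — which is precisely how that lemma is stated.
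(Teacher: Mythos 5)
Your proposal is correct and follows essentially the same route as the paper's proof: Assumption~\ref{assumption:2} plus the optimistic selection rule give $\langle\theta^*, x_{t,i_t^*}^*\rangle \le \ucb_{t,i_t}(x_{t,i_t})$ whenever $i_t^*\in A_t$, and then Lemma~\ref{lemma:optgtm_before_elimination}, Lemma~\ref{lemma:ellip_potential}, and Jensen's inequality with $\sum_i n_{\tau_i}(i)\le T$ finish the bound. Your explicit Cauchy--Schwarz check that the additive-bonus $\ucb_{t,i}(\cdot)$ dominates $\max_{\theta\in C_{t,i}}\langle\theta,\cdot\rangle$ is a small detail the paper leaves implicit, but it does not change the argument.
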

\begin{proof}
    By design of $\optgtm$, we we have$\langle \theta^*, x_{t,i_t^*}^* \rangle \leq \ucb_{t,i_t^*}(x_{t,i_t^*}) \leq \ucb_{t,i_t}(x_{t,i_t})$. Then, on the good event $\mathcal{G}$, Lemma~\ref{lemma:optgtm_before_elimination} yields
    \begin{align*}
         \sum_{t \leq \tau_i \colon i_t = i} \langle \theta^*, x_{t,i_t^*}^* - x_{t,i}^* \rangle 
         & \leq 
        \sum_{t \leq \tau_i \colon i_t = i} \left( \ucb_{t,i_t}(x_{t,i_t}) - \langle \theta^*, x_{t,i}^* \rangle \right) \\
        & \leq 2 \left( 2 \sqrt{n_{\tau_i}(i) \log(T)} + \sum_{t \leq \tau_i \colon i_t = i} \sqrt{\beta_{t,i}} \lVert x_{t,i} \rVert_{V_{t,i}^{-1}} \right). 
    \end{align*}
    Then, on the good event, we have  
    \begin{align*}
         \sum_{t=1}^T \ind{i_t^* \in A_t} \langle \theta^*, x_{t,i_t^*}^* - x_{t,i_t}^* \rangle
         & = \sum_{i = 1}^K \sum_{t=1}^{\tau_i}  \ind{i_t = i} \langle \theta^*, x_{t,i_t^*}^* - x_{t,i}^* \rangle  \\
        & \leq \sum_{i=1}^K \sum_{t \leq \tau_i \colon i_t = i} 2 \sqrt{\beta_{t,i}} \lVert x_{t,i} \rVert_{V_{t,i}^{-1}} + \sum_{i=1}^K 2 \sqrt{n_{\tau_i}(i) \log(T)} \\
        & \leq \sum_{i=1}^K \cO\left(d\log(T) \sqrt{n_{\tau_i}(i)} \right) + \sum_{i=1}^K 2 \sqrt{n_{\tau_i}(i) \log(T)} \\
        & \leq \cO\left( d  \log(T) \sqrt{KT}\right)
    \end{align*} 
    where we applied Jensen's inequality in the last inequality and used that $\sum_{i = 1}^K n_{\tau_i}(i) \leq T$.  
    
\end{proof}


\begin{lemma}[Bounding $J_2$]\label{lemma:bounding_J_2}
    \begin{align*} 
    \E_{\bsigma} \left[ \sum_{t=1}^T \ind{i_t^* \not \in A_t} \langle \theta^*, x_{t,i_t^*}^*- x_{t,i_t}^* \rangle \right] \leq \cO \left( dK^2 \sqrt{KT} \log(T) \right). 
    \end{align*}
\end{lemma}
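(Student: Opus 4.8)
The plan is to mirror the bound on $I_2$ from the proof of Theorem~\ref{theorem:ggtm} (Lemma~\ref{lemma:bounding_I_2}), replacing the elimination bound of Lemma~\ref{lemma:elimination} with its OptGTM analogue, Lemma~\ref{lemma:optgtm_before_elimination}, together with the elliptical potential estimate of Lemma~\ref{lemma:ellip_potential}. The heart of the argument is again a matching pair of bounds on the equilibrium utility $\E_{\bsigma}[n_T(i)]$ of each arm. For the lower bound I would use the best-response property of $\bsigma \in \NE(\optgtm)$ to compare arm $i$'s equilibrium utility against the utility of the truthful deviation $\sigma_i^*$. By Lemma~\ref{lemma:optgtm_arms_stay_active}, under $(\sigma_i^*, \sigma_{-i})$ arm $i$ stays active for all rounds and $\theta^* \in C_{t,i}$ for all $t$ with high probability. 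Hence in any round with $i_t^* = i$ we have $\ucb_{t,i}(x_{t,i}^*) \ge \langle \theta^*, x_{t,i}^* \rangle = \langle \theta^*, x_{t,i_t^*}^* \rangle$, so if instead $i_t = j \ne i$ then the selection rule forces $\ucb_{t,j}(x_{t,j}) \ge \ucb_{t,i}(x_{t,i}^*) \ge \langle \theta^*, x_{t,i_t^*}^* \rangle$, and therefore $\ucb_{t,j}(x_{t,j}) - \langle \theta^*, x_{t,j}^* \rangle \ge \langle \theta^*, x_{t,i_t^*}^* - x_{t,j}^* \rangle \ge \Delta > 0$ by the constant-gap assumption. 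Summing over the rounds $t \le \tau_j$ in which arm $j$ is selected and applying Lemma~\ref{lemma:optgtm_before_elimination} and then Lemma~\ref{lemma:ellip_potential} bounds the number of rounds $j$ ``poaches'' from $i$ by $\cO(d\log(T)\sqrt{n_{\tau_j}(j)}/\Delta)$; summing over $j \ne i$ and using Jensen's inequality together with $\sum_j n_{\tau_j}(j) \le T$ yields $\E_{\sigma_i^*, \sigma_{-i}}[n_T(i)] \ge n_T^*(i) - \cO(d\log(T)\sqrt{KT})$, and hence $\E_{\bsigma}[n_T(i)] \ge n_T^*(i) - \cO(d\log(T)\sqrt{KT})$. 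Exactly as in Lemma~\ref{lemma:lower_bound_n}, a short contradiction argument upgrades this to the per-round statement $\E_{\bsigma}[n_t(i)] \ge n_t^*(i) - \cO(d\log(T)\sqrt{KT})$ for all $t$, using Assumption~\ref{assumption:2} (a truthful, never-under-reporting arm cannot fall behind by more than the poaching budget while still active, so if it has it must already have been eliminated).

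Next I would convert this into an upper bound on $\E_{\bsigma}[n_T(i)]$ following Lemma~\ref{lemma:upper_bound_n} verbatim: writing $n_T(i) = n_{\tau_i}(i) = \tau_i - \sum_{j \ne i} n_{\tau_i}(j)$ and using $\sum_j n_{\tau_i}^*(j) = \tau_i$ together with the per-round lower bound applied to each $n_{\tau_i}(j)$ gives $\E_{\bsigma}[n_T(i)] \le \E_{\bsigma}[n_{\tau_i}^*(i)] + \cO\big((K-1)\, d\log(T)\sqrt{KT}\big)$. Combining the lower and upper bounds on $\E_{\bsigma}[n_T(i)]$ then yields $\E_{\bsigma}[n_T^*(i) - n_{\tau_i}^*(i)] \le \cO\big(K\, d\log(T)\sqrt{KT}\big)$, i.e., the expected number of rounds in which arm $i$ is round-optimal but already eliminated is at most $\cO(K\, d\log(T)\sqrt{KT})$.

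Finally I would invoke the identity of Lemma~\ref{lemma:decomposing_I_2}, which is mechanism-independent, to write $\E_{\bsigma}\big[\sum_{t=1}^T \ind{i_t^* \notin A_t}\big] = \sum_{i=1}^K \E_{\bsigma}[n_T^*(i) - n_{\tau_i}^*(i)] \le \cO\big(K^2 d\log(T)\sqrt{KT}\big)$, and since $\langle \theta^*, x_{t,i_t^*}^* - x_{t,i_t}^* \rangle \le 1$ by the regularity assumptions, conclude $J_2 \le \E_{\bsigma}\big[\sum_{t=1}^T \ind{i_t^* \notin A_t}\big] \le \cO\big(dK^2\sqrt{KT}\log(T)\big)$, as claimed.

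The step I expect to be the main obstacle is the same subtlety flagged in the GGTM analysis: under two different strategies $\sigma_i$ and $\sigma_i'$ the active set $A_t$ --- and hence the pool of competitors --- can differ, because even with per-arm estimators arm $i$ can indirectly influence the elimination of arm $j$ by poaching selections from it. This is what forces the conservative per-round form of the lower bound and the extra factor of $K$ when summing the per-arm bounds, and its bookkeeping is where the $K^2$ in the statement originates. A secondary technical point, which however causes no real trouble, is that Lemma~\ref{lemma:theta_confidence_set} only guarantees $\theta^* \in C_{t,i}$ under linear realizability of the \emph{reported} contexts; in the lower-bound step this is needed only for the truthful deviation $\sigma_i^*$ (where realizability is automatic), while for the poaching arm $j$ we only use Lemma~\ref{lemma:optgtm_before_elimination}, which holds on the reward good event $\cG$ alone and does not require $\theta^* \in C_{t,j}$.
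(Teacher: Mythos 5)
Your proposal follows the paper's proof essentially step for step: the same best-response comparison against the truthful deviation to lower-bound $\E_{\bsigma}[n_T(i)]$ (with the per-round extension by contradiction), the same $\tau_i$-decomposition upper bound carried over verbatim from Lemma~\ref{lemma:upper_bound_n}, and the same final combination via Lemma~\ref{lemma:decomposing_I_2} and the bounded gap. The substitution of Lemma~\ref{lemma:optgtm_before_elimination} plus Lemma~\ref{lemma:ellip_potential} for Lemma~\ref{lemma:elimination}, and your observation that the poaching arm $j$ only needs the reward good event $\cG$ while $\theta^* \in C_{t,i}$ is needed only for the truthful deviation, both match the paper exactly.
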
 
\begin{proof}
    The proof idea is the same as the one for Lemma~\ref{lemma:bounding_I_2}, which was used to show the regret upper bound of the Greedy Grim Trigger Mechanism (Theorem~\ref{theorem:ggtm}, Appendix~\ref{section:proof_ggtm}).  
    Recall that by assumption $\langle \theta^*, x_{t,i_t^*}^* - x_{t,i_t}^* \rangle \leq 1$. We reuse Lemma~\ref{lemma:decomposing_I_2} from the proof of Theorem~\ref{theorem:ggtm} to get 
    \begin{align*}
        \E_{\bsigma} \left[ \sum_{t=1}^T \ind{i_t^* \not \in A_t} \langle \theta^*, x_{t,i_t^*}^*- x_{t,i_t}^* \rangle \right] \leq \E_{\bsigma} \left[ \sum_{t=1}^T \ind{i_t^* \not \in A_t} \right] = \sum_{i=1}^K \E_\bsigma \left[ n_T^*(i) - n_{\tau_i}^*(i) \right],
    \end{align*}
    where $n_t^*(i) \defeq \sum_{s=1}^t \ind{i_s^* = i}$ is the number of rounds up to round $t$ that $i$ is the optimal. To bound the right hand side, we first prove a lower bound on $\E_\bsigma [n_T(i)]$ for any NE $\bsigma \in \NE(\optgtm)$.

    \begin{lemma}\label{lemma:optgtm_lower_bound_n}
        Let $\bsigma \in \NE(\optgtm)$. It holds that 
        \begin{align*}
            \E_\bsigma \left[ n_T(i) \right] \geq n_T^*(i) - \cO\left( d \log(T) \sqrt{KT}\right) . 
        \end{align*}
        In particular, it holds that $\E_{\bsigma}\left[ n_t(i) \right] \geq n_t^*(i) - \cO\left( d \log(T) \sqrt{KT}\right)$ for $t \in [T]$. 
    \end{lemma}
    Since $\mathcal{G}$ occurs with probability $1-\nicefrac{1}{T^2}$ and $n_T(i) \leq T$ by definition, on event $\mathcal{G}$, we have
        \begin{align*}
            n_T(i) \geq n_T^*(i) - \cO\left( d \log(T) \sqrt{KT}\right) \quad \quad  n_t(i) \geq n_t^*(i) - \cO\left( d \log(T) \sqrt{KT}\right).  
        \end{align*}
    \begin{proof} 
        Given that arm $i$ always reports truthfully, i.e., $x_{t,i} = x_{t,i}^*$ for all $t$, consider the event that $\theta^* \in C_{t,i}$ for all $t$. Recall that this event has probability at least $1-\nicefrac{1}{T^2}$ according to Lemma~\ref{lemma:theta_confidence_set}. 

        We use the best response property of the Nash equilibrium by comparing against the truthful strategy. To this end, consider the strategy profile $(\sigma_i^*, \sigma_{-i})$ and the event that $\theta^* \in C_{t,i}$ for all $t$ as well as $\mathcal{G}$. We then have that 
        \begin{align}\label{equation:n_lower_bound_1}
            n_T(i) 
            & \geq \sum_{t=1}^T \ind{i_t^*=i} - \sum_{t=1}^T \ind{i_t^* = i, i_t \neq i} \nonumber \\
            & = n_T^*(i) - \sum_{j \neq i} \sum_{t=1}^{\tau_i} \ind{i_t^* = i, i_t = j},
        \end{align}
        where the sum on the right hand side is the number of rounds where $i$ is optimal but $\optgtm$ pulls another arm (because it has reported a larger optimistic value). 
        
        Next, recall that $\theta^* \in C_{t,i}$ so that for $i_t^* = i$ it follows that 
        \begin{align*}
            \ucb_{t,i} (x_{t,i}) = \ucb_{t,i}(x_{t,i}^*) \geq \langle \theta^*, x_{t,i}^*\rangle = \langle \theta^*, x_{t,i_t^*}^* \rangle. 
        \end{align*}
        Now, $i_t =j$ implies $\ucb_{t,j}(x_{t,j}) \geq \ucb_{t,i} (x_{t,i}) \geq \langle \theta^*, x_{t,i_t^*}^* \rangle$, since $i \in A_t$ for all $t$ and $\optgtm$ selects the arm with maximal optimistic value. As a result, when $i_t^* = i$ and $i_t = j$, we obtain that 
        \begin{align*}
            \ucb_{t,j}(x_{t,j}) - \langle \theta^*, x_{t,j}^* \rangle \geq \langle \theta^*, x_{t,i_t^*}^* - x_{t,j}^* \rangle.  
        \end{align*}
        Importantly, we have shown in Lemma~\ref{lemma:optgtm_before_elimination} that the total difference on the left hand side is bounded before elimination, i.e., before round $\tau_j$. As a consequence, we get
        \begin{align*}
            \sum_{t=1}^{\tau_i} \ind{i_t^* = i, i_t = j} \langle \theta^*, x_{t,i_t^*}^* - x_{t,j}^* \rangle 
            & \leq \sum_{t=1}^{\tau_i} \ind{i_t^* = i, i_t = j} \left( \ucb_{t,j}(x_{t,j}) - \langle \theta^*, x_{t,j}^* \rangle \right) \\
            & \leq \sum_{t=1}^{\tau_i} \ind{i_t = j} \left( \ucb_{t,j}(x_{t,j}) - \langle \theta^*, x_{t,j}^* \rangle \right) \\
            & \leq 2 \sum_{t=1}^{\tau_j} \ind{i_t=j} \sqrt{\beta_{t,j}} \lVert x_{t,j} \rVert_{V_{t,j}^{-1}} + 4 \sqrt{n_{\tau_j}(i) \log(T)} \\
            & \leq \cO\left(d \log(T) \sqrt{n_{\tau_j}(j)}\right) + 4 \sqrt{n_{\tau_j}(i) \log(T)} \\[3pt]
            & \leq \cO\left( d \log(T) \sqrt{n_{\tau_j}(j)}\right),
        \end{align*}
        where we first used Lemma~\ref{lemma:optgtm_before_elimination} and then Lemma~\ref{lemma:ellip_potential}. 
        Recalling that $\langle \theta^*, x_{t,i_t^*}^* - x_{t,j}^* \rangle >0$ is constant for $j \neq i_t^*$ by assumption of a constant optimality gap, it then follows from Jensen's inequality that 
        \begin{align*}
            \sum_{j \neq i} \sum_{t=1}^{\tau_i} \ind{i_t^* = i, i_t = j} \leq \sum_{j \neq i} \cO\left( d \log(T) \sqrt{n_{\tau_j}(j)} \right) \leq \cO \left(d  \log(T) \sqrt{KT}\right),
        \end{align*}
        where we used that $\sum_{j \neq i} n_{\tau_j}(j) \leq T$. Hence, equation~\eqref{equation:n_lower_bound_1} yields 
        \begin{align*}
            n_T(i) \geq n_T^*(i) - \cO \left( d \log(T) \sqrt{KT} \right). 
        \end{align*}
        Since $\sigma_i$ must be a best response to $\sigma_{-i}$, we obtain 
        \begin{align*}
            \E_{\bsigma}[n_T(i)] \geq \E_{\sigma_i^*, \sigma_{-i}}[n_T(i)] \geq n_T^*(i) - \cO \left( d \log(T) \sqrt{KT} \right) . 
        \end{align*}

        To obtain the result for $t \in [T]$l, suppose that on the good event $\mathcal{G}$ the contrary is true so that $n_t(i) < n_t^*(i) - \omega(d \log(T) \sqrt{KT})$. 
        However, similarly to before,  Lemma~\ref{lemma:optgtm_before_elimination} tells us that the number of rounds that are ``poached'' from arm $i$, i.e., $i_t^* = i$ and $i_t \neq i$, is upper bounded from above by $\cO(d \log(T) \sqrt{Kt})$.  
        Hence, since $\ucb_{t,i}(x_{t,i}) \geq \langle \theta^*, x_{t,i_t}^* \rangle$ and $n_t(i) \leq n_t^*(i) - \omega( d \log(T) \sqrt{KT})$, it must hold that $\tau_i < t$. Then, on the good event $\mathcal{G}$, it follows that $n_{\tau_i}(i) \leq n_t(i) < n_t^*(i) - \omega(d\log(T) \sqrt{KT})$. Since event $\mathcal{G}$ has probability at least $1-\nicefrac{1}{T^2}$, this implies 
        \begin{align*}
            \E_{\bsigma} [n_T(i)] = \E_{\bsigma}[n_{\tau_i}(i)] \leq n_t^*(i)  - \omega\big(d \log(T) \sqrt{KT} \big).   
        \end{align*}
        This contradicts the lower bound of $\E_{\bsigma}[n_T(i)] \geq n_T^*(i) - \cO( d\log(T)\sqrt{KT})$. 

        
    \end{proof}

    \begin{lemma}\label{lemma:optgtm_n_upper_bound}
        Let $\bsigma \in \NE(\optgtm)$. Then, 
        \begin{align*}
            \E_{\bsigma}[n_T(i)] \leq \E_{\bsigma}[n_{\tau_i}^*(i)] - \cO\left(d \log(T) K \sqrt{KT} \right), 
        \end{align*}
        where the expectation on the right hand side is taken w.r.t.\ $\tau_i$. 
    \end{lemma}
    \begin{proof}
        We have $\sum_{i =1}^K n_\tau^*(i) =\sum_{i=1}^K \sum_{t=1}^\tau \ind{i_t^* =i} = \tau$ for any $\tau \in [T]$. 
            Using Lemma~\ref{lemma:optgtm_lower_bound_n}, we obtain
            \begin{align*}
                \E_{\bsigma}[n_T(i)] & = \E_{\bsigma}[n_{\tau_i}(i)] \\
                & = \E_{\bsigma} \left[ \sum_{t=1}^{\tau_i} \ind{i_t=i} \right] \\
                & = \E_{\bsigma} \left[ \sum_{t=1}^{\tau_i} (1-\ind{i_t \neq i}) \right] \\
                & = \E_{\bsigma} \left[ \tau_i \right] - \sum_{j\neq i} \E_{\bsigma}\left[n_{\tau_i}(j) \right] \\
                & \leq \E_{\bsigma}[\tau_i] - \sum_{j \neq i} \E_{\bsigma}[n_{\tau_i}^*(j)] + \cO \left(d \log(T) K \sqrt{KT}\right) \\ 
                & = \E_{\bsigma} [n_{\tau_i}^*(i)]+ \cO \left( d \log(T) K \sqrt{KT} \right). 
            \end{align*} 
    \end{proof}

    Combing the lower and upper bounds on each arm's utility of Lemma~\ref{lemma:optgtm_lower_bound_n} and Lemma~\ref{lemma:optgtm_n_upper_bound}, we get 
    \begin{align}
        n_T^*(i) - \cO \left( d \log(T) \sqrt{KT} \right) \leq \E_{\bsigma}[n_T(i)] \leq \E_{\bsigma}[n_{\tau_i}^*(i)] - \cO \left( d \log(T) K \sqrt{KT} \right).
    \end{align}
    It then follows that 
    \begin{align*}
        \E_\bsigma[n_T^*(i) - n_{\tau_i}^*(i)] \leq \cO \left(d \log(T) K \sqrt{KT} \right)
    \end{align*}
    so that 
    \begin{align*}
        \sum_{k=1}^K \E_\bsigma \left[ n_T^*(i) - n_{\tau_i}^*(i) \right] \leq \cO\left(d \log(T) K^2 \sqrt{KT} \right).  
    \end{align*}
    This concludes the proof of Lemma~\ref{lemma:bounding_J_2}. 
    \end{proof}

Finally, recalling the regret decomposition from the beginning of the proof and using
Lemma~\ref{lemma:bounding_J_1} and Lemma~\ref{lemma:bounding_J_2}, we obtain for any $\bsigma \in \NE(\bsigma)$ that 
  \begin{align*}
            R_T(\ggtm, \bsigma) \leq \cO \left( \underbrace{d\log(T) \sqrt{KT}}_{\text{ Lemma~\ref{lemma:bounding_J_1}}}  + \underbrace{d \log(T) K^2 \sqrt{KT}}_{\text{Lemma~\ref{lemma:bounding_J_2}}} \right) \leq \tilde \cO\left( d K^2 \sqrt{KT} \right) .  
    \end{align*}

\end{proof}

\subsection{Linear Realizability of Reported Contexts}

In the following, we comment on an interesting observation in the strategic linear contextual bandit that may also provide some insight into the effectiveness of $\optgtm$. Suppose that each arm reports its contexts in a linearly realizable fashion (without us explicitly incentivizing them to do so). Formally, we can express this as the following assumption. 

\begin{assumption}[Linear Realizability of Reported Contexts]\label{assumption:3}
    Every arm reports so that its reports follow some linear reward model. That is, for every arm $i \in [K]$, there exists a vector $\theta^*_i \in \Reals^d$ such that for all $t \in [T]$
    \begin{align}
        \langle \theta^*, x_{t,i}^* \rangle = \langle \theta^*_i, x_{t,i} \rangle . 
    \end{align}
\end{assumption}

Perhaps surprisingly, the regret analysis of $\optgtm$ becomes straightforward when the arms' strategies satisfy Assumption~\ref{assumption:3}. Moreover, we can prove that $\optgtm$ suffers $\smash{\tilde \cO\big( d\sqrt{KT}\big)}$ strategic regret in every Nash equilibrium of the arms. That is, the regret guarantee is better than that of Theorem~\ref{theorem:opt_grim_trigger}. 

\paragraph{A quick regret analysis.} Let $\bsigma$ be any NE under $\optgtm$. When we observe a reward $r_{t,i}$ after pulling arm $i$ in round $t$, we can interpret the reward as $r_{t,i} \defeq \langle \theta^*, x_{t,i}^* \rangle + \eta_t = \langle \theta^*_i, x_{t,i} \rangle + \eta_t$. Hence, isolating arm $i$, the learner is essentially playing a linear contextual bandit with true unknown parameter $\theta^*_i$, contexts $x_{t,i}$, and rewards $r_{t,i} = \langle \theta^*_i, x_{t,i} \rangle + \eta_t$. As a result, the independent estimators $\smash{\hat \theta_{t,i}}$ for every arm $i$, are in fact estimating $\theta_i^*$ and, according to Lemma~\ref{lemma:theta_confidence_set}, with high probability $\theta_i^* \in C_{t,i}$. It is then also easy to see that $\optgtm$ will never eliminate any of the arms with high probability. Now, since $\theta_i^* \in C_{t,i}$,
\begin{align*}
    \langle \theta^*, x_{t,i}^* \rangle = \langle \theta^*_i, x_{t,i} \rangle \leq \ucb_{t,i} (x_{t,i}). 
\end{align*}
As a result, using Cauchy Schwarz inequality, we obtain 
\begin{align*}
    \ucb_{t,i}(x_{t,i}) - \langle \theta^*, x_{t,i}^* \rangle & \leq \langle \hat \theta_{t,i} - \theta^*_i, x_{t,i} \rangle + \sqrt{\beta_{t,i}} \lVert x_{t,i} \rVert_{V_{t,i}^{-1}} \\
    & \leq \lVert \hat \theta_{t,i} - \theta_i^* \rVert_{V_{t,i}} \lVert x_{t,i} \rVert_{V_{t,i}^{-1}} + \sqrt{\beta_{t,i}} \lVert x_{t,i} \rVert_{V_{t,i}^{-1}} \leq 2 \sqrt{\beta_{t,i}} \lVert x_{t,i} \rVert_{V_{t,i}^{-1}} .
\end{align*}
In every round $t$, $\optgtm$ selects the arm with maximal optimistic reported value $\ucb_{t,i}(x_{t,i})$ so that $\ucb_{t,i_t^*}(x_{t,i_t^*}) -\ucb_{t,i_t}(x_{t,i_t}) \leq 0$. We can then bound the instantaneous regret in round $t$ as 
\begin{align*}
    \langle \theta^*, x_{t,i_t^*}^* - x_{t,i_t}^* \rangle & \leq \ucb_{t,i_t^*}(x_{t,i_t^*}) - \langle \theta^*, x_{t,i_t}^* \rangle \\
    & \leq \ucb_{t,i_t^*}(x_{t,i_t^*}) -\ucb_{t,i_t}(x_{t,i_t}) + 2\sqrt{\beta_{t,i_t}} \lVert x_{t,i_t} \rVert_{V_{t,i_t}^{-1}}\\
    & \leq 2\sqrt{\beta_{t,i_t}} \lVert x_{t,i_t} \rVert_{V_{t,i_t}^{-1}}. 
\end{align*}
Using the elliptical potential lemma and Jensen's inequality (Lemma~\ref{lemma:ellip_potential}, \cite{abbasi2011improved, lattimore2020bandit}), the total regret of $\optgtm$ is given by 
\begin{align*}
    R_T(\optgtm, \bsigma) \leq \sum_{i=1}^K \sum_{t\leq T \colon i_t = i}  2\sqrt{\beta_{t,i}} \lVert x_{t,i} \rVert_{V_{t,i}^{-1}} \leq \cO\left( d \log(T) \sqrt{KT} \right). 
\end{align*}

We have thus shown the following guarantee. 
\begin{cor}
    Suppose that Assumption~\ref{assumption:3} holds. Then, 
    \begin{align*}
        R_T(\optgtm, \bsigma) = \tilde \cO \left( d\sqrt{KT} \right)
    \end{align*}
    for every Nash equilibrium $\bsigma \in \NE(\optgtm)$. 
\end{cor}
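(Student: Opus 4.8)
The plan is to observe that Assumption~\ref{assumption:3} reduces the problem, arm by arm, to a collection of $K$ independent standard stochastic linear contextual bandits. Indeed, whenever $\optgtm$ pulls arm $i$ in round $t$, the observed reward satisfies $r_{t,i} = \langle \theta^*, x_{t,i}^* \rangle + \eta_t = \langle \theta_i^*, x_{t,i} \rangle + \eta_t$, so the sequence of reported contexts $x_{t,i}$ together with the rewards $r_{t,i}$ is exactly a linear bandit instance with unknown parameter $\theta_i^*$. By Lemma~\ref{lemma:theta_confidence_set}, this means $\theta_i^* \in C_{t,i}$ for all $t \in [T]$ with probability at least $1-\nicefrac{1}{T^2}$, and a union bound over the $K$ arms together with the good event $\mathcal{G}$ for the reward concentration gives a single event of probability at least $1-\cO(\nicefrac{K}{T^2})$ on which everything below holds.

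First I would verify that, on this event, $\optgtm$ never eliminates any arm. This is the direct analogue of Lemma~\ref{lemma:optgtm_arms_stay_active}: since $\theta_i^* \in C_{t,i}$, Cauchy--Schwarz gives $\langle \hat\theta_{\ell,i}, x_{\ell,i}\rangle - \langle \theta_i^*, x_{\ell,i}\rangle \le \sqrt{\beta_{\ell,i}}\lVert x_{\ell,i}\rVert_{V_{\ell,i}^{-1}}$, hence $\lcb_{\ell,i}(x_{\ell,i}) \le \langle \theta_i^*, x_{\ell,i}\rangle = \langle \theta^*, x_{\ell,i}^*\rangle$. Summing this over $\ell \le t$ with $i_\ell = i$ and using that on $\mathcal{G}$ the partial sums $\sum_{\ell \le t : i_\ell = i}(\langle \theta^*, x_{\ell,i}^*\rangle - r_{\ell,i})$ stay within $2\sqrt{n_t(i)\log(T)}$ shows the grim trigger condition~\eqref{eq:optgtm_trigger_condition} is never met, so $\tau_i = T$ for every arm.

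Next, with all arms always active and $\theta_i^* \in C_{t,i}$, the analysis mirrors that of LinUCB and of Theorem~\ref{cor:truthful_optgtm}. For the round-optimal arm $i_t^*$ we have $\langle \theta^*, x_{t,i_t^*}^*\rangle = \langle \theta_{i_t^*}^*, x_{t,i_t^*}\rangle \le \ucb_{t,i_t^*}(x_{t,i_t^*})$, and since $\optgtm$ selects $i_t = \argmax_{i} \ucb_{t,i}(x_{t,i})$ this is at most $\ucb_{t,i_t}(x_{t,i_t})$. Combining this with the Cauchy--Schwarz bound $\ucb_{t,i_t}(x_{t,i_t}) - \langle \theta^*, x_{t,i_t}^*\rangle = \ucb_{t,i_t}(x_{t,i_t}) - \langle \theta_{i_t}^*, x_{t,i_t}\rangle \le 2\sqrt{\beta_{t,i_t}}\lVert x_{t,i_t}\rVert_{V_{t,i_t}^{-1}}$ bounds the per-round regret $\langle \theta^*, x_{t,i_t^*}^* - x_{t,i_t}^*\rangle$. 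Splitting the sum over rounds by the pulled arm, invoking the elliptical potential bound of Lemma~\ref{lemma:ellip_potential} for each arm, and applying Jensen's inequality with $\sum_i n_T(i) \le T$ yields $R_T(\optgtm, \bsigma) = \cO(d\log(T)\sqrt{KT})$; the low-probability failure event contributes only a lower-order term since $n_T(i) \le T$.

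The argument is essentially routine once the reduction is in place, so I do not expect a serious obstacle; the only points requiring care are that the regret guarantee never actually uses the equilibrium property of $\bsigma$ (it holds for any strategy profile satisfying Assumption~\ref{assumption:3}), and that the arm-specific confidence radii $\beta_{t,i}$ must be instantiated with the bounds $\lVert \theta_i^*\rVert_2 \le S$ and $\lVert x_{t,i}\rVert_2 \le 1$ exactly as in Lemma~\ref{lemma:theta_confidence_set}, so that the per-arm confidence sets are valid despite the contexts being manipulated.
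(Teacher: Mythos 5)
Your proposal is correct and follows essentially the same route as the paper's own argument: reduce each arm to a standard linear bandit with parameter $\theta_i^*$, conclude $\theta_i^* \in C_{t,i}$ with high probability so that no arm is ever eliminated, and then run the LinUCB-style optimism argument with the elliptical potential lemma and Jensen's inequality to get $\cO(d\log(T)\sqrt{KT})$. Your additional observations --- that the equilibrium property of $\bsigma$ is never used and that the confidence radii must be instantiated with $\lVert \theta_i^*\rVert_2 \le S$ --- are accurate and make explicit two points the paper leaves implicit.
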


\end{document}